\documentclass[twoside,11pt]{article}

\usepackage[preprint]{jmlr2e}
\hypersetup{
    colorlinks=true,
    citecolor=blue,
    linkcolor=blue,
    urlcolor=blue
}

\usepackage{lastpage}
\jmlrheading{??}{2026}{1-\pageref{LastPage}}{06/25; Revised ??/??}{??/??}{[PAPER ID]}{Thomas Boudou, Batiste Le Bars, Nirupam Gupta, Aurélien Bellet}

\ShortHeadings{Dangerous Liaisons of Convex Learning and Non-Affine Aggregation}{Boudou, Le Bars, Gupta, and Bellet}
\firstpageno{1}

\usepackage{mathtools}
\usepackage{mathrsfs}  
\usepackage{booktabs}
\usepackage{hyperref}
\usepackage{comment}
\usepackage{enumitem}
\usepackage{bibentry}
\usepackage{mdframed}
\usepackage{tikz}
\usetikzlibrary{calc}
\usepackage[capitalize,noabbrev]{cleveref}
\newtheorem{assumption}{Assumption}

\makeatletter
\newcommand{\newreptheorem}[2]{%
  \newenvironment{rep#1}[1]{%
    \begingroup
    \def\rep@ref{##1}%
    \edef\oldtheoremnumber{\csname the#1\endcsname}%
    \expandafter\def\csname the#1\endcsname{\ref{\rep@ref}}%
    \begin{#1}%
  }{%
    \end{#1}%
    \endgroup
  }%
}
\makeatother
\newreptheorem{theorem}{Theorem}
\newreptheorem{lemma}{Lemma}
\usepackage{tcolorbox}
\newtcolorbox{block}[1]{
  colback=white,
  colframe=black,
  fonttitle=\bfseries,
  title=#1
}

\newcommand\mc{\mathcal}
\newcommand\mb{\mathbb}

\DeclarePairedDelimiter\floor{\lfloor}{\rfloor}

\newcommand{\R}{\mathbb{R}}

\newcommand{\E}{\mathbb{E}}

\newcommand{\F}{\mathcal{F}}
\newcommand{\A}{\mathcal{A}}
\DeclareMathOperator*{\esssup}{ess\,sup}

\usepackage{amsmath}
\makeatletter
\let\save@mathaccent\mathaccent
\newcommand*\if@single[3]{%
  \setbox0\hbox{${\mathaccent"0362{#1}}^H$}%
  \setbox2\hbox{${\mathaccent"0362{\kern0pt#1}}^H$}%
  \ifdim\ht0=\ht2 #3\else #2\fi
  }
\newcommand*\rel@kern[1]{\kern#1\dimexpr\macc@kerna}
\newcommand*\widebar[1]{\@ifnextchar^{{\wide@bar{#1}{0}}}{\wide@bar{#1}{1}}}
\newcommand*\wide@bar[2]{\if@single{#1}{\wide@bar@{#1}{#2}{1}}{\wide@bar@{#1}{#2}{2}}}
\newcommand*\wide@bar@[3]{%
  \begingroup
  \def\mathaccent##1##2{%
    \let\mathaccent\save@mathaccent
    \if#32 \let\macc@nucleus\first@char \fi
    \setbox\z@\hbox{$\macc@style{\macc@nucleus}_{}$}%
    \setbox\tw@\hbox{$\macc@style{\macc@nucleus}{}_{}$}%
    \dimen@\wd\tw@
    \advance\dimen@-\wd\z@
    \divide\dimen@ 3
    \@tempdima\wd\tw@
    \advance\@tempdima-\scriptspace
    \divide\@tempdima 10
    \advance\dimen@-\@tempdima
    \ifdim\dimen@>\z@ \dimen@0pt\fi
    \rel@kern{0.6}\kern-\dimen@
    \if#31
      \overline{\rel@kern{-0.6}\kern\dimen@\macc@nucleus\rel@kern{0.4}\kern\dimen@}%
      \advance\dimen@0.4\dimexpr\macc@kerna
      \let\final@kern#2%
      \ifdim\dimen@<\z@ \let\final@kern1\fi
      \if\final@kern1 \kern-\dimen@\fi
    \else
      \overline{\rel@kern{-0.6}\kern\dimen@#1}%
    \fi
  }%
  \macc@depth\@ne
  \let\math@bgroup\@empty \let\math@egroup\macc@set@skewchar
  \mathsurround\z@ \frozen@everymath{\mathgroup\macc@group\relax}%
  \macc@set@skewchar\relax
  \let\mathaccentV\macc@nested@a
  \if#31
    \macc@nested@a\relax111{#1}%
  \else
    \def\gobble@till@marker##1\endmarker{}%
    \futurelet\first@char\gobble@till@marker#1\endmarker
    \ifcat\noexpand\first@char A\else
      \def\first@char{}%
    \fi
    \macc@nested@a\relax111{\first@char}%
  \fi
  \endgroup
}
\makeatother

\usepackage{todonotes}

\newcommand{\tb}[1]{\textcolor{orange}{#1}}

\providecommand{\iprod}[2]{\ensuremath{\left\langle #1,\,#2  \right\rangle}}

\usepackage[normalem]{ulem}
\usepackage{xparse}
\newif\ifdraft
\drafttrue
\NewDocumentCommand{\Revision}{m m m}{%
  \ifdraft
    {\color{#1}\sout{#2}}%
    {\color{#1}#3}%
  \else
    #3%
  \fi
}
\newcommand{\DefineReviewer}[2]{%
  \expandafter\newcommand\csname #1\endcsname[2]{%
    \Revision{#2}{##1}{##2}%
  }%
}
\DefineReviewer{tbrev}{orange}
\DefineReviewer{batrev}{magenta}
\DefineReviewer{ngprev}{blue!60!white}
\DefineReviewer{aurev}{green!80!black}

\usepackage{times}

\begin{document}

\title{Dangerous Liaisons of Convex Learning and Non-Affine Aggregation}

\author{\!
  \name Thomas Boudou 
  \email thomas.boudou@inria.fr \\
  \addr PreMeDICaL team, Inria, Idesp, Inserm, Université de Montpellier, Montpellier, France\\
  \AND
  \name Batiste Le Bars 
  \email batiste.le-bars@inria.fr \\
  \addr Univ.\ Lille, Inria, CNRS, Centrale Lille, UMR 9189, CRIStAL, F-59000 Lille, France\\
  \AND
  \name Nirupam Gupta 
  \email nigu@di.ku.dk \\
  \addr Department of Computer Science, University of Copenhagen, Copenhagen, Danemark\\
  \AND
  \name Aurélien Bellet 
  \email aurelien.bellet@inria.fr \\
  \addr PreMeDICaL team, Inria, Idesp, Inserm, Université de Montpellier, Montpellier, France
}

\editor{??}

\maketitle

\begin{abstract}%
  Last-iterate convergence and generalization guarantees in first-order convex learning hinge on the monotonicity of the update operator.
  While linear averaging preserves the monotonicity of gradient updates, this property is often violated when gradients are aggregated non-affinely, as in modern pipelines enforcing constraints like adaptivity, privacy, robustness or fairness. 
  Whether it is possible to design non-affine aggregation rules that maintain monotonicity has remained an open question.
  We answer this question negatively: we prove that the monotonicity of aggregated gradients is preserved if and only if the aggregation rule is positively affine. 
  Consequently, non-affine aggregation prevents steady convergence and substantially degrade algorithmic stability. 
  We quantify these drawbacks and propose a path forward by identifying sufficient conditions under which monotonicity can be restored.
  Our results provide a unified theoretical framework explaining the disparate failure modes observed in modern learning systems.
\end{abstract}
\begin{keywords}%
  convex learning; first-order optimization; aggregation rules; last-iterate convergence; algorithmic stability.%
\end{keywords}

\section{Introduction}\label{I-intro}
Most modern learning algorithms rely on iterative first-order optimization to minimize the expected risk $R(\theta) \vcentcolon= \E_{z \sim p}[\ell(\theta; z)]$ of a loss $\ell$  over an unknown distribution $p$, typically accessible only through a finite sample~\citep{NIPS2007_bottou_bousquet}.
At their core, these methods compute update directions by aggregating gradient information. 
Gradients may be obtained directly from $p$ (stochastic optimization), estimated from samples (empirical risk minimization), or perturbed from the nominal distribution (robust optimization, transfer learning). 
We abstract away these distinctions, as the specific source of gradients is secondary to our main result.
Formally, given $d, n, T \in \mb{N} \setminus \{0\}$, we consider algorithms that update model parameters $\theta_t \in \mb{R}^d$ at step $t \in [T] \vcentcolon= \{1, \ldots, T\}$ via an aggregation rule $F: (\mb{R}^d)^{n} \to \mb{R}^d$ applied to input vectors $\{g_i(\theta_t)\}_{i\in[n]}$,
\begin{equation}\label{eq:algo-update}
    % G^F_{\gamma}(\theta_t)
    \theta_{t+1} = \theta_t - \gamma F(g_1(\theta_t), \ldots, g_n(\theta_t)), \text{ where } \theta_0 \in \R^d, \gamma > 0.
\end{equation}
We denote $\F: \R^d \to \R^d, \theta \mapsto F(g_1(\theta), \ldots, g_n(\theta))$ the induced operator. % from $F$ and $\{g_i\}_{i\in[n]}$. 
Typically, the gradients take the form $g_i(\theta) = \nabla_\theta \ell(\theta;z_i)$ for some data $z_i$, and $n$ abstracts the relevant scale depending on context (e.g., number of samples, batch size, client count).

While a standard choice of $F$ is simple averaging, as in mini-batch stochastic gradient descent (SGD), this framework also captures many modern learning paradigms in which $F$ is a \emph{non-affine aggregation rule} designed to enforce application-specific constraints.
Examples include:
(1) Adaptive optimization:~\citet{kingma2017adammethodstochasticoptimization} reduce tuning overhead via coordinate-wise adaptive learning rates, $F(g_1,\ldots,g_n) = \operatorname{diag}(v(g_1, \ldots, g_n))^{-\frac{1}{2}} \sum_{i\in[n]} \frac{g_i}{n}$; % (effectively applying a diagonal scaling to the gradient) 
(2) Robust (distributed) learning:~\citet{pmlr-v206-allouah23a,Diakonikolas_Kane_2023} mitigate outliers via non-linear filtering (cf.\ Definition~\ref{cwtm-definition});
(3) Privacy-preserving learning:~\citet{Abadi_2016} control aggregate sensitivity via individual gradient clipping, $F(g_1, \ldots, g_n) = \frac{1}{n}\sum_{i\in[n]} g_i \min(1, \frac{C}{\|g_i\|_2})$;
(4) Fairness and multi-objective learning:~\citet{pmlr-v80-hashimoto18a} adaptively reweight gradients, $F(g_1,\ldots,g_n) = \sum_{i=1}^n w_i g_i$ where $w_i$ depends on $\{g_i\}_{i\in[n]}$ or the model state; %the $g_i's$ or the current model state.
%(5) Communication-efficient learning:~\citet{alistarh2017qsgd} reduces inputs via a quantization operator $\mc{Q}$, where $F(g^{(1)}, \ldots, g^{(n)}) = \sum_{i=1}^n \mc{Q}(g^{(i)})$.
%(5) Communication-efficient learning:~\citet{NEURIPS2018_b440509a} sparsify gradients via top-$k$ thresholding $\mathcal{S}_k$, $F(g_1, \ldots, g_n) = \sum_{i=1}^n \mathcal{S}_k(g_i)$.
(5) Communication-efficient learning:~\cite{bernstein2018signsgd} compress gradients to their signs $F(g_1, \ldots, g_n) = \operatorname{sign}(\sum_{i=1}^n \operatorname{sign}(g_i))$.
% \autodo{in this paper I think they don't sparsify gradients; so I'd pick another ref, such as SignSGD \url{https://arxiv.org/pdf/1802.04434}. \\ \tb{TB: But the paper focus on non-convex optimization. Maybe https://arxiv.org/abs/1610.02132?}}
%(6) Generative Modeling:~\citet{} ?.

In convex learning, where inputs $g_i$ are gradients of a convex loss function (definitions deferred to Appendix~\ref{loss-regularities}), strong theoretical guarantees of the last-iterate expected risk of the update rule~\eqref{eq:algo-update} hinge on the \emph{monotonicity} of $\F$. 
Specifically, $\F$ is monotone if
\begin{equation}\label{eq:operator-convexity}
    \forall \theta, \omega \in \R^d,\quad \langle \theta - \omega, \F(\theta) - \F(\omega) \rangle \geq 0.
\end{equation}
This property extends the notion of convexity to general operators, generalizing the fact that convex scalar functions have monotone gradients~\citep[Proposition 17.10]{Bauschke2011ConvexAA}.
When $F$ is the arithmetic mean (as in mini-batch SGD) and the inputs $g_i$ are monotone, the resulting operator $\F$ preserves this monotonicity. 
In contrast, while non-affine aggregation rules are motivated by important learning objectives, as illustrated above, it remains unclear whether they can achieve these goals without compromising the monotonicity of updates.

\paragraph{Main result.}
Our main result is of theoretical nature and can be summarized by the following (informal) impossibility theorem, explicitly stated and proved in Section~\ref{sec:impossinility}.  

\begin{mdframed}[backgroundcolor=gray!5]
\vspace{0.5\topsep}    
\textbf{Theorem~\ref{thm:universal_convexity} (informal)} \emph{Only positively affine aggregation universally preserves monotonicity.}
\vspace{0.5\topsep}
\end{mdframed}
% \batodo{
%     Maybe we should clearly define what we mean by 'universal monotonicity preserver'. 
%     This may also simplify some theorem statements later. 
%     Example: "Here, and in the rest of the paper, a \emph{universal} monotonicity preserver refers to an Aggregation rule that preserves the monotonicity of any monotonic input gradients. 
%     We should also explain why finding such aggregation rule is important: having simple proofs with no additional errors or good results with no need for extra assumption on the input monotonic gradient vectors. 
%     I think this will better highlight the importance of your negative result. 
%     \au{I tried to incorporate this a bit}
% }
\noindent Here, \emph{universal} refers to an aggregation rule that preserves the monotonicity of \emph{any} set of monotone input gradients. 
Theorem~\ref{thm:universal_convexity} shows that for every non-affine aggregation, there exist monotone input gradients $g_i$ such that the induced operator $\F$ violates monotonicity. 
In other words, only affine aggregation guarantees monotonicity without imposing additional assumptions on the inputs.
This result answers negatively the open question of whether a non-affine aggregation rule can preserve monotonicity for arbitrary input gradient vectors. 
Consequently, theoretical guarantees for updates using non-affine aggregation generally cannot be derived directly from those for monotone updates.

We highlight three consequences of Theorem~\ref{thm:universal_convexity} for algorithms following update~\eqref{eq:algo-update} under non-affine aggregation: 
(I) it explains why these algorithms do not enjoy last-iterate convergence guarantees, unless one make additional assumptions;
% explains why last-iterate convergence break down; 
% \tbtodo{I am unsure of the claim. In fact, for instance Adagrad-Norm is proven convergent (but with complex hyperparameter). But maybe it does not this is linked with non-affine at stationarity which may not be the case when we assume the gradients come from the same loss and datadistribution?}
% \ngtodo{
%    I'm still afraid that the above does not prove non-convergence, since the trajectory may return to $\theta_{eq}$ from outside the cone $\mc{K}$. 
%    But it does show that $\theta_{eq}$ is no longer a (Lyapunov-)stable equilibrium. 
%    In any case, it shows that the distance need not decrease monotonically and therefore the existing optimal convergence proofs for gradient-descent under convexity breaks down.
% }
% \autodo{"it explains why last-iterate convergence proofs break down" feels a bit weak (both due to the use of "explains" and "proofs"); it seems we claim more in Sec 3.1 ("the loss of monotonicity precludes guarantees of last-iterate convergence"). To remain a bit conservative, I would simply remove "proofs": "it explains why last-iterate convergence break down" or (perhaps better) reformulate to "it explains why these algorithms do not enjoy last-iterate convergence guarantees"}
% \autodo{similarly, "it clarifies the need to account for additional algorithmic instability" feels weak due to "clarifies". Could we replace by something like "It implies an additional algorithmic instability" or "It implies a degradation in algorithmic stability (guarantees)"}
(II) it implies an additional algorithmic instability; and 
(III) it offers insights into designing non-affine rules that preserve monotonicity for \emph{restricted} convex inputs.
While some of these points were previously noted in specific contexts, our result provides a unified theory for these scattered observations (cf. the discussion of related work in Section~\ref{related-work}).

\paragraph{Consequence I:\ Last-iterate convergence failures (cf.~Section~\ref{sec:convergenceimplications}).}
Last-iterate convergence, i.e., the convergence of the update~\eqref{eq:algo-update}, is a strong theoretical property~\citep{bach-ltfp,doi:10.1137/24M1717762-last-iterate-cv,preobrazhenskaia2026last-iterate-adagrad,kornowski2026gradient-last-iterate-speed-cv} that, unlike convergence of averaged iterates, better aligns with practical implementations and preserves structural properties such as parameter sparsity. 
While averaged convergence can be obtained even for non-monotonic updates, last-iterate convergence of first-order methods for convex losses rely on the update's ability to monotonically decrease the distance to the optimum---or equilibrium, i.e., a stationary point of $\F$~\citep{Bauschke2011ConvexAA,primer-monotone}.
However, in the absence of monotonicity, the operator may locally push the iterate away from the equilibrium $\theta_{eq}$. 
Specifically, we demonstrate in Theorem~\ref{cor:cone-instability} that there exists a cone $\mc{K}$ emanating from $\theta_{eq}$ such that $\forall \theta_0 \in \mc{K}$, $\langle \F(\theta_0), \theta_0 - \theta_{eq} \rangle < 0$.
Consequently, for any step-size $\gamma > 0$, the update step $\theta_1 = \theta_0 - \gamma \F(\theta_0)$ strictly increases the distance to the equilibrium
\begin{equation*}
    \| \theta_{1} - \theta_{eq} \|_2^2 
    = \| \theta_0 - \theta_{eq} \|_2^2 + \gamma^2 \| \F(\theta_0) \|_2^2 - 2\gamma \langle \F(\theta_0), \theta_0 - \theta_{eq} \rangle
    > \| \theta_0 - \theta_{eq} \|_2^2.
\end{equation*}
This inequality invalidates descent arguments central to last-iterate convergence, highlighting that non-affine aggregations are prone to pathological behavior, even in deterministic settings.

Our analysis provides a geometric perspective that is distinct from, and orthogonal to, the phenomena of unavoidable bias and variance-induced oscillations. 
For instance, even an affine aggregation rule may be biased (pointing away from the desired solution, i.e., the solution with simple averaging), and yet still be monotone, thereby preserving stable convergence. 
Similarly, stochastic gradient noise induces oscillations while the update operator remains monotone~\citep{dieuleveut2020bridging}, enabling convergence via step-size decay \citep{NIPS2011_40008b9a-moulines} or variance-reduction methods~\citep{defazio2014saga}.
In contrast, violations of monotonicity fundamentally alter the update geometry: even a deterministic update can produce spurious fixed points~\citep{reddi2018convergenceadam} or stable limit cycles~\citep[Figure 6]{heavy-ball-limit-cycle}, failures that cannot be remedied by step-size decay or variance reduction.

\paragraph{Consequence II:\ Degraded algorithmic stability (cf.~Section~\ref{sec:stabilityimplications}).}
From a learning theory perspective, when we consider a finite sample $\mc{S}$ drawn from $p$, the update~\eqref{eq:algo-update} usually aims to minimize an empirical risk $\widehat{R}(\theta) = \frac{1}{|\mc{S}|} \sum_{z\in\mc{S}} \ell(\theta;z)$ as a proxy for minimizing the true risk $R(\theta_T)$. 
The generalization error $R(\theta_T) - \widehat{R}(\theta_T)$ can be controlled via algorithmic stability~\citep{Bousquet2002StabilityAG,pmlr-v125-bousquet20b}, a framework that has proven particularly effective across a wide range of learning problems and algorithms~\citep{hardt2016train,bassily2019private,schliserman2022stability,boudou2025generalization}. 
Algorithmic stability quantifies the worst-case sensitivity of an algorithm to changes in its training data. 
Formally, this requires bounding the divergence between two trajectories produced by the algorithm on adjacent datasets (differing by a single sample).
This crucially hinges on the non-expansiveness~\citep[Definition 2.3]{hardt2016train} of~\eqref{eq:algo-update}, a property ensured when $\F$ is co-coercive, i.e., when there exists $L_\F > 0$ such that
\begin{equation}\label{eq:operator-cocoercivity}
    \forall \theta, \omega \in \R^d,\quad \langle \theta - \omega, \F(\theta) - \F(\omega) \rangle \geq \frac{1}{L_\F} \| \F(\theta) - \F(\omega) \|_2^2.
\end{equation} 
This condition is a stronger form of monotonicity that holds for gradients of convex, Lipschitz-smooth functions~\citep[e.g.,][Proposition 5.4]{bach-ltfp}.
However, extending Theorem~\ref{thm:universal_convexity}, we prove that non-affine aggregation violates co-coercivity (Theorem~\ref{cor:nococoercivity}).
Hence, for any learning rate $\gamma > 0$ and parameters $\theta, \theta'$ that violate co-coercivity~\eqref{eq:operator-cocoercivity}, the update rule~\eqref{eq:algo-update} is strictly expansive,
\begin{align*}
    \| \theta - \gamma \F(\theta) - (\theta' - \gamma \F(\theta')) \|_2^2 
    &= \| \theta - \theta' \|_2^2
    + \gamma^2 \| \F(\theta) - \F(\theta') \|_2^2
    - 2 \gamma \langle \theta - \theta', \F(\theta) - \F(\theta') \rangle \\
    &> \| \theta - \theta' \|_2^2.
\end{align*}
This strict inequality invalidates stability proofs based on non-expansiveness. 
% Furthermore, we show that non-affine aggregations incur an additional stability penalty compared to their affine counterparts. 
% Indeed, under mild conditions, we construct lower bounds demonstrating that any tight stability analysis must account for the expansivity of~\eqref{eq:algo-update}.
Furthermore, due to this expansiveness, we show in Theorem~\ref{cor:unavoidable-instability} and Lemma~\ref{thm:1-step-dsep} that non-affine aggregations incur an additional stability penalty relative to positively affine rules. 
Under mild conditions, this implies that their overall instability is strictly larger.
% Furthermore, due to this expansiveness, we show in Theorems~\ref{cor:unavoidable-instability} and~\ref{thm:1-step-dsep} that non-affine aggregations incur an additional stability penalty compared to their positively affine counterparts. 
% We further explain that, under mild conditions, the overall stability is strictly greater for non-affine aggregation rules compared to their positively affine counterparts.
This demonstrates that non-affine aggregations degrade stability and, consequently, weaken the generalization guarantees that hold for positively affine aggregation rules.

\paragraph{Consequence III:\ Non-affine monotone aggregations for restricted inputs (cf.~Section~\ref{sec:positive-result-tm}).}
Beyond establishing a fundamental limit of non-affine aggregation, the proof of Theorem~\ref{thm:universal_convexity} also provides a recipe for identifying non-affine aggregations that preserve monotonicity when the input gradients are derived from a restricted class of convex functions.
First, it establishes a testable condition that delineates the boundary: valid non-affine rules exist only if the underlying loss functions prevent arbitrary rank-1 symmetric p.s.d.\ matrix as Hessians. 
Second, the proof is constructive, allowing us to reverse-engineer the properties required for such rules. 
We illustrate this approach by identifying the Coordinate-Wise Trimmed Mean \citep[$\mathrm{CWTM}$,][]{yin2018byzantine} as a monotonicity-preserving rule when the input gradients are coordinate-wise separable (cf. Section~\ref{sec:positive-result-tm}).
Overall, this analysis bridges our impossibility theorem with prior work reporting positive results for non-affine aggregation under restricted inputs (cf.~Section~\ref{related-work}). 
Specifically, our framework explains these exceptions and allows to uncover new situations where non-affine rules can preserve monotonicity.

\section{Only Positively Affine Aggregation Universally Preserves Monotonicity}\label{sec:impossinility}

In this section, we present our main result, Theorem~\ref{thm:universal_convexity}, answering whether a non-affine aggregation rule can preserve monotonicity of any monotone input vectors. 
We focus here on the properties of the induced operator $\F$; their implications for learning algorithms are discussed in Section~\ref{sec:consequences}. \looseness = -1
We denote, for a $v \in \R^d$, $\operatorname{cone}(v) \vcentcolon= \{\lambda v ; \lambda \geq 0\}.$
\begin{theorem}\label{thm:universal_convexity}
    Let $d>1$, and consider the model update rule~\eqref{eq:algo-update}. 
    $\F$ is a monotonic operator for all gradients $\{g_i(\cdot)\}_{i\in[n]}$ induced by convex functions
    if and only if $F$ is a positively affine aggregation, i.e., there exists $\alpha \in \mb{R}_+^n$ and $C \in \mb{R}^d$, such that for any input vectors $(g_1, \ldots, g_n) \in (\mb{R}^d)^n$,
    \[
        F(g_1, \ldots, g_n) = \sum_{k=1}^n \alpha_k g_k + C.
    \]
    If $F$ is also permutation-invariant and idempotent ($F(g, \ldots,g)\! =\! g$), then $F$ is the arithmetic mean.%\looseness=-1
\end{theorem}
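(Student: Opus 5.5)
The ``if'' direction is immediate. If $F(g_1,\ldots,g_n)=\sum_k\alpha_k g_k+C$ with $\alpha\in\R_+^n$, then $\F(\theta)-\F(\omega)=\sum_k\alpha_k\bigl(g_k(\theta)-g_k(\omega)\bigr)$. Each $g_k$ is the gradient of a convex function, hence monotone, and a nonnegative combination of monotone operators is monotone. For the ``only if'' direction I would assume no regularity on $F$ and probe it with very simple convex losses. Fix any base inputs $c=(c_1,\ldots,c_n)$ and an index $k$. For $j\neq k$, let $g_j\equiv c_j$; these are gradients of linear functions. Fix a unit vector $u$ and take $g_k(\theta)=c_k+h(\langle u,\theta\rangle)\,u$ with $h:\R\to\R$ nondecreasing, which is the gradient of a convex ridge function.

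\textbf{Step 1 (localisation).} Given scalars $\lambda_1<\lambda_2$, choose $h$ nondecreasing with $h(0)=\lambda_1$ and $h(1)=\lambda_2$. Take $\omega=0$ and $\theta=u+sv$, where $v\perp u$ and $s\in\R$ is arbitrary. This is where $d>1$ is needed. The inputs at $\theta$ do not depend on $s$, so monotonicity gives $\langle u,\Delta\rangle+s\langle v,\Delta\rangle\ge0$ for all $s$, where $\Delta=F(\ldots,c_k+\lambda_2u,\ldots)-F(\ldots,c_k+\lambda_1u,\ldots)$. Letting $s\to\pm\infty$ forces $\langle v,\Delta\rangle=0$ for every $v\perp u$, and then $\langle u,\Delta\rangle\ge0$. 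Define $G(x)=F(c_1,\ldots,c_k+x,\ldots,c_n)$. Since $u$ and $\lambda_1<\lambda_2$ are arbitrary, this says exactly that for all $x,y$ the difference $G(x)-G(y)$ is a nonnegative multiple of $x-y$.

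\textbf{Step 2 (rigidity lemma).} Set $\tilde G=G-G(0)$. Then $\tilde G(x)=a_x x$ for some scalar $a_x$, and for any $x,y$ linearly independent we have $a_xx-a_yy=b(x-y)$ for some scalar $b$. Comparing coefficients in the basis $\{x,y\}$ gives $a_x=a_y=b$. For a collinear pair, pass through a third vector independent of both, which exists since $d\ge2$. Hence $\tilde G(x)=\alpha x$ with a single scalar $\alpha\ge0$. This proves
\[
F(c_1,\ldots,c_k+x,\ldots,c_n)=F(c)+\alpha_k(c)\,x ,\qquad \alpha_k(c)\ge 0 .
\]
Since $c_k$ was arbitrary, $\alpha_k$ does not depend on $c_k$.

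\textbf{Step 3 (decoupling across inputs).} This is the step I expect to be the main obstacle: ruling out that $\alpha_k$ depends on the other inputs $c_j$. Take $k\neq j$ and shifts $x$ on input $k$ and $y$ on input $j$. Expanding $F(\ldots,c_k+x,\ldots,c_j+y,\ldots)$ in the two possible orders gives
\[
\alpha_k(c)\,x+\alpha_j(c+x e_k)\,y \;=\; \alpha_j(c)\,y+\alpha_k(c+y e_j)\,x .
\]
Here $e_k$ denotes placing the shift in slot $k$. For any $y$, choose $x$ linearly independent of $y$, again possible since $d>1$. Matching coefficients gives $\alpha_k(c+ye_j)=\alpha_k(c)$. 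So each $\alpha_k$ is a constant, and shifting inputs one at a time from $0$ yields $F(g_1,\ldots,g_n)=\sum_k\alpha_kg_k+C$ with $C=F(0,\ldots,0)$.

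\textbf{Step 4 (permutation invariance and idempotence).} Finally, suppose $F$ is permutation-invariant. Then $\sum_k\alpha_kg_k$ is symmetric in the $g_k$, which forces $\alpha_k\equiv\alpha$ (test with $g_1=x$ and $g_j=0$ for $j\neq1$, then permute). Idempotence at $g=0$ gives $C=0$, and at an arbitrary $g\neq0$ gives $n\alpha=1$. Hence $F$ is the arithmetic mean.
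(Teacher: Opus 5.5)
Your proof is correct and follows the paper's approach. With $h$ affine (and $h$ should be continuous anyway, so that the ridge loss is differentiable), your test loss is a quadratic with rank-one p.s.d.\ Hessian, the same family the paper uses, and your Steps 2--4 match the paper's rigidity, decoupling and idempotence arguments; the only difference is that your Step 1 reads off the cone condition directly, by sliding $\theta$ along the level set $\langle u,\theta\rangle=1$ and letting $s\to\pm\infty$, whereas the paper argues by contradiction with an explicit bisector $\delta$.

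One small fix: Step 2 needs $G(x)-G(y)\in\operatorname{cone}(x-y)$ for non-collinear $x,y$, and that does not follow just from $u$ and $\lambda_1<\lambda_2$ being arbitrary, since with fixed $c$ both $x$ and $y$ are multiples of $u$. You also need that the base is arbitrary: apply Step 1 at base $c_k+y$ with $u=(x-y)/\|x-y\|_2$.
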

\begin{proof}
    If $F$ is a positively affine aggregation, it is immediate that $\F$ preserves monotonicity.
    We now turn to the converse. 
    The proof has three steps.
    First, Step 1 proves a rigid alignment condition: any variation in a single input vector must yield a variation of the aggregation output in the exact same direction.
    We prove this by contradiction, showing that if this property is violated, one can construct valid inputs $\{g_i(\cdot)\}_{i\in[n]}$ such that monotonicity~\eqref{eq:operator-convexity} is violated.
    Then, Steps 2 and 3 deduce the necessary algebraic form of $F$ from this constraint, proceeding first input-wise and then globally.

    \paragraph{Step 1.} We prove that any variation in a single input vector must result in an output shift that is a non-negative scalar multiple of that variation. 
    Fix an index $i \in [n]$ and an arbitrary vector $g_{-i} \vcentcolon= (\ldots, g_{i-1}, g_{i+1}, \ldots) \in (\R^d)^{n-1}$. 
    Consider the partial function
    \[
        f_{g_{-i}}(u) := F(g_1, \ldots, u, \ldots, g_n).
    \]
    Let $u, v \in \mathbb{R}^d$ be arbitrary vectors. 
    We claim that if~\eqref{eq:operator-convexity} is satisfied,
    \[
        f_{g_{-i}}(u) - f_{g_{-i}}(v) \in \operatorname{cone}(u-v) %\vcentcolon= \{\lambda(u-v) ; \lambda \geq 0\}.
    \] 
    If $u = v$, or $f_{g_{-i}}(u) = f_{g_{-i}}(v)$, the claim is trivial. 
    Otherwise, the proof proceeds by contradiction. Suppose that $f_{g_{-i}}(u) - f_{g_{-i}}(v) \notin \operatorname{cone}(u-v)$.
    We define $\delta$ as the bisector of the angle between $(u-v)$ and $-(f_{g_{-i}}(u) - f_{g_{-i}}(v))$,
    \[
        \delta \vcentcolon= \frac{u-v}{\|u-v\|_2} - \frac{f_{g_{-i}}(u) - f_{g_{-i}}(v)}{\|f_{g_{-i}}(u) - f_{g_{-i}}(v)\|_2}.
    \]
    %By the hyperplane separation theorem~\citep[separating a point from a half line,][Theorem 11.4]{rockafellar1970convex}, there exists $\delta \in \mathbb{R}^d$ such that
    As directions differ, the strict Cauchy-Schwarz inequality applies and implies the separation 
    \begin{equation}\label{eq:strict_separation}
        \langle f_{g_{-i}}(u) - f_{g_{-i}}(v), \delta \rangle < 0 \quad \text{and} \quad \langle u-v, \delta \rangle > 0.
    \end{equation}
    Next, we construct convex functions whose gradients at $0$ and $\delta$ match these input configurations.
    \begin{enumerate}[label=$\spadesuit$, ref=($\spadesuit$)]
        \item\label{item:def_R_i} For index $i$, define $\ell_i(\theta) = \frac{1}{2} \theta^\intercal H \theta + v^\intercal \theta$, with $H = \frac{(u-v) (u-v)^\intercal}{\langle u-v, \delta \rangle}$. 
        Since $\langle u-v, \delta \rangle > 0$, $H_i$ is symmetric p.s.d.\ and $\ell_i$ is convex.
        Note that $\nabla \ell_i(0) = v$ and $\nabla \ell_i(\delta) = H \delta + v  = u$.
    \end{enumerate}
    \begin{enumerate}[label=$\clubsuit$, ref=($\clubsuit$)]
        \item\label{item:def_R_j} For indices $j \neq i$, define $\ell_j(\theta) = g_j^\intercal \theta$. 
        Then $\nabla \ell_j(0) = \nabla \ell_j(\delta) = g_j$.
    \end{enumerate}
    Therefore, we get 
    \[
        \F(\delta) = F(\nabla \ell_1(\delta), \ldots, \nabla \ell_n(\delta)) = f_{g_{-i}}(u) \text{ and } \F(0) = F(\nabla \ell_1(0), \ldots, \nabla \ell_n(0)) = f_{g_{-i}}(v).
    \]
    Due to the monotonicity condition~\eqref{eq:operator-convexity}, we have
    \begin{equation}\label{eq:monotonicitycontradiction}
        \langle \F(\delta) - \F(0), \delta - 0 \rangle = \langle f_{g_{-i}}(u) - f_{g_{-i}}(v), \delta \rangle \geq 0.
    \end{equation}
    However, inequality~\eqref{eq:monotonicitycontradiction} contradicts~\eqref{eq:strict_separation}. 
    Thus, for all $u \neq v \in \R^d$, there exists $\lambda(u, v) \geq 0$ such that
    \begin{equation}\label{eq:positivedirectional}
        f_{g_{-i}}(u) - f_{g_{-i}}(v) = \lambda(u,v)(u-v).
    \end{equation}

    \paragraph{Step 2.} We demonstrate that the rigid geometric constraint of preserving input-wise update directions forces the aggregation $F$ to be affine with respect to each individual input vector.
    Evaluating~\eqref{eq:positivedirectional} with $v=0$ yields, for all $u \in \R^d \setminus \{0\}$, $f_{g_{-i}}(u) = \lambda(u, 0)u + f_{g_{-i}}(0)$. 
    We only require to prove that $u \mapsto \lambda(u, 0)$ is a constant. 
    %\Cref{eq:positivedirectional} implies, for all $u \neq v \in \R^d$,
    %\begin{equation}\label{eq:directinaltoaffine-intermediate}
    %    \lambda(u, 0)u - \lambda(v, 0)v = \lambda(u,v)(u-v).
    %\end{equation}
    %Rearranging terms from~\eqref{eq:directinaltoaffine-intermediate}, we have the following linear equation
    %\begin{equation}\label{eq:directinaltoaffine-intermediate2}
    %    (\lambda(u,0) - \lambda(u,v))u + (\lambda(u,v) - \lambda(v,0))v = 0.
    %\end{equation}
    Rearranging the terms in~\eqref{eq:positivedirectional} yields, for all $u \neq v \in \R^d$, the following linear equation
    \begin{equation}\label{eq:directinaltoaffine-intermediate2} 
        (\lambda(u,0) - \lambda(u,v))u + (\lambda(u,v) - \lambda(v,0))v = 0. 
    \end{equation}

    \textit{Case 1: linearly independent inputs.}
    Let any fixed linearly independent vectors $u, v \in \R^d$. 
    By linear independence, the coefficients in~\eqref{eq:directinaltoaffine-intermediate2} must be $0$, i.e., $\lambda(u,0) = \lambda(u,v) = \lambda(v,0)$.

    \textit{Case 2: linearly dependent inputs.}
    If $u = 0 \neq v \in \R^d$,~\eqref{eq:directinaltoaffine-intermediate2} implies $\lambda(v,0) = \lambda(0,v)$.
    Now, let $u \neq v \in \R^d \setminus \{0\}$ be non-zero collinear vectors. 
    Since $d \ge 2$, there exists a vector $z \in \R^d \setminus \{0\}$ linearly independent of $u$, and hence linearly independent of $v$.
    Applying the result from Case 1, $\lambda(u,0) = \lambda(z,0)$ and $\lambda(v,0) = \lambda(z,0)$.
    By transitivity, $\lambda(u,0) = \lambda(v,0)$.

    Consequently, for any pair of vectors $u \neq v \in \R^d $, $\lambda(u,0) = \lambda(v,0)$. 
    Hence $\lambda(\cdot, 0)$ is a constant function.
    Let $\alpha \in \R$ denote this constant, we have proven that 
    \begin{equation}\label{eq:indexaffine-intermediate}
        f_{g_{-i}}(u) = \alpha u + f_{g_{-i}}(0), \text{ for all } u\in\R^d.
    \end{equation}

    \paragraph{Step 3.} We synthesize these input-wise affineness results, proving that the aggregation $F$ must be a globally affine map over the joint domain.
    We derived~\eqref{eq:indexaffine-intermediate} for an arbitrary fixed index $i\in[n]$ and configuration $g_{-i} \in (\R^d)^{n-1}$.
    Therefore, $F$ is input-wise affine, that is, for all $i \in [n]$, all $g_{-i} \in (\R^d)^{n-1}$, we have for all $g_i \in \R^d$,
    \begin{equation}\label{eq:indexaffine}
        F(g_1, \ldots, g_n) = \alpha_i (g_{-i}) g_i + b_i(g_{-i}),
        \quad \text{where } b_i(g_{-i}) \vcentcolon= f_{g_{-i}}(0).
    \end{equation}

    We first show that for any pair of distinct indices $i \neq j \in [n]$, $\alpha_i$ is independent of $g_j$. 
    Without loss of generality, consider indices $1$ and $2$ and fix $\bar{g} \vcentcolon= (g_3, \ldots, g_n) \in (\R^d)^{n-2}$. 
    Equating the representations from~\eqref{eq:indexaffine} yields
    \begin{equation}\label{eq:global-affine-intermediate}
        \alpha_1(g_2, \bar{g}) g_1 + b_1(g_2, \bar{g}) = \alpha_2(g_1, \bar{g}) g_2 + b_2(g_1, \bar{g}).
    \end{equation}
    Evaluating~\eqref{eq:global-affine-intermediate} at $g_1=0$ yields $b_1(g_2, \bar{g}) = \alpha_2(0, \bar{g}) g_2 + b_2(0, \bar{g})$, and further evaluating the relation at $g_2=0$ gives $b_1(0, \bar{g}) = b_2(0, \bar{g})$. 
    Similarly, $b_2(g_1, \bar{g}) = \alpha_1(0, \bar{g}) g_1 + b_1(0, \bar{g})$.
    Substituting this back into~\eqref{eq:global-affine-intermediate}, and rearranging terms to isolate $g_1$ and $g_2$, we have for all $(g_1, \ldots, g_n) \in (\R^d)^n$,
    \begin{equation}\label{eq:lemma_separation}
        (\alpha_1(g_2, \bar{g}) - \alpha_1(0, \bar{g})) g_1 = (\alpha_2(g_1, \bar{g}) - \alpha_2(0, \bar{g})) g_2.
    \end{equation}
    Fix any $g_2 \neq 0$.
    %As~\eqref{eq:lemma_separation} must hold for all $g_1 \in \R^d$, the vector $(\alpha_1(g_2, \bar{g}) - \alpha_1(0, \bar{g}))g_1$ must be collinear with $g_2$ for all $g_1 \in \mathbb{R}^d$.
    %For any $(g_1, \ldots, g_n) \in (\R^d)^n$,~\Cref{eq:lemma_separation} implies that, if $\alpha_1(g_2, g\bar{g}) = \alpha_1(0, \bar{g})$, then $\alpha_2(g_1, \bar{g}) = \alpha_2(0, \bar{g})$, and vice versa.
    %Now assume $\alpha_1(g_2, \bar{g}) \neq \alpha_1(0, \bar{g})$, which forces $\alpha_2(g_1, \bar{g}) \neq \alpha_2(0, \bar{g})$.
    %\Cref{eq:lemma_separation} implies that for any fixed $g_2 \neq 0$, the vector $(\alpha_1(g_2, \bar{g}) - \alpha_1(0, \bar{g}))g_1$ must be collinear with $g_2$ for all $g_1 \in \mathbb{R}^d$.
    Since $\alpha_1(g_2, \bar{g}) - \alpha_1(0, \bar{g})$ is independent of $g_1$, and $d \geq 2$, we can select a vector $g_1'$ linearly independent of $g_2$ in~\eqref{eq:lemma_separation}, so that $(\alpha_1(g_2, \bar{g}) - \alpha_1(0, \bar{g})) g'_1$ is collinear to $(\alpha_2(g'_1, \bar{g}) - \alpha_2(0, \bar{g})) g_2$. 
    The only vector collinear with $(\alpha_2(g'_1, \bar{g}) - \alpha_2(0, \bar{g})) g_2$ in the direction of $g_1'$ is $0$.
    Thus $\alpha_1(g_2, \bar{g}) = \alpha_1(0, \bar{g})$ for all $g_2\in\R^d$, i.e., $\alpha_1$ is independent of $g_2$. 

    By symmetry, for all $(g_1, \ldots, g_n) \in (\R^d)^n$ and each $i \in [n]$, $\alpha_i$ is independent of $g_j$ for all $j \neq i$, and is therefore constant. 
    %(we denote this constant $\alpha_i$, slightly overloading notation).

    Now, define the following residual function, for all $(g_1, \ldots, g_n) \in (\R^d)^n$,
    \begin{equation}\label{eq:residual}
        \textstyle R(g_1, \ldots, g_n) = F(g_1, \ldots, g_n) - \sum_{i=1}^n \alpha_i g_i. 
    \end{equation}
    We prove $R$ is a constant vector.
    Fix $j\in[n]$. Substituting~\eqref{eq:indexaffine} into~\eqref{eq:residual} yields
    \begin{equation}\label{eq:globalaffine-intermediate}
        \textstyle R(g_1, \ldots, g_n) = \alpha_j g_j + b_j(g_{-j}) - \sum_{i=1}^n \alpha_i g_i = b_j(g_{-j}) - \sum_{i \neq j} \alpha_i g_i.
    \end{equation}
    The right-hand side of~\eqref{eq:globalaffine-intermediate} depends only on $g_{-j}$, thus $R$ is independent of $g_j$.
    By symmetry, this holds for every $j \in [n]$, i.e., the residual $R$ is independent of its inputs and is therefore a constant vector.
    Denote this constant by $C \vcentcolon= R(0, \ldots, 0) = F(0, \ldots, 0)$. 
    We have thus shown that
    \begin{equation*}
        \textstyle F(g_1, \ldots, g_n) = \sum_{k=1}^n \alpha_k g_k + C, \text{ for all } (g_1, \ldots, g_n) \in (\R^d)^n.
    \end{equation*}

    Suppose in addition that $F$ is idempotent and, in fact, it suffices that there exist $u \neq v \in \R^d$ such that $F(u, \ldots, u) = u$ and $F(v, \ldots, v) = v$. Then $u-v = (\sum_{i=1}^n\alpha_i) (u-v)$, i.e.\ $\sum_{i=1}^n \alpha_i = 1$. 
    If, moreover, $F$ is permutation-invariant, all coefficients are equal: $\alpha = (\frac{1}{n}, \ldots, \frac{1}{n})^\intercal$.
\end{proof}
Derived from first principles, Theorem~\ref{thm:universal_convexity} requires no assumption on $F$ beyond structure~\eqref{eq:algo-update}, and shows that only positively affine aggregation universally preserves monotonicity over the set of convex functions.
That is, for \textit{any} non-affine aggregation, there exist convex functions such that the induced operator violates monotonicity. The proof above relies on the fact that if $\F$ is universally monotone for all convex functions, then it should be monotone for the specific ones constructed in step 1, namely quadratic function with rank-one hessians. The proof then proceeds with a contradiction argument.

\begin{remark}\label{rem:violation-caped}
    In Appendix~\ref{app:extension}, we extend~Theorem~\ref{thm:universal_convexity} to strong monotonicity (akin to strong convexity) and hypomonotonicity (monotonicity violation is bounded).
    $\F$ is said $\rho$-hypomonotone if there exists $\rho > 0$, such that
    \begin{equation*}\label{eq:hypomonotonicity-main}
        \langle \theta - \omega, \F(\theta) - \F(\omega) \rangle \geq -\rho \| \theta - \omega \|^2_2, \quad\text{ for all } \theta, \omega \in \Theta.
    \end{equation*}
    We show that under the assumption of~Theorem~\ref{thm:universal_convexity}, the same conclusion holds for $\rho$-hypomonotone operator, for any $\rho>0$, in Theorem~\ref{thm:universal_hypomonotone}.
    That is, monotonicity violation is unbounded under the assumption of~Theorem~\ref{thm:universal_convexity}. 
    However, assuming $L$-Lipschitz continuous inputs caps this violation whenever $F$ is $\Lambda_F$-Lipschitz continuous; that is, $\F$ is $\Lambda_F L$-hypomonotone and $\Lambda_F L$-Lipschitz continuous. 
    Together, these additional results provide a complete picture of universal monotonicity preservation and the extent of its violation.
\end{remark}
Furthermore, our analysis generalizes to the setting of co-coercive operators.
\begin{theorem}\label{cor:nococoercivity}
    Let $d>1$, $L > 0$, and consider~\eqref{eq:algo-update}. 
    $\F$ is co-coercive for all gradients $\{g_i(\cdot)\}_{i\in[n]}$ induced by convex and $L$-Lipschitz smooth functions if and only if $F$ is a positively affine aggregation.
    \looseness=-1
\end{theorem}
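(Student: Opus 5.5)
The plan is to prove both directions. For the converse, the main observation is that the argument of Theorem~\ref{thm:universal_convexity} only ever evaluates monotonicity on rank-one quadratics and linear functions. These can be made $L$-Lipschitz smooth for any prescribed $L>0$ by rescaling, and co-coercivity implies monotonicity.

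\emph{Sufficiency.} Suppose $F(g_1,\ldots,g_n)=\sum_k \alpha_k g_k + C$ with $\alpha\in\R_+^n$, and each $g_k=\nabla \ell_k$ with $\ell_k$ convex and $L$-smooth. Then $\F(\theta)=\nabla \Phi(\theta)+C$, where $\Phi:=\sum_k \alpha_k \ell_k$. The function $\Phi$ is convex and $(L\sum_k\alpha_k)$-smooth, since nonnegative combinations preserve both properties. The constant $C$ cancels in differences $\F(\theta)-\F(\omega)$. Hence, by the Baillon--Haddad theorem (co-coercivity of gradients of convex smooth functions, as cited for~\eqref{eq:operator-cocoercivity}), $\F$ is co-coercive with $L_\F=L\sum_k\alpha_k$. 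In the degenerate case $\sum_k\alpha_k=0$, $\F$ is constant and~\eqref{eq:operator-cocoercivity} holds trivially for any $L_\F>0$.

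\emph{Necessity.} Co-coercivity~\eqref{eq:operator-cocoercivity} has a nonnegative right-hand side, so it implies monotonicity~\eqref{eq:operator-convexity}. It therefore suffices to rerun the proof of Theorem~\ref{thm:universal_convexity} restricted to $L$-smooth convex inputs. Steps~2 and~3 of that proof are purely algebraic consequences of the alignment property~\eqref{eq:positivedirectional}, so only Step~1 needs revisiting. I keep the same $u,v,g_{-i}$ and the bisector $\delta$, but replace $\delta$ by $t\delta$ with $t>0$ to be chosen. The strict separation~\eqref{eq:strict_separation} is invariant under positive scaling, so it still holds with $t\delta$. Define $\ell_i(\theta)=\tfrac12\theta^\intercal H_t\theta+v^\intercal\theta$ with
\[
H_t=\frac{(u-v)(u-v)^\intercal}{\langle u-v,t\delta\rangle}.
\]
Then $\nabla\ell_i(0)=v$ and $\nabla\ell_i(t\delta)=u$. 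Moreover $H_t$ is symmetric p.s.d.\ with $\|H_t\|_{op}=\|u-v\|_2^2/(t\langle u-v,\delta\rangle)$, which is at most $L$ once $t\ge \|u-v\|_2^2/(L\langle u-v,\delta\rangle)$. So $\ell_i$ is convex and $L$-smooth. The linear functions $\ell_j(\theta)=g_j^\intercal\theta$ for $j\neq i$ are convex and $0$-smooth, hence $L$-smooth.

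With these inputs, $\F(t\delta)=f_{g_{-i}}(u)$ and $\F(0)=f_{g_{-i}}(v)$. Monotonicity gives $t\langle f_{g_{-i}}(u)-f_{g_{-i}}(v),\delta\rangle\ge 0$, contradicting~\eqref{eq:strict_separation}. This recovers~\eqref{eq:positivedirectional}, with $\lambda(u,v)\ge0$. Steps~2 and~3 then apply verbatim, using $d>1$ as before, and yield $F=\sum_k\alpha_kg_k+C$. Each $\alpha_k$ arises as some $\lambda(\cdot,\cdot)\ge 0$, so $\alpha\in\R_+^n$.

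The main point requiring care is the rescaling step. The Hessian constructed in Theorem~\ref{thm:universal_convexity} may have arbitrarily large norm, so it must be checked that the separation direction can be stretched without affecting the sign conditions while bringing the curvature below $L$. Since both inequalities in~\eqref{eq:strict_separation} are positively homogeneous in $\delta$, this goes through.
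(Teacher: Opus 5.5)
Your proposal is correct and follows the paper's proof: the paper also observes that only Step~1 of Theorem~\ref{thm:universal_convexity} needs adjusting, and it rescales $\delta$ by exactly $M=\|u-v\|_2^2/(L\langle u-v,\delta\rangle)$. This is your threshold value of $t$, and it gives $H$ operator norm $L$ while Steps~2--3 are unchanged. Your explicit Baillon--Haddad argument for sufficiency, including the degenerate case $\sum_k\alpha_k=0$, fills in the direction the paper treats as immediate.
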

\begin{proof}
    In the proof of~Theorem~\ref{thm:universal_convexity}, we only used convex and Lipschitz-smooth loss functions.
    Specifically, in~\ref{item:def_R_i} we set $M \vcentcolon=  \frac{\|u-v\|^2_2}{L \langle u-v, \delta \rangle} > 0$, $\delta_M = M \delta$ and $H \vcentcolon= \frac{(u-v) (u-v)^\intercal}{M \langle u-v, \delta \rangle} = L \frac{(u-v) (u-v)^\intercal}{\|u-v\|^2_2}$. 
    With this choice, $\| H \|_{op} \preceq L I$. 
    The remainder of the argument proceeds unchanged.
\end{proof}
We note that Theorems~\ref{thm:universal_convexity}~and~\ref{cor:nococoercivity} hold regardless of whether the $g_i$'s are stochastic (e.g., in SGD). 
In algorithms such as randomized coordinate descent (RCD), projecting onto a subset of coordinates breaks monotonicity (or co-coercivity) at the level of individual iterates. 
However, because the expectation is taken over a fixed sampling distribution, the expected operator $\E\F$ remains affine. 
Monotonicity (or co-coercivity) is thus recovered in expectation, which is typically sufficient for obtaining in-expectation guarantees.

Importantly, Theorem~\ref{thm:universal_convexity} does not rule out the possibility that a non-affine aggregation rule may preserve monotonicity for a \emph{restricted class of convex functions}.
In fact, our proof is constructive and yields actionable insights for identifying non-affine aggregations that preserve monotonicity under structural restrictions on the input gradients. 
We further develop this important implication of our main result in Section~\ref{sec:positive-result-tm}, after first discussing the general consequences of Theorems~\ref{thm:universal_convexity} and~\ref{cor:nococoercivity} for learning algorithms in the following Section~\ref{sec:consequences}.

\section{Consequences for Learning Algorithms}\label{sec:consequences}

Theorems~\ref{thm:universal_convexity}~and~\ref{cor:nococoercivity} trigger several strong implications for learning algorithms, as
one cannot introduce non-affine aggregation rules without distorting the convex geometry of the problem. 
In this section, we show that violating monotonicity affects convergence dynamics (cf.~Section~\ref{sec:convergenceimplications}) and degrades algorithmic stability (cf.~Section~\ref{sec:stabilityimplications}).

\subsection{Consequences for the Last-Iterate Convergence of Learning Algorithms}\label{sec:convergenceimplications}
When aggregation rules fail to preserve monotonicity, the resulting optimization dynamics depart from the classical convex setting. 
In particular, the loss of monotonicity precludes guarantees of last-iterate convergence (i.e, finding a fixed-point of $\F$).
In the following, we formalize this perspective.

Inspired by the proof of Theorem~\ref{thm:universal_convexity}, we formalize a property, shared by a broad class of practical aggregation rules, that precludes last-iterate convergence.
We say $F: (\mathbb{R}^d)^n \to \mathbb{R}^d$ is not positively affine at $g_1, \ldots, g_n \in \R^d$, if there exist $i \in [n]$ and $\delta \in \R^d \setminus \{0\}$ such that
\begin{equation}\label{eq:restrictednonaffine}
    F\left( g_1, \ldots, g_i + \delta, \ldots, g_n \right) - F\left( g_1, \ldots, g_i, \ldots,  g_n \right)  \not \in \operatorname{cone}(\delta).
\end{equation}
Any non-affine aggregation must exhibit at least one such configuration; otherwise, it would reduce to a positively affine mapping. 
When this behavior occurs at a \emph{stationary configuration}, that is, a configuration satisfying $F(g_1, \ldots, g_n) = 0$, we show that it prevents last-iterate convergence. 
Crucially, most practical non-affine aggregations feature this non-positive affineness for at least one stationary configuration. 
This includes point-wise clipping~\citep{Abadi_2016} and AdaGrad~\citep{duchi2011adaptive}---for which symmetric, non-zero inputs can be constructed to cancel each other out while still triggering the rule's non-linearity---as well as translation-equivariant robust aggregation schemes~\citep{pmlr-v202-allouah23a, Diakonikolas_Kane_2023}, where any non-affine configuration can be shifted to the origin.
Note that if an aggregation rule is positively affine, the condition~\eqref{eq:restrictednonaffine} cannot be satisfied. 
Moreover, a non-affine aggregation rule can still violate~\eqref{eq:restrictednonaffine} for all stationary configuration, e.g., global clipping~\citep[Theorem 2.3]{koloskova2023revisitingclipping}. 
In which case, the algorithm may still converge monotonically to a stationary point. 
The proof is deferred to Appendix~\ref{app:deferred-section-3-1}.
% \looseness=-1
\begin{theorem}\label{cor:cone-instability}
    Let $d>1$, and $F: (\mathbb{R}^d)^n \to \mathbb{R}^d$ be such that there exists $(g_1,\ldots,g_n)\in(\mathbb{R}^d)^n$ for which $F(g_1,\ldots,g_n)=0$ and at which $F$ is not positively affine.
    Then, there exist convex functions,
     
    \noindent
    \begin{minipage}[c]{0.79\textwidth}
        $\theta_{eq} \in\{z \mid \mathcal{F}(z)=0\} \neq \emptyset$, and an open set $\mathcal{C} \subset \mathbb{S}^{d-1} \vcentcolon= \{u\in\R^d; \|u\|_2 = 1\}$ such that for any 
        $\theta_0 \in \mathcal{K} \coloneqq \{ \theta_{eq} + \alpha c \mid c \in \mathcal{C}, \alpha > 0 \},$
        the update step~\eqref{eq:algo-update}, for any $\gamma > 0$, strictly increases the distance to the equilibrium
        \[
            \| \theta_{1} - \theta_{eq} \|_2 > \| \theta_0 - \theta_{eq} \|_2.
        \]
    \end{minipage}%
    \hfill
    \begin{minipage}[c]{0.21\textwidth}
        %\centering
        \
        \begin{tikzpicture}[scale=0.87, >=stealth]
            % definitions
            \coordinate (O) at (0,0);
            \def\coneangle{30}% Increased angle for a wider cone
            \def\radius{2.3}
            % 1. Cone
            \fill[red!30] (O) -- (\coneangle:\radius) arc (\coneangle:-\coneangle:\radius) -- cycle;
            \draw[red, dashed, thick] (O) -- (\coneangle:\radius);
            \draw[red, dotted, thick] (\coneangle:\radius) -- (\coneangle:\radius+0.25);
            \draw[red, dashed, thick] (O) -- (-\coneangle:\radius);
            \draw[red, dotted, thick] (-\coneangle:\radius) -- (-\coneangle:\radius+0.25);
            % 2. Equilibrium
            \node[circle, fill=black, inner sep=1.2pt, label={[font=\footnotesize]left:$\theta_{eq}$}] at (O) {};
            % 3. Points & Vector
            \coordinate (T0) at (4:1.2);
            \coordinate (T1) at (14:2);
            % Update Arrow
            \draw[->, blue, thick] (T0) -- (T1) node[midway, above, sloped, scale=0.85, font=\footnotesize] {$I-\gamma\F$};
            % Dots
            \node[circle, fill=black, inner sep=1.2pt, label={[font=\footnotesize, black]below:$\ \theta_0$}] at (T0) {};
            \node[circle, fill=black, inner sep=1.2pt, label={[font=\footnotesize, black]below:$\ \theta_1$}] at (T1) {};
            % Cap C
            \draw[thick, dashed, red] (\coneangle:0.5) arc (\coneangle:-\coneangle:0.5);
            \node[red, right, scale=1] at (0:0.5) {$\mathcal{C}$};
            % Label K inside the cone area
            \node[red, font=\small] at (-20:1.9) {$\mathcal{K}$};
            % Alpha: Starts at offset (235 degrees, 0.2cm) and draws parallel to the cone edge
        \end{tikzpicture}
    \end{minipage}
\end{theorem}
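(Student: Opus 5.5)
We reuse the construction of Step~1 in the proof of Theorem~\ref{thm:universal_convexity}, centred at the stationary configuration. Let $(g_1,\ldots,g_n)$ satisfy $F(g_1,\ldots,g_n)=0$, and let $i\in[n]$ and $\delta_0\neq 0$ witness~\eqref{eq:restrictednonaffine}. Set $u \vcentcolon= g_i+\delta_0$ and $v\vcentcolon= g_i$, so that $u-v=\delta_0$ and $w\vcentcolon= F(g_1,\ldots,u,\ldots,g_n) \notin \operatorname{cone}(\delta_0)$; in particular $w\neq 0$. As in Step~1, we take the normalised bisector direction
\[
\delta \vcentcolon= \frac{\delta_0}{\|\delta_0\|_2} - \frac{w}{\|w\|_2}.
\]
Strict Cauchy--Schwarz gives $\langle w,\delta\rangle<0$ and $\langle \delta_0,\delta\rangle>0$. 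We then define the losses exactly as in~\ref{item:def_R_i} and~\ref{item:def_R_j}: $\ell_i(\theta)=\frac12\theta^\intercal H\theta+v^\intercal\theta$ with $H=\delta_0\delta_0^\intercal/\langle\delta_0,\delta\rangle$, and $\ell_j(\theta)=g_j^\intercal\theta$ for $j\neq i$. All of these are convex. Set $\theta_{eq}\vcentcolon= 0$. Then $\F(0)=F(g_1,\ldots,g_n)=0$, so $\theta_{eq}$ lies in the zero set, which is therefore nonempty.

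Next we compute $\F$ along rays from $0$. For $\theta\in\R^d$ we have $\nabla\ell_i(\theta)=v+H\theta = g_i + s(\theta)\,\delta_0$, where $s(\theta)\vcentcolon=\langle\delta_0,\theta\rangle/\langle\delta_0,\delta\rangle$. The other gradients are constant. Hence $\F(\theta)=F(g_1,\ldots,g_i+s(\theta)\delta_0,\ldots,g_n)$ depends on $\theta$ only through the scalar $s(\theta)$. On the line $\theta=\alpha\delta$ we get $s=\alpha$, so at $\alpha=1$ we recover $\F(\delta)=w$ and $\langle \F(\delta),\delta\rangle<0$.

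The main obstacle is passing from a single point to the full open cone. We need $\langle\F(\alpha c),c\rangle<0$ for \emph{all} $\alpha>0$ and all $c$ in an open set of directions, yet $F$ is arbitrary with no continuity. This is where the rank-one Hessian is useful, because the relevant points are determined by $s$ alone. We exploit this by making the construction depend on the direction: $\F$ is constant on hyperplanes $\{s(\theta)=\text{const}\}$. I would replace $\delta_0$ by a modified quadratic whose gradient saturates, for instance a convex Huber-type function of $\langle \delta_0,\theta\rangle$. Take $\ell_i(\theta)=v^\intercal\theta+\phi(\langle\delta_0,\theta\rangle)$, where $\phi$ is convex, $\phi'(t)=0$ for $t\le 0$, and $\phi'(t)=1$ for $t\ge t_0$, with $t_0>0$ small. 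Then $\nabla\ell_i(\theta)=u$ whenever $\langle\delta_0,\theta\rangle\ge t_0$, and $\F(\theta)=w$ there, while still $\F(0)=0$. However, the points $\alpha c$ with $\alpha$ small have $\langle \delta_0,\alpha c\rangle<t_0$, and $\F$ is uncontrolled there. To remove this, shift the equilibrium: use $\phi'(t)=0$ for $t<0$ and $\phi'(t)=1$ for $t\ge 0$. This is the subgradient of the convex function $t\mapsto\max(t,0)$, and we must pick the gradient selection $\phi'(0)=0$. Alternatively, keep smoothness by taking $\theta_{eq}$ to be a point in the flat region, i.e. with $\langle\delta_0,\theta_{eq}\rangle\le 0$ close to the kink. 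Only then do rays leave it into the region where $\F\equiv w$, and the short transition layer has to be excluded, which does not work for all $\alpha$. I therefore expect to use the nonsmooth convex $\max$, with gradients chosen as a valid subgradient selection. If differentiability is insisted upon, the theorem's statement "convex functions" permits this kind of piecewise-linear convex loss.

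With this choice, for every $\theta$ with $\langle\delta_0,\theta\rangle>0$ we have $\F(\theta)=w$. Define
\[
\mathcal{C}\vcentcolon=\{c\in\mathbb{S}^{d-1} : \langle\delta_0,c\rangle>0,\ \langle w,c\rangle<0\}.
\]
This set is open and contains $\delta/\|\delta\|_2$, so it is nonempty. For $\theta_0=\alpha c$ with $c\in\mathcal{C}$ and $\alpha>0$, we get $\langle\F(\theta_0),\theta_0-\theta_{eq}\rangle=\alpha\langle w,c\rangle<0$. The expansion displayed in Consequence~I then gives $\|\theta_1-\theta_{eq}\|_2^2>\|\theta_0-\theta_{eq}\|_2^2$ for every $\gamma>0$, which concludes the proof.
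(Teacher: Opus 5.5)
Your proof is correct, provided a nonsmooth convex loss fed through a subgradient selection counts as one of the ``convex functions''. It takes a genuinely different route from the paper.

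The paper transplants Step~1 of Theorem~\ref{thm:universal_convexity} to $\theta_{eq}$. For each target $\theta_0=\theta_{eq}+\alpha c$ it uses the rank-one quadratic with $H=(u-v)(u-v)^\intercal/\langle u-v,\alpha c\rangle$, so that $\nabla\ell_i(\theta_{eq})=v$ and $\nabla\ell_i(\theta_0)=u$. It then notes that this works for every $c$ in a cap of separating directions. Its losses are smooth quadratics, consistent with $g_i=\nabla\ell_i$ and with the differentiability convention of Appendix~\ref{loss-regularities}. However, $H$ depends on $(\alpha,c)$, so what the paper actually proves is a per-point version: for each $\theta_0\in\mathcal K$ there exist convex losses with $\F(\theta_{eq})=0$ and $\|\theta_1-\theta_{eq}\|_2>\|\theta_0-\theta_{eq}\|_2$.

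Your hinge $v^\intercal\theta+\max(\langle\delta_0,\theta\rangle,0)$ makes $\F\equiv w$ on the open half-space $\{\langle\delta_0,\theta\rangle>0\}$. So one fixed family of losses covers the whole cone, which matches the literal quantifier order of the statement. The price is a kink exactly at $\theta_{eq}$, where $\F(\theta_{eq})=0$ holds only because you select the subgradient $v$.

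That price cannot be avoided, so your remark that ``convex functions'' allows such losses carries real weight: with differentiable convex losses the uniform statement fails in general. Take $n=1$, $d=2$, $F(x)=x$ for $\|x\|_2<1$ and $F(x)=Rx$ otherwise, with $R$ the rotation by $\pi/2$. Then $F(0)=0$ and $F$ is not positively affine at $0$. Any zero of $\F$ satisfies $\nabla\ell(\theta_{eq})=0$. Since gradients of differentiable convex functions are continuous, $\F=\nabla\ell$ near $\theta_{eq}$. Strict increase for all small $\gamma>0$ then forces $\langle\nabla\ell(\theta_0),\theta_0-\theta_{eq}\rangle\le 0$, hence $=0$ by monotonicity of $\nabla\ell$. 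It also forces $\nabla\ell(\theta_0)\neq 0$, and both must hold on a truncated open cone. But a vanishing radial derivative makes $\ell$ constant on that open set, so $\nabla\ell=0$ there, a contradiction.

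So within the paper's differentiable framework only the per-$\theta_0$ reading is available. Your nonsmooth construction is exactly what the uniform reading requires.
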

Theorems~\ref{thm:universal_convexity} and~\ref{cor:cone-instability} elucidate why existing literature often settles for the weaker averaged iterates convergence guarantees rather than the stronger last-iterate convergence of the update~\eqref{eq:algo-update} when using non-affine aggregations in convex optimization, even under deterministic settings.
This barrier stems from the non-positively affine nature of existing stationary configurations.
While the trivial $(0, \ldots, 0)$ stationary configuration is positively affine for most aggregation rules, practical optimization rarely converges to this ideal state due to data heterogeneity or gradient variance. 
% Consequently, to circumvent this limitation and guarantee strong convergence guarantees, existing literature frequently relies on assumptions that confine the optimization trajectory within the basin of attraction of a positively affine stationary configuration like $(0, \ldots, 0)$.\batodo{What do you mean by "relies on assumptions that confine the optimization trajectory within the basin of attraction of a positively affine stationary configuration like $(0, \ldots, 0)$"? I find this claim mathematically unclear \tb{This is explained in the next sentence. The most prevalent one is the strong growth condition: the only stationary configuration is $(0,...,0)$. Overall, if your only stationary configuration is positively affine, or you start nearby one (mathematically: in its bassin of attraction), then you should be able to prove last-iterate convergence}}
% This includes imposing the strong growth condition \citep[or interpolation regime,][and reference therein]{schmidt2013fast, pmlr-v80-ma18a, liu2024revisiting, attia2026fast} or introducing other structural constraints~\citep[see, e.g.,][Lemma 5.1; \citealp{koloskova2023revisitingclipping}, Theorem 2.3; \citealp{preobrazhenskaia2026last-iterate-adagrad}, Corollary 4.3]{song2021evading}.
% Such additional assumptions have also proven useful in the study of other pathological phenomena, including limit cycles and degraded or improved convergence rates, even when considering only averaged iterates convergence \citep[see, e.g.,][Theorem IV; \citealp{allouah2024adaptiveclipping}, Theorem 5.2]{karimireddy2022byzantinerobust}.
Consequently, to circumvent this limitation and guarantee strong convergence guarantees, existing literature frequently relies on assumptions to avoid non-positively affine stationary configurations.
This includes imposing limiting structural constraints on the problem or aggregation rule~\citep[see, e.g.,][Lemma 5.1; \citealp{koloskova2023revisitingclipping}, Theorem 2.3; \citealp{preobrazhenskaia2026last-iterate-adagrad}, Corollary 4.3]{song2021evading}.
Furthermore, the strong growth condition \citep[or interpolation regime,][and reference therein]{schmidt2013fast, pmlr-v80-ma18a, liu2024revisiting, attia2026fast} has shown particular efficacy for randomized algorithms.
Such additional assumptions have also proven useful in the study of other pathological phenomena, including limit cycles and degraded or improved convergence rates, even when considering only averaged iterates convergence \citep[see, e.g.,][Theorem IV; \citealp{allouah2024adaptiveclipping}, Theorem 5.2]{karimireddy2022byzantinerobust}.
Our theory therefore offers a concrete motivation for studying these pathological aggregation-induced convergence phenomena in the future.
% Our framework therefore offers a new perspective on the necessity of these assumptions and provides a lens through which to study related aggregation-induced phenomena, a direction that we leave for future work.
We defer a broader discussion of the literature to Section~\ref{related-work}.

\subsection{Consequences for the Algorithmic Stability of Learning Algorithms}\label{sec:stabilityimplications}

From a learning theory perspective, the objective is not only to find or approximate a stationary point of $\F$, i.e., a minimizer of $\widehat{R}$ in most settings, but also to control the generalization error, i.e., $R(\theta_T) - \widehat{R}(\theta_T)$, of the resulting model.
Algorithmic stability has emerged as a central framework for characterizing generalization, particularly in convex optimization. %~\citep{Bousquet2002StabilityAG,JMLR:v11:shalev-shwartz10a}. 
Most existing analysis focus on either uniform argument stability~\citep{hardt2016train,bassily2019private,pmlr-v180-zhang22b} or on-average argument stability~\citep{lei2020fine,schliserman2022stability,pmlr-v235-le-bars24a}.
Formally a learning algorithm $\A$ is said $\varepsilon$-uniformly argument stable if
\begin{equation}\label{eq:uas}
    \operatorname{Stab}(\mc{A}) \vcentcolon= \sup_{\mc{S} \sim \mc{S}'} \|\mc{A}(\mc{S}) - \mc{A}(\mc{S}')\|_2 \leq \varepsilon,
\end{equation}
where $\mc{S} \sim \mc{S}'$ denotes a pair of neighboring datasets differing in at most one sample.
%, and the expectation is over the internal randomness of $\mc{A}$.
% \tbtodo{
%     \textbf{I decided to remove the expectation here for in-expectation guarantees for randomized algorithms (but still relevant for h.p. generalization error).} 
%     There are few arguments:
%     (1) We never dicuss randomness in this section and we arguably only derive deterministic lowerbounds, hence keeping this expectation might be misleading. 
%     (2) Moreover, I discuss randomization in Section 5 for stability as a perspective (This would even be very cool to cite the non-monotonic effect). 
%     (3) We might place it here but I fear it would break the flow of the section. 
% }
The generalization error is upper bounded by the stability parameter $\varepsilon$, and can even scale proportionally with it~\citep[Lemma 4.1]{boudou2025generalization}.
The stability analysis entails bounding the divergence between two parameter trajectories produced by~\eqref{eq:algo-update} on datasets that differ by only one sample.
Classical stability bounds rely on the update~\eqref{eq:algo-update} being non-expansive~\citep[Definition 2.3]{hardt2016train}, together with a notion of sensitivity capturing the effect of the sample perturbation. 
We denote $B(0,C) \vcentcolon= \{ x\in\R^d; \|x\|_2 < C \}$ and $\widebar B(0, C) \vcentcolon= \{ x\in\R^d; \|x\|_2 \leq C \}$.
\begin{definition}\label{def:local-sensitivity}
    Let $F: (\widebar B(0, C))^n \to \mathbb{R}^d$ be an aggregation rule. 
    The sensitivity of $F$ is defined, for an allowed worst-case input perturbation $\rho$, as
    \begin{equation}
        S_{n, C}(F, \rho) \vcentcolon= \max_{i\in[n]}\sup_{\substack{g \in \widebar B(0, C)^n,\\ g'_i \in \widebar B(0, C),\ \|g_i - g'_i\|_2 \leq \rho}} 
        \| F(\dots, g_i, \dots) - F(\dots, g'_i, \dots) \|_2.
    \end{equation}
\end{definition}
% The parameter $\rho$ in Definition~\ref{def:local-sensitivity} formalizes the worst-case input perturbation. 
%for the aggregation rule $F$ when a single data point is exchanged in the underlying training set $\mathcal{S}$
% Typically, $\rho$ depends on the dataset size, and the data partitioning scheme\footnote{
%     %Typically, $\rho$ depends on the dataset size $N$, and the data partitioning scheme. 
%     Let $N$ be the dataset size.
%     For instance, (i) in centralized empirical risk minimization ($n=N$), replacing one sample yields $\rho(N) = 2C$;
%     (ii) in federated learning ($n$ workers, $m$ local samples each, $nm=N$), it gives $\rho(N) = \frac{2C}{m}$.
% }.\autodo{I am not sure I understand this footnote; why $\rho$ depends on $m$ in federated but not on $n$ in centralized case?}

In this section, we formalize that the loss of co-coercivity induced by non-affineness (Theorem~\ref{cor:nococoercivity}) leads to a strictly weaker stability guarantee than those achievable with positively affine aggregation rules. For proving this rigorously, we first show that the algorithmic stability of~\eqref{eq:algo-update} must account for the expansivity.
The proofs of this section are deferred to Appendix~\ref{app:deferred-section-3-2}.

% \tbtodo{\textbf{Mild conditions:}
%     (1) $\operatorname{NonAff}$ verifies Assumption~\ref{assump:one-step-lb-stab-continuous}, and $S_{n,C}(\operatorname{NonAff},\rho)=S_{n,C}(\operatorname{Aff},\rho)$. 
%     (2) $\operatorname{NonAff}$ is discontinuous (more precisely, its maximal discontinuity jump $\Delta = \sup_{p\in B(0, C)}\lim_{\delta \to 0^+} \sup_{x,y \in B(p,\delta)} \|f_{g_{-i}}(x) - f_{g_{-i}}(y)\|_2$ typically scale poorly with problem parameters relative to the sensitivity term, e.g., does not depend on the total dataset size).
% }
\begin{theorem}\label{cor:unavoidable-instability}
    Assume the setting of Theorem~\ref{cor:nococoercivity}, with gradients norm bounded by $C>0$ and neighboring datasets inducing gradient perturbations of at most $\rho > 0$.
    Let $\mathcal{A}_F$ denote the algorithm that outputs $\theta_T$ after $T \geq 1$ iterations of~\eqref{eq:algo-update} with aggregation rule $F$ and step size $\gamma \leq 1/L$.
    Then,
    \begin{equation}\label{eq:USparameter}
        \textstyle \operatorname{Stab}(\mathcal{A}_{F}) \leq \gamma T S_{n, C}(F, \rho) + \mathcal{I}(F),
    \end{equation}
    where $\mathcal{I}(F) \vcentcolon= \sup_{\mathcal{S} \sim \mathcal{S}'}\sum_{t=1}^{T-1} \| \theta_t - \gamma \mathcal{F}(\theta_t) - (\theta_t' - \gamma \mathcal{F}(\theta_t')) \|_2 - \| \theta_t - \theta_t' \|_2$.
    Moreover, there exist convex, $L$-smooth, and $C$-Lipschitz loss functions such that
    \begin{equation}\label{eq:unified-lower-bound}
        \textstyle \operatorname{Stab}(\mathcal{A}_{F}) \geq \frac{1}{3} \left( \gamma T S_{n, C}(F, \rho) + \mathcal{I}(F) \right).
    \end{equation}
\end{theorem}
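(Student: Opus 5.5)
The plan is to prove the upper bound~\eqref{eq:USparameter} by a one-step recursion with telescoping, and the lower bound~\eqref{eq:unified-lower-bound} with an explicit family of affine (linear) losses for which the expansion term vanishes.

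\textbf{Upper bound.} Fix neighboring datasets $\mathcal{S}\sim\mathcal{S}'$, with induced operators $\mathcal{F}$ and $\mathcal{F}'$ and trajectories $(\theta_t)$ and $(\theta'_t)$ started from the same $\theta_0$. Write $\Delta_t := \|\theta_t-\theta'_t\|_2$. Adding and subtracting $\gamma\mathcal{F}(\theta'_t)$ gives
\[
\theta_{t+1}-\theta'_{t+1} = \big(\theta_t-\gamma\mathcal{F}(\theta_t)\big)-\big(\theta'_t-\gamma\mathcal{F}(\theta'_t)\big)+\gamma\big(\mathcal{F}(\theta'_t)-\mathcal{F}'(\theta'_t)\big).
\]
\begin{itemize}
\item The last term compares $F$ on two input tuples in $\widebar B(0,C)^n$. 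These tuples differ in one coordinate by at most $\rho$, so by Definition~\ref{def:local-sensitivity} its norm is at most $\gamma S_{n,C}(F,\rho)$.
\item By the triangle inequality, $\Delta_{t+1}-\Delta_t \le e_t + \gamma S_{n,C}(F,\rho)$, where $e_t$ is the $t$-th summand defining $\mathcal{I}(F)$.
\item Since $\theta_0=\theta'_0$, we have $e_0=0$.
\end{itemize}
Summing over $t=0,\dots,T-1$ yields $\Delta_T \le \gamma T S_{n,C}(F,\rho) + \sum_{t=1}^{T-1} e_t$. Bounding the sum by its supremum over pairs and then taking the supremum over $\mathcal{S}\sim\mathcal{S}'$ on the left gives~\eqref{eq:USparameter}. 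The condition $\gamma\le 1/L$ is not needed here. It only ensures that $e_t\le 0$ in the affine/co-coercive case, so that $\mathcal{I}(F)$ isolates the extra expansion caused by non-affineness, in line with Theorem~\ref{cor:nococoercivity}.

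\textbf{Lower bound.} I would use linear losses $\ell_i(\theta)=g_i^\intercal\theta$ with $\|g_i\|_2\le C$. These are convex, $C$-Lipschitz, and trivially $L$-smooth. Their gradients are constant, so $\mathcal{F}\equiv F(g_1,\dots,g_n)$ is constant, and each map $\theta\mapsto\theta-\gamma\mathcal{F}(\theta)$ is a translation. Consequently $e_t=0$ for every pair of trajectories, and $\mathcal{I}(F)=0$ for this loss family. A neighboring dataset replaces one $g_i$ by some $g_i'$ with $\|g_i-g'_i\|_2\le\rho$. Then
\[
\theta_T-\theta'_T=-\gamma T\big(F(\dots,g_i,\dots)-F(\dots,g'_i,\dots)\big).
\]
Choose $(i,g,g'_i)$ to be a near-maximizer in Definition~\ref{def:local-sensitivity}, attaining at least $\tfrac{1}{3}S_{n,C}(F,\rho)$. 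This gives $\operatorname{Stab}(\mathcal{A}_F)\ge \tfrac13\gamma T S_{n,C}(F,\rho)=\tfrac13\big(\gamma T S_{n,C}(F,\rho)+\mathcal{I}(F)\big)$. If $S_{n,C}(F,\rho)=0$ the claim is trivial.

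\textbf{Main obstacle.} The delicate point is interpretation rather than computation. $\mathcal{I}(F)$ depends on the chosen losses, and the statement asks only for the existence of losses satisfying~\eqref{eq:unified-lower-bound}. The linear construction above therefore suffices formally, but it makes the expansion term vanish. A more informative version would also exhibit losses for which $\mathcal{I}(F)>0$ is genuinely attained. For that I would use the rank-one quadratic construction from the proof of Theorem~\ref{cor:nococoercivity}, so that a co-coercivity violation produces strict expansion at some step. I would then combine the three contributions (initial sensitivity kick, expansion steps, later sensitivity kicks) and lower bound the total by the largest of the three, which is where the factor $\tfrac13$ would naturally come from. The step I expect to be hardest is making the expansion and the sensitivity perturbation add coherently rather than cancel. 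My first attempt would be to align all kicks along the direction $\delta$ used in Step~1 of Theorem~\ref{thm:universal_convexity}.
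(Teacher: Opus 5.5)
Your upper bound is the paper's argument: add and subtract $\gamma\mathcal{F}(\theta'_t)$, apply the triangle inequality, telescope, and take the supremum. You are more careful than the paper in noting that the $t=0$ expansion term vanishes because $\theta_0=\theta'_0$. Your linear-loss construction is exactly the paper's ``second lower bound''. You are right that, read literally, it witnesses the existential claim, since it forces $\mathcal{I}(F)=0$. But that makes the lower bound say nothing about $\mathcal{I}(F)$. The paper's proof establishes more: the factor-$\tfrac13$ bound holds, with the \emph{same} $\mathcal{I}(F)$ as in the upper bound, for any loss class that contains the linear losses. That is why the paper calls this a tight characterization of the expansion term.

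The step you are missing is elementary. Apply the reverse triangle inequality to the same decomposition you used for the upper bound. This gives $\Delta_{t+1}\ge \Delta_t + e_t - \gamma\|\mathcal{F}(\theta'_t)-\mathcal{F}'(\theta'_t)\|_2 \ge \Delta_t + e_t - \gamma S_{n,C}(F,\rho)$ for every pair of trajectories. Telescoping and taking the supremum yield $\operatorname{Stab}(\mathcal{A}_F)\ge \mathcal{I}(F)-\gamma T S_{n,C}(F,\rho)$, for any loss class.

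Now suppose the class also contains your linear losses. Let the near-maximizer slack go to zero rather than stopping at $\tfrac13 S_{n,C}(F,\rho)$. This gives $\operatorname{Stab}(\mathcal{A}_F)\ge \gamma T S_{n,C}(F,\rho)$ for the same $\operatorname{Stab}$ and the same $\mathcal{I}(F)$. Adding twice this inequality to the previous one gives $3\operatorname{Stab}(\mathcal{A}_F)\ge \mathcal{I}(F)+\gamma T S_{n,C}(F,\rho)$.

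So the $\tfrac13$ comes from this weighted combination, not from a maximum over three contributions. The step you expect to be hardest, aligning the kicks along $\delta$ so expansion and sensitivity add coherently, is unnecessary. The worst-case cancellation is at most $\gamma T S_{n,C}(F,\rho)$, and it is paid for by the separate linear-loss bound. Your ``largest of the three'' plan is also not obviously sound as sketched. The expansion terms alone do not lower-bound $\|\theta_T-\theta'_T\|_2$ unless you subtract the sensitivity kicks, which is exactly what the reverse triangle inequality does.
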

\noindent 
In this tight characterization of algorithmic stability, $\mathcal{I}(F)$ quantifies the cumulative effect of the (non-)expansivity of the update~\eqref{eq:algo-update} when using $F$.
Non-expansivity is guaranteed by the co-coercivity of $\F$, i.e., for positively affine aggregation rules $\mathcal{I}(F) \leq 0$, as long as $\gamma \leq \frac{1}{L}$~\citep[Lemma 3.6]{hardt2016train}.
% , we have by~\citet[Lemma 3.6]{hardt2016train}  $\mathcal{I}(F) \leq 0$. 
Moreover, for any positively affine aggregation rule, there exists loss functions such that $\mathcal{I}(F)=0$~\citep[cf.][Theorem 1, or the proof of Theorem~\ref{cor:unavoidable-instability}]{pmlr-v180-zhang22b}.  
However, as shown in Theorem~\ref{cor:nococoercivity}, non-affine aggregation rules violate co-coercivity, resulting in an expansive update.
\begin{corollary}\label{cor:expansivity}
    Let $d>1$, $L>0$. Update~\eqref{eq:algo-update} is non-expansive for all gradients $\{g_i(\cdot)\}_{i\in[n]}$ induced by convex and $L$-Lipschitz smooth functions if and only if $F$ is a positively affine aggregation.
\end{corollary}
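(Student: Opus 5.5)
The plan is to prove the two directions separately, reducing both to Theorem~\ref{cor:nococoercivity} through the standard link between co-coercivity and non-expansiveness of $I-\gamma\F$. Throughout, the step size is fixed to $\gamma \le 1/L$, as in Theorem~\ref{cor:unavoidable-instability}.

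\emph{Sufficiency.} Suppose $F(g_1,\ldots,g_n)=\sum_k\alpha_k g_k + C$ with $\alpha\in\R_+^n$. Then $\F = \nabla\big(\sum_k \alpha_k \ell_k\big) + C$. This is the gradient of a convex function that is $(\sum_k\alpha_k)L$-smooth, so $\F$ is co-coercive with constant $L_\F = L\sum_k\alpha_k$. The constant $C$ disappears in differences, and in the idempotent case $\sum_k\alpha_k=1$. Expanding the square as in the introduction gives
\[
\|\theta-\theta'\|^2 + \gamma^2\|\F(\theta)-\F(\theta')\|^2 - 2\gamma\langle\theta-\theta',\F(\theta)-\F(\theta')\rangle \le \|\theta-\theta'\|^2 - \gamma\bigl(\tfrac{2}{L_\F}-\gamma\bigr)\|\F(\theta)-\F(\theta')\|^2 .
\]
This is at most $\|\theta-\theta'\|^2$ whenever $\gamma\le 2/L_\F$. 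This is exactly \citet[Lemma 3.6]{hardt2016train}. The only bookkeeping needed is that the admissible step range must accommodate $\sum_k\alpha_k$; I would either absorb it into $L$ (normalization) or state the step condition relative to $L_\F$.

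\emph{Necessity.} Assume $F$ is not positively affine. Theorem~\ref{cor:nococoercivity} gives convex $L$-smooth inputs for which co-coercivity fails. It is cleaner, however, to reuse the construction from Step~1 of Theorem~\ref{thm:universal_convexity}. There we exhibit $i$, $u,v$ and a point $\delta$ (rescaled as in the proof of Theorem~\ref{cor:nococoercivity} so that $H\preceq LI$) such that $\langle \F(\delta)-\F(0),\delta\rangle<0$. This is a violation of monotonicity itself, not just of co-coercivity. Plugging it into the identity above yields
\[
\|\delta-\gamma\F(\delta)-(0-\gamma\F(0))\|^2 > \|\delta\|^2 \quad \text{for every } \gamma>0 .
\]
So the update is expansive at the pair $(\delta,0)$ regardless of the step size. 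Step~1 only needs $F$ to fail the cone condition for some single input perturbation. Steps~2--3 of Theorem~\ref{thm:universal_convexity} show that any $F$ satisfying the cone condition everywhere is positively affine, so a non-positively-affine $F$ indeed produces such a perturbation.

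\emph{Main obstacle.} The delicate point is the smoothness rescaling. Replacing $\delta$ by $M\delta$ and $H$ by $H/M$ must preserve both $\nabla\ell_i(0)=v$ and $\nabla\ell_i(M\delta)=u$, together with the strict sign $\langle f(u)-f(v), M\delta\rangle<0$. I expect this to go through because the sign is invariant under positive scaling of $\delta$, and $(H/M)(M\delta)=H\delta=u-v$. A secondary point is the $d>1$ assumption, which is inherited from Steps~2--3.
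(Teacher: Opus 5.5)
Your proposal is correct and follows the paper's route: the paper also feeds the rank-one quadratic construction behind Theorem~\ref{cor:nococoercivity} into the expansion of $\|\theta-\theta'-\gamma(\F(\theta)-\F(\theta'))\|_2^2$. Your two refinements are both valid. Noting that the constructed pair violates monotonicity, not merely co-coercivity, is exactly what justifies the paper's ``for any $\gamma>0$''. Your step-size caveat $\gamma\le 2/(L\sum_k\alpha_k)$ makes precise the ``if'' direction, which the paper does not prove but only attributes to Lemma~3.6 of \citet{hardt2016train}.
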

\textbf{Proof}\;
    By~Theorem~\ref{cor:nococoercivity}, there exist parameters $\theta, \theta'$ that violate co-coercivity~\eqref{eq:operator-cocoercivity}. 
    Thus, for any $\gamma > 0$
    \begin{align*}
        \| \theta - \gamma \F(\theta) - (\theta' - \gamma \F(\theta')) \|_2^2 
        &= \| \theta - \theta' \|_2^2
        + \gamma^2 \| \F(\theta) - \F(\theta') \|_2^2
        - 2 \gamma \langle \theta - \theta', \F(\theta) - \F(\theta') \rangle \\
        &> \| \theta - \theta' \|_2^2. \tag*{\BlackBox}
    \end{align*}
Corollary~\ref{cor:expansivity} already invalidates standard uniform or on-average argument stability proofs based on non-expansiveness. 
Indeed, for non-affine aggregation rules, Corollary~\ref{cor:expansivity} precludes a direct conclusion that $\mathcal{I}(F) \leq 0$. 
Nevertheless, it does not directly imply $\mathcal{I}(F) > 0$. 
% Below, we formally prove that, under mild conditions, we must have $\mathcal{I}(F) > 0$ for non-affine aggregation rules, proving that non-affine aggregations incur an additional stability penalty.
To bridge this gap, we derive a one-step lower bound 
% under mild conditions 
showing that $\mathcal{I}(F) > 0$ for continuous non-affine aggregation rules. 
This implies that practical non-affine aggregations necessarily incur an additional stability penalty.
While the sensitivity term $S_{n, C}(F, \rho)$ can decay with more data, $\mathcal{I}(F)$ remains a bottleneck which can 
% be independent of the total dataset size, or 
scale poorly with hyperparameters such as $T$ (cf.\ discussion at the end of this section).

\begin{assumption}\label{assump:one-step-lb-stab-continuous}
    There exist $i \in [n]$, $g_{-i} \in \widebar B(0, C)^{n-1}$, and $p \in \widebar B(0, C)$ where $f_{g_{-i}}(\cdot) \coloneqq F\left( \ldots, g_{i-1}, \cdot, g_{i+1}, \ldots \right)$ 
    is continuous in a neighborhood of $p$ with realizable non-affineness. Specifically, for any $r > 0$ there exists $q, b_0 \in B(p, r) \cap \widebar B(0, C)$ such that $d$ satisfying $f_{g_{-i}}(b_0) - f_{g_{-i}}(p) = d$ is a separating vector for $p-q$ and $f_{g_{-i}}(p) - f_{g_{-i}}(q)$.\footnote{A vector $d$ is said to be a separating vector for two vectors $u$ and $v$ if $\langle u, d \rangle > 0 $ and $\langle v, d \rangle < 0$. Basically, $d$ represents the normal of a hyperplane separating $u$ and $v$.}
    % and (ii)    
    % That is, there exists $d \in \R^d \setminus \{0\}$ satisfying, for any $r > 0$, 
    % \begin{align*}
    %     \textit{Non-affinity: } & \exists q \in B(p, r) \cap \widebar B(0, C) \text{ such that } \langle p - q, d \rangle > 0 \text{ and } \langle f_{g_{-i}}(p) - f_{g_{-i}}(q), d \rangle < 0 \\
    %     \textit{Realizability: } & \exists b_0 \in B(p, r) \cap \widebar B(0, C) \text{ and } \epsilon > 0 \text{ such that } f_{g_{-i}}(b_0) - f_{g_{-i}}(p) = \epsilon d.
    % \end{align*}
\end{assumption} 
Assumption~\ref{assump:one-step-lb-stab-continuous} basically requires the aggregation rule to simultaneously exhibit non-affineness and have a sufficiently rich image so as to trigger expansivity between near-identical trajectories of the update~\eqref{eq:algo-update}.
Ultimately, the key message is that violations of co-coercivity inevitably give rise to an additional term in algorithmic stability and must therefore be accounted for.
\begin{lemma}\label{thm:1-step-dsep}
    Assume the setting of Theorem~\ref{cor:unavoidable-instability} and Assumption~\ref{assump:one-step-lb-stab-continuous}, with $T=2$. 
    Then, for any $\theta_0 \in \R^d$, there exist convex, $L-$smooth and $C-$Lipschitz loss functions $\{\ell_j\}_{j \in [n]}, \ell'_i$ such that $ \mathcal{I}(F) > 0$.
    % \begin{equation*}
    %     %\|\theta_{2} - \theta'_{2}\|_2 > \sigma_{F,1} > 0,
    %     \mathcal{I}(F) > 0.
    % \end{equation*}
    % where $\sigma_{F,1} \vcentcolon= \| \theta_1 - \gamma \F(\theta_1) - (\theta_1' - \gamma \F(\theta_1')) \|_2  - \| \theta_1 - \theta_1' \|_2$ captures the expansivity of the update.
\end{lemma}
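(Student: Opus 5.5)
With $T=2$, $\mathcal{I}(F)$ reduces to the single term at $t=1$:
\[
\|\theta_1-\gamma\F(\theta_1)-(\theta_1'-\gamma\F'(\theta_1'))\|_2-\|\theta_1-\theta_1'\|_2 .
\]
So it suffices to build one neighboring pair of datasets whose second step is strictly expansive. The plan is to reuse the rank-one quadratic construction from Step~1 of Theorem~\ref{thm:universal_convexity}, localized near the point $p$ of Assumption~\ref{assump:one-step-lb-stab-continuous}.

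\textbf{Choice of data.} Fix the index $i$, the configuration $g_{-i}$ and the point $p$ from the assumption. For $j\neq i$ set $\ell_j(\theta)=g_j^\intercal(\theta-\theta_0)$; these are linear, hence convex, $0$-smooth and $C$-Lipschitz. For index $i$ in $\mathcal{S}$, take
\[
\ell_i(\theta)=\tfrac12(\theta-a)^\intercal H(\theta-a)+p^\intercal(\theta-a),
\]
with $H=L\,\frac{(p-q)(p-q)^\intercal}{\|p-q\|_2^2}$ as in the proof of Theorem~\ref{cor:nococoercivity}. The anchor $a$ is chosen so that the gradient equals $p$ at the appropriate iterate. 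To keep gradients in $\widebar B(0,C)$, I would use a Huber-type truncation of the quadratic along the direction $p-q$. Since $q,b_0$ can be taken in $B(p,r)$ with $r$ small, the iterates only explore an $O(r)$ neighborhood, and the truncation is inactive there. For the neighbor $\mathcal{S}'$, replace $\ell_i$ by $\ell_i'$, chosen to create a small first-step discrepancy $\theta_1-\theta_1'=\delta$ in a prescribed direction, with $\|\nabla\ell_i-\nabla\ell_i'\|\le\rho$ (possible once $r$ is small).

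\textbf{Steering the first step.} The first step should produce $\theta_1-\theta_1'=\eta\,\delta$, with $\eta>0$ small and $\delta$ the separating direction used in Step~1: $\langle p-q,\delta\rangle>0$ and $\langle f_{g_{-i}}(p)-f_{g_{-i}}(q),\delta\rangle<0$. The realizability clause of Assumption~\ref{assump:one-step-lb-stab-continuous} does exactly this. Take $\ell_i'$ to differ from $\ell_i$ by a linear term so that at $\theta_0$ the $i$-th input is $p$ under $\mathcal{S}$ and $b_0$ under $\mathcal{S}'$. Then
\[
\theta_1-\theta_1'=\gamma\bigl(f_{g_{-i}}(b_0)-f_{g_{-i}}(p)\bigr)=\gamma d .
\]
Because $d$ separates $p-q$ and $f_{g_{-i}}(p)-f_{g_{-i}}(q)$, $d$ can play the role of $\delta$. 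Next, place the anchor and scale $H$ so that at $\theta_1$ (for $\mathcal{S}$) the $i$-th input is $q$, and at $\theta_1'$ (for $\mathcal{S}'$) it is $p$. This requires $H(\theta_1'-\theta_1)$ to equal $p-q$ up to sign conventions. Concretely, rescale $H$ along $p-q$ using the positive inner product $\langle p-q,d\rangle$, as in~\ref{item:def_R_i}, and control its norm by taking $q$ close enough to $p$ relative to $\gamma\|d\|$. I expect the delicate point here is that $\ell_i$ must be the same function at both $\theta_0$ and $\theta_1$, so the $\mathcal{S}'$ modification has to be a separate function. I would handle this by letting $\ell_i$ and $\ell_i'$ share $H$ and differ only in their linear parts. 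Continuity of $f_{g_{-i}}$ near $p$ absorbs the $O(r)$ mismatches.

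\textbf{Concluding expansivity.} With this configuration,
\[
\langle \theta_1-\theta_1',\F(\theta_1)-\F'(\theta_1')\rangle=\gamma\langle d, f_{g_{-i}}(q)-f_{g_{-i}}(p)\rangle\cdot(\pm1).
\]
Orienting signs so that this is the negative quantity from the separation, the expansion identity in the proof of Corollary~\ref{cor:expansivity} gives
\[
\|\theta_2-\theta_2'\|_2^2>\|\theta_1-\theta_1'\|_2^2,
\]
and hence $\mathcal{I}(F)>0$. The main obstacle is the simultaneous bookkeeping: one quadratic must hit prescribed gradients at two distinct iterates, the neighbor perturbation must stay within $\rho$, all gradients must stay in $\widebar B(0,C)$, and $\|H\|\le L$ must hold. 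My first attempt would be to shrink $r$ (using the ``for any $r>0$'' clause) until all these constraints hold at once, with continuity providing the slack.
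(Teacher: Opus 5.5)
There is a genuine gap: your final step bounds the wrong quantity, and your steering prescription produces the wrong sign.

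\textbf{Wrong quantity.} Your reduction of $\mathcal{I}(F)$ at $T=2$ is not the quantity defined in Theorem~\ref{cor:unavoidable-instability}. That definition evaluates the \emph{same} operator $\F$ (the one of $\mathcal{S}$) at both $\theta_1$ and $\theta_1'$. It is the expansivity term $\sigma_{F,1}=\|\theta_1-\gamma\F(\theta_1)-(\theta_1'-\gamma\F(\theta_1'))\|_2-\|\theta_1-\theta_1'\|_2$. The discrepancy $\gamma(\F(\theta_1')-\F'(\theta_1'))$ is the sensitivity term, which the proof of that theorem deliberately separates out. Your last step shows $\|\theta_2-\theta_2'\|_2>\|\theta_1-\theta_1'\|_2$, which is only growth of the trajectory gap. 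The arithmetic mean with two different linear losses already achieves this through sensitivity alone (there $\sigma_{F,1}=0$), so it says nothing about $\sigma_{F,1}$. In your construction the two quantities genuinely differ: $\ell_i'-\ell_i$ is a nonzero linear term, so $\nabla\ell_i'(\theta_1')\neq\nabla\ell_i(\theta_1')$ and $\F'(\theta_1')\neq\F(\theta_1')$ in general. The paper makes them coincide by giving $\ell_i'$ the Hessian $H_0=H_1+\Delta$, with a rank-two symmetric $\Delta$ chosen so that $\nabla\ell_i'(\theta_1')=\nabla\ell_i(\theta_1')$. For the lemma as stated, showing $\sigma_{F,1}>0$ directly also suffices.

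\textbf{Wrong sign.} You prescribe input $q$ at $\theta_1$ and input $p$ at $\theta_1'$. Since $\theta_1-\theta_1'=\gamma d$ and $\langle f_{g_{-i}}(p)-f_{g_{-i}}(q),d\rangle<0$, your inner product $\gamma\langle d,f_{g_{-i}}(q)-f_{g_{-i}}(p)\rangle$ is \emph{positive}, which is the non-expansive sign. You have no freedom to ``orient signs'': exchanging the roles of $\mathcal{S}$ and $\mathcal{S}'$ flips both factors and leaves the inner product unchanged. The prescription is also over-determined. With $\nabla\ell_i(\theta_0)=p$, $\nabla\ell_i'=\nabla\ell_i+(b_0-p)$ and a common p.s.d.\ $H$, it requires both $\gamma Hf_{g_{-i}}(p)=p-q$ and $\gamma Hf_{g_{-i}}(b_0)=b_0-p$. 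A rank-one $H$ along $p-q$ cannot satisfy these unless $b_0-p$ is collinear with $p-q$ and $\langle p-q,f_{g_{-i}}(p)\rangle>0$.

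\textbf{Missing idea.} The orientation is dictated by monotonicity of the single convex function $\ell_i$. Keep $\nabla\ell_i(\theta_0)=p$ and $\nabla\ell_i'(\theta_0)=b_0$, and take $H_1=\frac{(p-q)(p-q)^\intercal}{\gamma\langle p-q,d\rangle}$, so that $\gamma H_1 d=p-q$. Then $\nabla\ell_i(\theta_1)=p-\gamma H_1f_{g_{-i}}(p)$ and $\nabla\ell_i(\theta_1')=q-\gamma H_1f_{g_{-i}}(p)$. So under the same operator $\F$, the $i$-th input is close to $p$ at $\theta_1$ and close to $q$ at $\theta_1'$. Continuity of $f_{g_{-i}}$ near $p$, with $H_1$ small, then gives $\langle\theta_1-\theta_1',\F(\theta_1)-\F(\theta_1')\rangle\approx\gamma\langle d,f_{g_{-i}}(p)-f_{g_{-i}}(q)\rangle<0$. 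Hence $\sigma_{F,1}>0$ and $\mathcal{I}(F)>0$.

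The remaining bookkeeping you list ($\|H\|\le L$, the $\rho$ and $C$ constraints) is handled in the paper by the same shrinking of $r_q,r_b$, so that is not where your argument breaks.
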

\noindent 
The construction of Lemma~\ref{thm:1-step-dsep}, deferred in Appendix~\ref{app:deferred-section-3-2}, is necessary to prove degraded stability for continuous aggregation rules, as they typically have a similar sensitivity than their affine counterparts.
In contrast, this construction is not necessary for discontinuous aggregation rules, which degrade stability even without violating co-coercivity, although such violations can still occur.
Indeed, discontinuities can prevent the sensitivity term $S_{n, C}(F, \rho)$ from decaying as the amount of data increases, yielding a bottleneck that dominates the sensitivity of their positively affine counterparts.
Such discontinuities naturally arise in robust aggregation rules~\citep{diakonikolas2019robust,blanchard-ml-with-adversaries,pmlr-v206-allouah23a,pmlr-v202-allouah23a,allouah2025towards}.
% \tb{(e.g., robust aggregation rules such as the $\operatorname{filter}$ algorithm~\citealt{diakonikolas2019robust}, $\operatorname{SMEA}$~\citealt{pmlr-v202-allouah23a}, $\operatorname{CAF}$~\citealt{allouah2025towards}, $\operatorname{Krum}$~\citealt{blanchard-ml-with-adversaries} or $\operatorname{NNM}$~\citealt{pmlr-v206-allouah23a})}.
Together, our framework encompasses practical non-affine aggregation rules and accounts for degraded algorithmic stability.
% We now state the following corollary, which follows directly from the preceding discussion, Theorem~\ref{cor:unavoidable-instability}, and Lemma~\ref{thm:1-step-dsep}.
\begin{corollary}\label{th:greaterinstability}
    Let $\operatorname{NonAff}$ and $\operatorname{Aff}$ be non-affine and positively affine aggregation rules.
    Then, 
    \begin{equation*}
        \operatorname{Stab}(\mathcal{A}_{\operatorname{NonAff}}) > \operatorname{Stab}(\mathcal{A}_{\operatorname{Aff}}),
    \end{equation*} 
    provided that (1) $\operatorname{NonAff}$ verifies Assumption~\ref{assump:one-step-lb-stab-continuous}, while having comparable baseline sensitivity (e.g., $S_{n,C} (\operatorname{NonAff},\rho) = S_{n,C}(\operatorname{Aff},\rho)$); 
    or (2)  $\operatorname{NonAff}$ is discontinuous, with a discontinuity jump\footnotemark\ outscaling the baseline sensitivity $S_{n,C}(\operatorname{Aff},\rho)$ (e.g., it does not decay with the dataset size).
\end{corollary}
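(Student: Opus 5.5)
We will prove Corollary~\ref{th:greaterinstability} by sandwiching the two stability quantities with Theorem~\ref{cor:unavoidable-instability}, which gives matching upper and lower bounds up to a factor of $3$. Lemma~\ref{thm:1-step-dsep} and the discontinuity discussion then show that the non-affine rule carries a strictly positive extra term.

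First, for $\operatorname{Aff}$, co-coercivity holds by Theorem~\ref{cor:nococoercivity}. Hence the update is non-expansive for $\gamma\le 1/L$ (\citealp[Lemma 3.6]{hardt2016train}), so $\mathcal{I}(\operatorname{Aff})\le 0$. The upper bound~\eqref{eq:USparameter} then gives
\[
\operatorname{Stab}(\mathcal{A}_{\operatorname{Aff}})\le \gamma T S_{n,C}(\operatorname{Aff},\rho).
\]
For $\operatorname{NonAff}$, we use the lower bound~\eqref{eq:unified-lower-bound} on the hard instance. The loss construction should be arranged so that the instance realizing the sensitivity term and the instance realizing $\mathcal{I}>0$ from Lemma~\ref{thm:1-step-dsep} coexist. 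The plan is to place the perturbed sample so that its effect is felt from the first step, and to embed the two-step expansive configuration of the lemma into the trajectory, e.g.\ in orthogonal coordinates, possible since $d>1$.

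\emph{Case (1).} Assumption~\ref{assump:one-step-lb-stab-continuous} holds and $S_{n,C}(\operatorname{NonAff},\rho)=S_{n,C}(\operatorname{Aff},\rho)$. Lemma~\ref{thm:1-step-dsep} yields $\mathcal{I}(\operatorname{NonAff})>0$. For $T\ge 2$, the expansion cannot be undone by later steps on our construction; we take the remaining losses linear so that later updates are translations. Then
\[
\operatorname{Stab}(\mathcal{A}_{\operatorname{NonAff}})\ge \gamma T S + \mathcal{I}(\operatorname{NonAff}) > \gamma T S \ge \operatorname{Stab}(\mathcal{A}_{\operatorname{Aff}}).
\]
The first inequality is obtained by exhibiting an explicit pair of trajectories whose divergence adds the sensitivity contribution and the expansion. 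This is sharper than the generic $1/3$ factor, so we argue directly on the construction rather than quoting~\eqref{eq:unified-lower-bound} verbatim.

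\emph{Case (2).} $\operatorname{NonAff}$ has a discontinuity jump $\Delta$ at some configuration. Neighboring datasets can straddle the discontinuity with arbitrarily small gradient perturbation $\rho$. This gives $S_{n,C}(\operatorname{NonAff},\rho)\ge \Delta$, so
\[
\operatorname{Stab}(\mathcal{A}_{\operatorname{NonAff}})\ge \tfrac13\gamma T\Delta.
\]
Since $\Delta$ is assumed to outscale $S_{n,C}(\operatorname{Aff},\rho)$, with a margin exceeding the factor $3$ (e.g.\ $\Delta$ does not decay with dataset size while $S_{n,C}(\operatorname{Aff},\rho)=O(\rho)\to 0$), the strict inequality follows in the stated regime.

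The main obstacle is Case (1): turning the $1/3$-lossy lower bound into a strict comparison. Here I would first try to redo the lower-bound construction of Theorem~\ref{cor:unavoidable-instability} with the two contributions in orthogonal directions. The squared distances then add:
\[
\|\theta_T-\theta_T'\|^2 \ge (\gamma T S)^2 + \text{(positive expansion)},
\]
which is strictly above the affine upper bound $(\gamma T S)^2$. This avoids constant-factor losses entirely.
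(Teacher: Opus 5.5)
Your bound for $\operatorname{Aff}$ matches the paper. You are also right that the factor $\tfrac13$ in \eqref{eq:unified-lower-bound} blocks a literal strict inequality. However, the sharpening you propose for case (1) does not go through.

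The inequality $\operatorname{Stab}(\mathcal{A}_{\operatorname{NonAff}})\ge\gamma T S+\mathcal{I}(\operatorname{NonAff})$ would turn the upper bound \eqref{eq:USparameter} into an equality. It needs a single pair of neighboring datasets that realizes a near-supremal sensitivity, aligned with the current divergence, at every one of the $T$ steps, while also realizing the expansion. Nothing provides this, and Lemma~\ref{thm:1-step-dsep} works the opposite way:
\begin{itemize}
\item The expansive step is exactly a step with \emph{zero} sensitivity. Indeed $\mathcal{F}'(\theta_1')=\mathcal{F}(\theta_1')$, and this is what removes the triangle-inequality slack.
\item Both the step-$0$ contribution $\gamma\|\delta\|$ and $\sigma_{F,1}$ are controlled by the oscillation of $f_{g_{-i}}$ near $p$. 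The construction drives this to $0$ by shrinking $r_q,r_b$, so the resulting divergence does not come close to $2\gamma S$.
\end{itemize}

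The orthogonal-coordinates fallback fails for a structural reason. $F$ acts on whole vectors in $(\R^d)^n$. The near-supremal-sensitivity tuple and the tuple $(g_{-i},p,q,b_0)$ of Assumption~\ref{assump:one-step-lb-stab-continuous} are different inputs, possibly with different $i$ and $g_{-i}$. Feeding $F$ their coordinate-wise superposition gives, for a general non-affine $F$, an output that is not the sum of the two responses. That kind of superposition holds only for coordinate-wise rules such as $\operatorname{CWTM}$.

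Similarly, the losses are fixed along the run. You cannot make them ``linear later'' while keeping the quadratic $\ell_i,\ell_i'$ the lemma needs. A linear tail would in any case sit at the lemma's $g_{-i}$, not at a near-supremal sensitivity configuration. A literal strict inequality therefore needs a quantitative ingredient that Assumption~\ref{assump:one-step-lb-stab-continuous} does not supply: an expansion at least comparable to the sensitivity $\gamma S$ forgone on the expansive step.

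The paper does not attempt this sharpening. It reads the corollary off Theorem~\ref{cor:unavoidable-instability} as a characterization $\gamma T S+\mathcal{I}$ that is tight up to the factor $3$:
\begin{itemize}
\item $\mathcal{I}(\operatorname{Aff})\le 0$, with $\mathcal{I}=0$ attained by linear losses, so the worst-case $\operatorname{Stab}(\mathcal{A}_{\operatorname{Aff}})$ is $\gamma T S$.
\item Lemma~\ref{thm:1-step-dsep} gives $\mathcal{I}(\operatorname{NonAff})>0$, so with equal $S$ the non-affine rule carries a strictly larger stability term.
\end{itemize}
If you want more than that reading, you must add the quantitative hypothesis above.

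In case (2) you lose a needless factor of $3$. The proof of Theorem~\ref{cor:unavoidable-instability} contains the factor-free bound \eqref{eq:lb-sensitivity}, $\operatorname{Stab}(\mathcal{A}_F)\ge\gamma T S_{n,C}(F,\rho)$, obtained from constant-gradient losses. Combined with $S_{n,C}(\operatorname{NonAff},\rho)\ge\Delta$, it gives $\operatorname{Stab}(\mathcal{A}_{\operatorname{NonAff}})\ge\gamma T\Delta>\gamma T S_{n,C}(\operatorname{Aff},\rho)\ge\operatorname{Stab}(\mathcal{A}_{\operatorname{Aff}})$ as soon as $\Delta>S_{n,C}(\operatorname{Aff},\rho)$. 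No margin condition is needed.
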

% \vspace{0.5\topsep}
% \end{mdframed}
% \footnotetext{With comparable baseline sensitivity, e.g., $S_{n,C}(\operatorname{NonAff},\rho)=S_{n,C}(\operatorname{Aff},\rho)$.}
\footnotetext{More precisely, its maximal discontinuity jump 
    $\Delta = \max_{i \in [n]} \sup_{g \in \widebar B(0, C)^{n}} \lim_{\delta \to 0^+} \sup_{x,y \in B(g_i,\delta)} \|f_{g_{-i}}(x) - f_{g_{-i}}(y)\|_2$
    %$\Delta = \sup_{i \in [n], g_{-i} \in \widebar B(0, C)^{n-1}, p\in B(0, C)}\lim_{\delta \to 0^+} \sup_{x,y \in B(p,\delta)} \|f_{g_{-i}}(x) - f_{g_{-i}}(y)\|_2$ 
    scales poorly with problem parameters relative to the baseline sensitivity $S_{n,C}(\operatorname{Aff},\rho)$. %(e.g., it does not decay with the total dataset size).
    %This behavior is standard in practice. 
    %For instance, discontinuous rules broadly encompass methods in robust learning (e.g., the $\operatorname{filter}$ algorithm~\citealt{diakonikolas2019robust}, $\operatorname{SMEA}$~\citealt{pmlr-v202-allouah23a}, $\operatorname{CAF}$~\citealt{allouah2025towards}, $\operatorname{Krum}$~\citealt{blanchard-ml-with-adversaries} or $\operatorname{NNM}$~\citealt{pmlr-v206-allouah23a})
}
% The stated conditions are mild and standard in practice. 
% For instance, discontinuous rules broadly encompass methods in robust learning for which discontinuity induces the sensitivity term $S_{n, C}(F, \rho)$ to not decay nicely with more data.
%, and hence remains a bottleneck independent of the total dataset size 
% (e.g., the $\operatorname{filter}$ algorithm~\citealt{diakonikolas2019robust}, $\operatorname{SMEA}$~\citealt{pmlr-v202-allouah23a}, $\operatorname{CAF}$~\citealt{allouah2025towards}, $\operatorname{Krum}$~\citealt{blanchard-ml-with-adversaries} or $\operatorname{NNM}$~\citealt{pmlr-v206-allouah23a}).
% Whereas continuous non-affine rules represent the complement set of non-affine aggregation rules, and typically have a similar sensitivity than their affine counterparts.
Theorem~\ref{cor:unavoidable-instability}, Lemma~\ref{thm:1-step-dsep} and Corollary~\ref{th:greaterinstability} collectively show that the update rule~\eqref{eq:algo-update}, when equipped with a practical non-affine aggregation rule, necessarily incurs an additional instability term. 
Quantifying the order of this penalty is problem-specific and is left to future work tailored to particular applications.
This penalty is consistent with and explains similar instability overheads reported in the literature.
In robust distributed learning, \citet[Theorem 3.4]{boudou2025generalization} show an unavoidable stability penalty which outscales the affine baseline.
Similarly, for (S)GD with momentum,~\citet[Theorem 3]{JMLR:v25:22-0068stabilitymomentum} identify an additional instability term that grows unfavorably with the number of steps, consistent with lower bounds for accelerated methods \citep{attia2021algorithmic}.
Finally, \citet{bassily2019private} establish a tight instability overhead when non-smoothness precludes co-coercivity, even under positively affine aggregation.
Because this overhead accumulates over iterations, it likely contributes to degraded generalization performance.
We postpone a broader review of prior related work in Section~\ref{related-work}.

\section{Designing Non-Affine Aggregations for Restricted Inputs---A Positive Result}\label{sec:positive-result-tm} 
The impossibility result of Section~\ref{sec:impossinility} relies on the construction of convex loss functions with symmetric positive semidefinite rank-1 Hessians. 
When such functions are excluded from the set of possible convex loss functions, monotonicity-preserving non-affine rules may become possible.
In Appendix~\ref{sec:alternative-monotonone}, we provide an alternative proof of Theorem~\ref{thm:universal_convexity}  that replaces the contradiction-based argument with an algebraic characterization (Lemma~\ref{lem:psd-crux}), which gives further actionable insights for identifying non-affine rules that preserve monotonicity for restricted classes of convex functions.
Specifically, once the class of admissible convex functions is fixed (and thus the associated Hessian family $\mathcal{H}$), solving the algebraic condition from Lemma~\ref{lem:psd-crux} yields necessary conditions that an aggregation rule must satisfy to preserve monotonicity.
That is, the Jacobian of $\F$ must be of the following form.
\begin{mdframed}[backgroundcolor=gray!5]
    \vspace{0.5\topsep}
    \textbf{Algebraic condition from Lemma~\ref{lem:psd-crux}.}
    \emph{Given a matrix set $\mathcal{H} \subset \R^{d \times d}$, find $D$ such that $\forall H \in \mathcal{H}$, $DH \succeq 0$.
    That is, $\forall H \in \mathcal{H}, \forall v \in \R^d, v^\intercal DH v = v^\intercal \left(\frac{DH + (DH)^\intercal}{2}\right) v \geq 0$.}
    \vspace{0.5\topsep}
\end{mdframed}
This leads to the following general recipe: (1) Check whether the function class admits arbitrary rank-1 p.s.d. Hessians; (2) If not, solve the algebraic condition in Lemma~\ref{lem:psd-crux} under the restricted set of functions; (3) Use the resulting characterization to derive constraints on aggregation rules that preserve monotonicity. 
% \batodo{Before reading the illutration, this last step is not understandable}
 
As an illustration, consider problems where gradients are coordinate-wise separable, such as standard least-squares objectives. 
These functions take the form $\ell = \sum_{k=1}^d \ell_k$, where each $\ell_k$ is a univariate function acting solely on the $k$-th coordinate, $k\in\{1, \ldots, d\}$.
In this case, admissible Hessians are diagonal, which excludes arbitrary rank-1 p.s.d. matrices. 
% Solving Lemma~\ref{lem:psd-crux} then shows that any monotonicity-preserving aggregation must operate coordinate-wise. 
Solving the algebraic condition from Lemma~\ref{lem:psd-crux} for this restricted Hessian class shows that the admissible aggregation Jacobians are diagonal. 
Therefore, any monotonicity-preserving aggregation must operate coordinate-wise.
Consistent with this prediction, we show below that Coordinate-Wise Trimmed Mean ($\operatorname{CWTM}$) preserves monotonicity under coordinate-wise separability. 
$\mathrm{CWTM}$ is particularly useful in robust learning as an instance of $(f, \kappa)$-robust aggregation with an optimal asymptotic robustness coefficient~\citep[Definition 2 \& Proposition 2]{pmlr-v206-allouah23a}.

\begin{definition}\label{cwtm-definition}
    Given $n \in \mb{N}$, $f < n/2$, and real values $x_1, \ldots, x_n$, let $\sigma$ be a permutation such that $x_{\sigma(1)} \leq \cdots \leq x_{\sigma(n)}$, with $S_x=\{i \in [n]; f+1 \leq \sigma(i) \leq n-f\}$, then
    \[
        \operatorname{TM}(x_1, \ldots, x_n) 
        = \frac{1}{n-2f} \sum_{i=f+1}^{n-f} x_{\sigma(i)} 
        = \frac{1}{n-2f} \sum_{i \in S_x} x_i
    \]
    Denote ${\left[ \cdot \right]}_k$ the $k-$th coordinate.
    We define $\mathrm{CWTM}$ for vectors $x_1, \ldots, x_n \in \mb{R}^d$ as the coordinate-wise application of $\mathrm{TM}$.
    That is $\operatorname{CWTM}(x_1, \ldots, x_n)$ is the vector whose $k-th$ coordinate is
    \[
        {[\operatorname{CWTM}(x_1, \ldots, x_n)]}_k = \operatorname{TM}({[x_1]}_k, \ldots, {[x_n]}_k)
    \]
\end{definition}
\begin{lemma}\label{lemma-cwtm-1d-nonexpansive}
    For input gradients induced by coordinate-wise separable convex functions, $\operatorname{CWTM}$ is monotone. 
    If, in addition, the functions are $L$-smooth, then $\operatorname{CWTM}$ is $\tfrac{1}{L}$-co-coercive.
    % Assume coordinate-wise separable convex and $L-$smooth functions, then $\operatorname{CWTM}$ is $\frac{1}{L}$-co-coercive.
    % Let $G^{\mathrm{CWTM}}_{\gamma}$ the update~\eqref{eq:algo-update} when $F=\operatorname{CWTM}$. 
    % Assume that for all $z \in \mc{Z}$, the loss function $\ell(\cdot;z)$ is convex and $L$-smooth. 
    % If $\gamma \leq \frac{2}{L}$ then $G^{\mathrm{CWTM}}_{\gamma}$ is $1$-expansive. 
\end{lemma}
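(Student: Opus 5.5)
Because both CWTM and the inputs act coordinate by coordinate, the plan is to reduce everything to one dimension. Coordinate-wise separability means each input has the form $g_i(\theta) = (\ell_{i,1}'([\theta]_1), \ldots, \ell_{i,d}'([\theta]_d))$, where each $\ell_{i,k}$ is a univariate convex function, so $[g_i(\theta)]_k$ depends only on $[\theta]_k$. Hence
\[
[\F(\theta)]_k = \operatorname{TM}\big(\ell_{1,k}'([\theta]_k), \ldots, \ell_{n,k}'([\theta]_k)\big) =: \phi_k([\theta]_k),
\]
and $\langle \theta-\omega, \F(\theta)-\F(\omega)\rangle = \sum_k ([\theta]_k-[\omega]_k)(\phi_k([\theta]_k)-\phi_k([\omega]_k))$. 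Monotonicity of $\F$ therefore follows once each scalar map $\phi_k:\R\to\R$ is shown to be nondecreasing. Co-coercivity follows once each $\phi_k$ is also $L$-Lipschitz, because a nondecreasing $L$-Lipschitz scalar map satisfies $(x-y)(\phi(x)-\phi(y)) \ge \frac1L(\phi(x)-\phi(y))^2$, and summing this over $k$ gives the $\frac1L$-co-coercivity in~\eqref{eq:operator-cocoercivity}.

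\textbf{Step 1: order statistics inherit monotonicity.} Fix a coordinate $k$ and write $h_i := \ell_{i,k}'$. Each $h_i$ is nondecreasing, since the derivative of a univariate convex function is monotone. For $j \in [n]$, let $x_{(j)}(t)$ be the $j$-th smallest of $h_1(t), \ldots, h_n(t)$. If $t\le s$, then $h_i(t)\le h_i(s)$ for every $i$, and order statistics are coordinatewise monotone in their arguments, so $x_{(j)}(t)\le x_{(j)}(s)$. Since $\phi_k(t) = \frac{1}{n-2f}\sum_{j=f+1}^{n-f} x_{(j)}(t)$ is an average of nondecreasing functions, it is nondecreasing, which gives monotonicity. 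If the gradients are not single-valued at kinks, I would simply assume differentiability, matching the paper's use of gradients.

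\textbf{Step 2: Lipschitz bound.} Under $L$-smoothness each $h_i$ is $L$-Lipschitz. Each order statistic is a $1$-Lipschitz function of the vector $(h_1(t), \ldots, h_n(t))$ in the $\ell_\infty$ norm, so
\[
|x_{(j)}(t)-x_{(j)}(s)| \le \max_i |h_i(t)-h_i(s)| \le L|t-s|.
\]
Averaging preserves this bound, so $\phi_k$ is $L$-Lipschitz. I would prove the $\ell_\infty$ $1$-Lipschitz property directly: if $\|a-b\|_\infty\le\varepsilon$, then at least $j$ entries of $b$ are $\le a_{(j)}+\varepsilon$, hence $b_{(j)}\le a_{(j)}+\varepsilon$, and the reverse inequality follows symmetrically.

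\textbf{Step 3: combine.} Apply the scalar inequality $(x-y)(\phi(x)-\phi(y))\ge \frac1L(\phi(x)-\phi(y))^2$, which holds for nondecreasing $L$-Lipschitz $\phi$ because $\phi(x)-\phi(y)$ has the sign of $x-y$ and magnitude at most $L|x-y|$. Summing over coordinates yields the claim.

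The main obstacle is the rigorous treatment of order statistics. The permutation $\sigma$, and hence the trimmed set $S_x$, changes with $t$, so $\operatorname{TM}$ cannot be differentiated through a fixed index set. The plan avoids this entirely by working with the monotonicity and $\ell_\infty$-Lipschitzness of sorted values, which require no selection of indices and are insensitive to ties.
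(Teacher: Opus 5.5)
Your proof is correct. It uses the same reduction as the paper, to the scalar maps $\phi_k$, but establishes the key scalar property through a different lemma.

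The paper invokes Lemma~\ref{trimmed-mean-lowerbound}, the sandwich $[x-x']_\beta \le \mathrm{TM}(x)-\mathrm{TM}(x') \le [x-x']_\alpha$, which it proves via translation equivariance and a case analysis. It then transfers the sign and the $L$-Lipschitz bound from the witnessing inputs $\ell'_\alpha,\ell'_\beta$ to the trimmed mean. You instead show directly that $\phi_k$ is nondecreasing and $L$-Lipschitz. You do this using that order statistics are nondecreasing under the componentwise order and $1$-Lipschitz in $\ell_\infty$, and then conclude with the elementary inequality for nondecreasing $L$-Lipschitz scalar maps.

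Your route has two advantages. It avoids the case analysis and handles ties automatically. It also shows that the lemma holds for any coordinate-wise rule that is monotone and $\ell_\infty$-nonexpansive in its inputs, e.g.\ the coordinate-wise median or any average of order statistics with nonnegative weights summing to one.

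The two arguments are closely linked. Your Step~1 plus translation equivariance recovers the paper's sandwich lemma in one line, since $x' + (\min_i [x-x']_i)\mathbf{1} \le x \le x' + (\max_i [x-x']_i)\mathbf{1}$ componentwise. The paper's index-based formulation, for its part, is stated so it can be reused as a standalone tool in Lemma~\ref{lemmarobustexpansivity}. Your $\ell_\infty$-nonexpansiveness of order statistics would serve equally well there.
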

\begin{proof}
% \textbf{Proof}\;
    Let $\theta, \omega \in \mb{R}^d$ and $z^{(1)}, \ldots, z^{(n)} \in \mc{Z}$. 
    We denote $\ell'(\cdot)$ the derivative of $\ell(\cdot)$.
    Because the functions are coordinate-wise separable, the $d$-dimensional case follows directly from the scalar case. 
    Therefore, it suffices to prove the result for $d=1$. 

    In this case, we can leverage the co-coercivity property of the loss function.
    In fact, due to Lemma~\ref{trimmed-mean-lowerbound} (whose proof is deferred to Appendix~\ref{tm}), there exist indices $\alpha, \beta \in \{1, \ldots, n \}$ such that
    \begin{align} 
        \ell'_\beta (\theta) - \ell'_\beta (\omega) \leq \mathrm{TM}(\ell'_1 (\theta), \ldots, \ell'_n (\theta)) - \mathrm{TM}(\ell'_1 (\omega), \ldots, \ell'_n (\omega)) \leq \ell'_\alpha (\theta) - \ell'_\alpha (\omega) . \label{eqn:a-b-bnd}
    \end{align}
    Therefore, there exists $i \in \{\alpha, \beta\}$ such that
    \begin{equation}\label{eq:monotonicity-tm}
        (\theta - \omega) \left( \mathrm{TM}(\ell'_1 (\theta), \ldots, \ell'_n (\theta)) - \mathrm{TM}(\ell'_1 (\omega), \ldots, \ell'_n (\omega)) \right) 
        \geq (\theta - \omega) (\ell'_i (\theta) - \ell'_i (\omega)) 
        \geq 0,
    \end{equation}
    where we used the monotonicity of $\ell_i'$. %(cf. Lemma~\ref{lem:cocoercivity}). 
    Again, due to~\eqref{eqn:a-b-bnd}, there exists $j \in \{\alpha, \beta\}$ such that 
    \begin{multline*}
        \left( \mathrm{TM}(\ell'_1 (\theta), \ldots, \ell'_n (\theta)) - \mathrm{TM}(\ell'_1 (\omega), \ldots, \ell'_n (\omega)) \right)^2 \\
        \leq | \mathrm{TM}(\ell'_1 (\theta), \ldots, \ell'_n (\theta)) - \mathrm{TM}(\ell'_1 (\omega), \ldots, \ell'_n (\omega)) | | \ell'_j (\theta) - \ell'_j (\omega) | \\
        \leq L | \mathrm{TM}(\ell'_1 (\theta), \ldots, \ell'_n (\theta)) - \mathrm{TM}(\ell'_1 (\omega), \ldots, \ell'_n (\omega)) | | \theta - \omega | \\
        = L \left( \mathrm{TM}(\ell'_1 (\theta), \ldots, \ell'_n (\theta)) - \mathrm{TM}(\ell'_1 (\omega), \ldots, \ell'_n (\omega)) \right) \left( \theta - \omega \right), %\tag*{\BlackBox}
    \end{multline*}
    where we now used the smoothness of $\ell_j$, alongside~\eqref{eq:monotonicity-tm}. 
\end{proof}
Consequently, under coordinate-wise separable convex and smooth objectives, update~\eqref{eq:algo-update} with $\operatorname{CWTM}$ inherits the strong algorithmic stability guarantees of~\citet{hardt2016train}.
The update also enjoys last-iterate convergence and fast $\mathcal{O}(\frac{1}{T})$ rates.
This contrasts with the general convex and smooth setting, where, as illustrated in Remark~\ref{cwtm-2d-expansivity} and established in Corollary~\ref{cor:expansivity}, non-expansiveness of the update cannot be guaranteed.

Finally, in Appendix~\ref{tm}, we further prove that $\operatorname{CWTM}$ is Lipschitz continuous, a property that can be exploited in light of Remark~\ref{rem:violation-caped}.
Leveraging this result, in Appendix~\ref{deferred-sec-5} we derive algorithmic stability guarantees for robust distributed learning in smooth nonconvex settings that improve upon existing bounds~\cite[Appendix E]{boudou2025generalization}.

\section{Implications of Monotonicity Violations in Related Work}\label{related-work}
Below, we review the extensive literature in which our main theoretical result has an impact and how the violation of monotonicity manifests in those domains.

\paragraph{Accelerated and adaptive optimization methods.}
Adaptive methods like $\operatorname{Adam}$ \citep{kingma2017adammethodstochasticoptimization} employ non-linear scaling to accelerate training, yet they are known to fail to converge even on simple convex problems~\citep{reddi2018convergenceadam,luoadaptive,zhang2022adam}. 
Interestingly, the loss of monotonicity has been explicitly noted for the original $\operatorname{AdaGrad}$ algorithm~\citep[Example 1]{li2019convergence}.
While he authors argue that the loss of monotonicity can be handled under convexity, this is only due to their focus on a weaker notion of convergence, namely that of the (weighted) average of the iterates over all iterations~\citet[Theorem 3]{li2019convergence}.
\citet[][Theorem 2.3 and references therein]{preobrazhenskaia2026last-iterate-adagrad} show that the last iterate generated by $\operatorname{AdaGrad-Norm}$ may fail to converge, unless the hyperparameters are carefully tuned using the total number of iterations (the more iteration, the lesser the learning rate).
% This reliance on step-dependent tuning has also been shown to be crucial for achieving optimal last-iterate convergence rates in $\operatorname{(S)GD}$~\citep{doi:10.1137/24M1717762-last-iterate-cv,kornowski2026gradient-last-iterate-speed-cv}.\autodo{i would remove the previous sentence, which is out of scope}
Likewise, accelerated methods such as the heavy-ball method can converge to a limit cycle instead of the optimum, even for strongly convex objectives~\citep[Figure 6]{heavy-ball-limit-cycle}.
These methods have also been shown to be algorithmically unstable, with stability bounds that scale poorly with the number of iterations~\citep{wilson2017marginal,attia2021algorithmic,JMLR:v25:22-0068stabilitymomentum,sun2024understanding}.
Our results provide a unified explanation for these phenomena and suggest similar behavior in other adaptive schemes, such as dynamic importance sampling~\citep{zhao2015stochastic}.
While prior work by \citet{needell2014stochastic-fixed-importance-sampling} leverages importance sampling to accelerate SGD for smooth and strongly convex objectives, their use of a static sampling distribution ensures the aggregation remains affine. 
In contrast, our work provides a complementary perspective tailored to dynamic reweighting.
\looseness=-1 

\paragraph{Gradient clipping and differentially private learning.}
Gradient clipping is arguably the most ubiquitous non-affine aggregation rule.
While often treated as a heuristic to ``stabilize'' training~\citep{zhang2019clippingacceleratestraining}, our analysis indicates that clipping alters the optimization landscape by violating monotonicity.
This aligns with and generalizes recent findings on the influence of clipping \citep[and related work therein]{koloskova2023revisitingclipping,JMLR:v27:24-0637-yunwei-stability-clipping}, most notably in the context of private machine learning with $\operatorname{DP-SGD}$~\citep{Abadi_2016,papernot2021tempered,bagdasaryan2019differentialdisparateimpactonaccuracy}.
\citet{chen2020understandingclipping,pmlr-v130-qian21a} provide further geometric insights into clipping's impact on convergence.
% \autodo{I feel we could develop this part more, it is an important application area; how the "geometric insights" of Chen;Qian differ/are less nice than ours; and how the results from Koloskova relate to ours.}
Importantly, our work clarifies the limitations of recent positive results.
For instance, \citet{song2021evading} prove that for Generalized Linear Models (GLMs), $\operatorname{clipped-DP-SGD}$ minimizes a surrogate convex objective. 
We show this is an exception, not the rule: GLMs possess a specific gradient structure (constrained to be parallel to the data vector) that precludes~\ref{item:def_R_i} in our proof.
Similarly, \citet{koloskova2023revisitingclipping} rely on the collinearity between clipped and true gradients to prove convergence in deterministic settings. 
Finally, optimal DP guarantees in convex learning under Lipschitzness assumption~\citep{altschuler-dp} rely on non-expansiveness (which relies on co-coercivity), further highlighting the scope of our theory. 

\paragraph{Robust (distributed) learning.}
In robust distributed learning, a central learner aggregates gradients from multiple workers, some of which may be corrupted, while the majority are drawn from the true data distribution.
Robust aggregation rules necessarily introduce non-linearity to filter out outliers~\citep{blanchard-ml-with-adversaries,Diakonikolas_Kane_2023}. 
The oscillations and non-vanishing error observed in the literature~\citep{karimireddy2021learning,pmlr-v206-allouah23a} are typical phenomena that occur when an operator loses monotonicity, as our analysis predicts.
Moreover, a significant implication of~Theorem~\ref{cor:nococoercivity}, which settles an open problem stated by \citet{boudou2025generalization} in the negative, is that no robust aggregation rule can universally preserve monotonicity or co-coercivity.

% \autodo{I am still not completely satisfied; I would make "one not arbitrarily skewed by a fraction of inputs" more formal, as in Blanchard et al 2017 (a single Byzantine can make $F$ always select some arbitrary vector) }
\begin{corollary}\label{th:robust_against_cocoercivity}
    No robust aggregation rule, i.e.,  an aggregation rule for which a minority fraction of inputs cannot force it to output an arbitrary vector, universally preserves monotonicity of convex functions or co-coercivity of convex, smooth functions.
\end{corollary}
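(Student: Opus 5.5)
The plan is to argue by contraposition, reducing the corollary to Theorem~\ref{thm:universal_convexity} and Theorem~\ref{cor:nococoercivity}. Suppose a rule $F$ universally preserves monotonicity over convex losses (respectively co-coercivity over convex $L$-smooth losses). By those theorems, $F$ must be positively affine, $F(g_1,\ldots,g_n) = \sum_{k=1}^n \alpha_k g_k + C$ with $\alpha \in \R_+^n$. It then suffices to show that no positively affine rule is robust in the stated sense, namely that some minority of inputs can always drive the output to any prescribed vector.

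First, I would rule out the degenerate case $\alpha = 0$. Then $F \equiv C$ is constant, which ignores all inputs, honest ones included. It is trivially not a meaningful aggregation: it outputs the same vector whatever the honest gradients are. It also violates idempotence when $d>1$, since $F(u,\ldots,u)=u$ fails for $u\neq C$. I would either absorb this case through the standard idempotence or unanimity requirement that robust aggregation definitions impose (as in $(f,\kappa)$-robustness, where a constant rule yields unbounded error for suitable honest inputs), or note that this rule does not aggregate in any sense. I will fix the convention explicitly, following the definition of \citet{blanchard-ml-with-adversaries}.

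Second, when $\alpha \neq 0$, pick an index $j$ with $\alpha_j>0$. A single input is a minority as soon as $n\ge 2$, or more generally for any fraction $f\ge 1$ with $f<n/2$. Fix the other inputs $g_{-j}$ arbitrarily; these are the honest vectors. For any target $w\in\R^d$, set the corrupted input to
\[
g_j = \alpha_j^{-1}\Bigl(w - C - \sum_{k\neq j}\alpha_k g_k\Bigr).
\]
Then $F(g_1,\ldots,g_n)=w$. So one adversarial worker forces an arbitrary output, and $F$ is not robust. This is exactly the linear-combination argument of \citet{blanchard-ml-with-adversaries}.

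Combining the two steps gives the corollary for both the monotone and the co-coercive versions, since Theorem~\ref{cor:nococoercivity} supplies the same affine characterization in the smooth setting. The main obstacle is definitional rather than technical. The informal notion of robustness has to be pinned down so that it excludes the constant rule $\alpha=0$, which is positively affine and vacuously resistant to manipulation. I would handle this by adopting the unanimity or idempotence clause from the standard definitions. Under that clause, the final statement of Theorem~\ref{thm:universal_convexity} forces $\sum_k\alpha_k=1$, hence $\alpha\neq 0$, and the second step applies.
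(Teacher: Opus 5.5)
Your proposal is correct and follows the paper's own route. Theorem~\ref{thm:universal_convexity}, together with Theorem~\ref{cor:nococoercivity} for the co-coercive half (which the paper's one-line proof leaves implicit), forces a positively affine rule, and the single-corrupted-input argument of \citet{blanchard-ml-with-adversaries} then shows that such a rule is not robust. Excluding the constant rule $\alpha=0$ through an idempotence or unanimity clause is a worthwhile tightening, since that rule is vacuously ``robust'' under the informal definition and the paper only excludes it implicitly through Blanchard et al.'s non-degenerate definition. The only slip is that a single input is a strict minority when $n\geq 3$, not $n\geq 2$.
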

\begin{proof}
   By~Theorem~\ref{thm:universal_convexity}, any such rule must be affine. 
   However, affine aggregators are non-robust in the sense that a single input can arbitrarily skew the output~\cite[Lemma 1]{blanchard-ml-with-adversaries}. 
\end{proof}
Combined with our findings in Section~\ref{sec:stabilityimplications}, this further indicates that robust distributed learning inherently incurs additional instability. 
This stands in contrast to~\citet{iclr-2020-coherent-gradient}, who suggest that robust aggregation can improve on-average stability, albeit under specific, data-dependent assumptions. 
Investigating these differences presents a promising avenue for future work, potentially revealing the conditions under which robust aggregation can achieve favorable stability.

\section{Discussion}\label{sec:conclusion}

We provide a unifying perspective on why algorithms using non-affine aggregations generally fail to preserve the theoretical guarantees of their affine counterparts in convex first-order optimization.
Theorem~\ref{thm:universal_convexity} reveals that non-affine aggregation breaks monotonicity. 
Consequently, seemingly benign algorithmic designs used for robustness, privacy, or adaptivity inevitably generate non-monotone vector fields. 
These dynamics are prone to non-convergent behavior and incur additional instability, even in smooth, convex settings.
Recognizing this breakdown, we additionally identify situations where monotonicity can be preserved.

In light of our findings, a key avenue for future research is to identify the minimal structural assumptions necessary to restore restricted forms of monotonicity.
Another promising avenue for future work is studying alternative update rules beyond the scope of~\eqref{eq:algo-update}.
For simplicity of exposition, we assumed the aggregation operator $F$ to be fixed across iterations.
However, because Theorems~\ref{thm:universal_convexity},~\ref{cor:nococoercivity}~\ref{cor:cone-instability},~\ref{cor:unavoidable-instability}, Lemma~\ref{thm:1-step-dsep} and Corollaries~\ref{cor:expansivity},~\ref{th:greaterinstability} are, or can be reduced to single-step results, our framework naturally extends to time-varying operators $F_t$.
Consequently, the update rule~\eqref{eq:algo-update} captures any step involving a non-affine aggregation of the current gradients, including momentum-based aggregations, by simply evaluating a step-dependent operator at each iteration.
One compelling direction is to shift the theoretical perspective from classical minimization to equilibrium computation~\citep[variational inequality problems, cf.][]{facchinei2007finite}.
This framework is better equipped to handle the non-monotone dynamics induced by practical learning constraints, 
a capability well-demonstrated by the success of extragradient methods in saddle-point problems like generative adversarial training~\citep{gidelvariational2019,mertikopoulosoptimistic}.
Moreover, while Section~\ref{sec:stabilityimplications} settles the question of added instability for deterministic learning algorithms, existing workarounds generally fall into two categories: optimization-based dynamics or algorithmic randomization. 
The former paradigm, such as the framework by~\citet{pmlr-v80-charles18a}, however crucially relies on the convergence of~\eqref{eq:algo-update} to a global optimum, together with additional technical assumptions on the set of minimizers.
This condition is far from obvious when using non-affine aggregation, as discussed in~Section~\ref{sec:convergenceimplications}, thereby precluding the application of their technique.
The latter paradigm leverages randomization. 
For instance, connections between differential privacy and stability~\cite[Lemma 23]{wang2016privacylearning} demonstrate that any $(\varepsilon, \delta)-$differentially private algorithm~\cite[see][Definition 20]{wang2016privacylearning} achieves uniform argument stability. 
Specifically, over a parameter space with diameter $D$, such an algorithm is $D(e^\varepsilon - 1 + \delta)$-uniformly argument stable. 
This offers an alternative route to stability without co-coercivity, where randomization acts as a stabilizing force.
We leave for future work the question of how these two approaches---based respectively on non-expansivity and on statistical indistinguishability---can be unified or compared.

\acks{
  The work of Thomas Boudou and Aurélien Bellet is supported by grant ANR 22-PECY-0002 IPOP (Interdisciplinary Project on Privacy) project of the Cybersecurity PEPR under the France 2030 program.\
}

\appendix
% \crefalias{section}{appendix} % uncomment if you are using cleveref

\section{Regularity Assumptions}\label{loss-regularities}

To analyze iterative optimization algorithms, we typically rely on regularity assumptions on the loss function.
\begin{definition}[Regularity assumptions]
    Let $\ell: \Theta \to \mb{R}$ differentiable, $\Theta \subset \mb{R}^d$ a convex set. 
    \begin{itemize}       
        \item[\textit{(i)}] $\ell$ is $C-$Lipschitz-continuous (or $C-$Lipschitz) if there exists $C>0$ such that,
        \[
            \forall u, v \in \Theta,\quad |\ell(u)-\ell(v)| \leq C \|u-v\|_2.
        \] 
        This property is equivalent to the norm of the gradient of $\ell$ being uniformly bounded by $C$.
        
        \item[\textit{(ii)}] $\ell$ is $L-$Lipschitz-smooth (or $L-$smooth) if its gradient is $L-$Lipschitz. 
        This is equivalent to the following smoothness inequality
        \begin{equation}\label{smoothness-ineq2}
            \forall u, v \in \Theta,\quad \ell(u) \leq \ell(v) + \langle \nabla \ell(v), u-v \rangle + \frac{L}{2} \| u-v \|_2^2.
        \end{equation}

        \item[\textit{(iii)}] $\ell$ is convex if and only if,
        \begin{equation}\label{eq:convex-ineq}
            \forall u, v \in \Theta,\quad \langle \nabla \ell (u) - \nabla \ell (v), u-v\rangle \geq 0.
        \end{equation}
        
        Moreover it is $\mu-$strongly convex if there exists $\mu > 0$ such that,
        \begin{equation}\label{eq:strg-convex-ineq}
            \forall u, v \in \Theta,\quad \ell(u) \geq \ell(v) + \langle \nabla \ell(v), u-v \rangle + \frac{\mu}{2} \| u-v \|_2^2.
        \end{equation}
        We note that for a strongly convex function to have bounded gradients, it must be defined on a convex compact set, which then implies boundedness of both the loss and the gradient. 
        To ensure this, we can either penalize the problem or restrict the parameter domain and apply an Euclidean projection at every step. Throughout the paper, we will tacitly assume this when referring to strongly convex functions, 
        which does not limit the applicability of our analysis since the projection does not increase the distance between projected points.
    \end{itemize}
\end{definition}
Notably, convex and $L$-smooth functions satisfy an important property known as co-coercivity~\citep[e.g.,][Proposition 5.4]{bach-ltfp}. 
\begin{lemma}[Co-coercivity]\label{lem:cocoercivity}
    Let $\ell: \Theta \to \mb{R}$ differentiable, $\Theta \subset \mb{R}^d$. 
    If $\ell$ is a convex and $L-$smooth function, then it satisfies the co-coercivity inequality,
    \begin{equation}\label{co-coercivity}
        \forall u, v \in \Theta,\quad \langle \nabla \ell (u) - \nabla \ell (v), u-v\rangle \geq \frac{1}{L} \|\nabla \ell (u) - \nabla \ell (v)\|_2^2.
    \end{equation}
\end{lemma}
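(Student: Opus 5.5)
The proof follows the Baillon–Haddad-type argument: combine convexity and the smoothness upper bound for suitably shifted functions. I prove the statement for an $\ell$ defined on all of $\R^d$ (or on $\Theta$ open and convex, which is the case throughout the paper). The strategy is to first establish a strengthened lower bound on $\ell$, namely
\[
\ell(u) \geq \ell(v) + \langle \nabla \ell(v), u - v\rangle + \tfrac{1}{2L}\|\nabla \ell(u) - \nabla \ell(v)\|_2^2 ,
\]
and then add it to the same inequality with $u$ and $v$ swapped. In the sum the function values and the terms $\langle \nabla\ell(v), u-v\rangle + \langle \nabla\ell(u), v-u\rangle$ cancel, leaving exactly $\langle \nabla \ell(u) - \nabla \ell(v), u-v\rangle \geq \frac{1}{L}\|\nabla \ell(u)-\nabla \ell(v)\|_2^2$.

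To prove the strengthened lower bound, fix $v$ and define $\phi(x) \vcentcolon= \ell(x) - \langle \nabla \ell(v), x\rangle$. Then $\phi$ is convex and $L$-smooth, since subtracting a linear term changes neither property. Moreover $\nabla \phi(v) = 0$, so by convexity (the first-order characterization, equivalent to \eqref{eq:convex-ineq}) $v$ is a global minimizer of $\phi$.

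Next apply the smoothness inequality \eqref{smoothness-ineq2} to $\phi$ at the point $u - \frac{1}{L}\nabla\phi(u)$, expanding around $u$:
\[
\phi(v) \leq \phi\bigl(u - \tfrac{1}{L}\nabla \phi(u)\bigr) \leq \phi(u) - \tfrac{1}{L}\|\nabla\phi(u)\|_2^2 + \tfrac{1}{2L}\|\nabla\phi(u)\|_2^2 = \phi(u) - \tfrac{1}{2L}\|\nabla\phi(u)\|_2^2 .
\]
Substituting $\nabla\phi(u) = \nabla\ell(u) - \nabla\ell(v)$ and the definition of $\phi$ gives the claimed strengthened lower bound.

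The main obstacle is the domain. The gradient step $u - \frac{1}{L}\nabla\phi(u)$ must lie where $\phi$ is defined and where $v$ minimizes it, so the argument needs $\Theta = \R^d$, or at least that convexity and smoothness extend globally. On a restricted $\Theta$ the statement can fail, so I would state the proof on $\R^d$, matching the setting of update~\eqref{eq:algo-update}, and note that this is the intended use, as in \citet[Proposition 5.4]{bach-ltfp}. Two smaller points: the minimality of $v$ uses convexity in the gradient form \eqref{eq:convex-ineq}, which is equivalent to the usual first-order inequality for differentiable functions; and \eqref{smoothness-ineq2} is used exactly as stated.
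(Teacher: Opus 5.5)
Your proof is correct and is the standard one: tilt $\ell$ by $-\langle \nabla \ell(v), \cdot \rangle$ so that $v$ becomes a global minimizer, take a $1/L$ gradient step from $u$, then symmetrize. The paper gives no proof of its own and simply defers to \citet[Proposition 5.4]{bach-ltfp}, a result stated on $\R^d$ whose standard textbook proof is exactly this argument. The only thing to fix is your opening parenthetical claiming the argument also covers an open convex $\Theta$. It does not: the point $u - \frac{1}{L}\nabla \phi(u)$ may leave $\Theta$, and the intermediate bound $\ell(u) \geq \ell(v) + \langle \nabla \ell(v), u-v \rangle + \frac{1}{2L}\|\nabla \ell(u) - \nabla \ell(v)\|_2^2$ can genuinely fail for convex $L$-smooth functions on open convex sets. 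Co-coercivity itself still holds there, but it needs a different proof, e.g.\ averaging the Hessian along $[u,v]$. Restricting to $\Theta = \R^d$, as you conclude at the end, is therefore the right call.
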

We recall the concept of absolute continuity.
\begin{definition}\label{def:AC}
A function $f: [a, b] \to \mathbb{R}^d$ is said to be absolutely continuous on $[a, b]$ if for every $\epsilon > 0$, there exists $\delta > 0$ such that
$
    \sum_{k=1}^N \|f(b_k) - f(a_k)\|_2 < \epsilon
$
whenever $\{(a_k, b_k)\}_{k=1}^N$ is a finite collection of disjoint subintervals of $[a, b]$ with
$
    \sum_{k=1}^N (b_k - a_k) < \delta.
$
An equivalent characterization is that $f$ is continuous, differentiable almost everywhere, its derivative $f'$ is integrable, and
\[
    f(x) = f(a) + \int_a^x f'(t) \, dt \quad \text{for all } x \in [a, b].
\]
\end{definition}
In the multivariate setting, we work with functions that are absolutely continuous on lines (ACL). 
\begin{definition}\label{def:ACL}
    Let $D,d \in \mathbb{N}$. 
    The map $F: \R^D \to \R^d$, is absolutely continuous on lines (ACL) if for each $k\in[D]$, the functions $t \mapsto F(\ldots, x_{k-1}, t, x_{k+1}, \ldots)$ are locally absolutely continuous in $t$ on $\R$, i.e., absolutely continuous (cf. Definition~\ref{def:AC}) on compact subsets of $\R$, for almost every $x_{-k} \vcentcolon= (\ldots, x_{k-1}, x_{k+1}, \ldots) \in \R^{D-1}$.
\end{definition}
This notion originates in the analysis of Sobolev functions, where ACL plays a key role in relating weak and pointwise differentiability.
In fact, it provides a characterization of the Sobolev spaces for locally integrable functions \citep[cf.][Theorem 11.45]{leoni2017first}.
\begin{definition}\label{def:sobolevfunc}
    Let $D,d \in \mathbb{N}$ and $\Omega \subset \R^D$ an open set.
    The local Sobolev space $W^{1,1}_{loc}(\Omega; \R^d)$ consists of functions $u \in L^1_{loc}(\Omega; \R^d)$ such that their weak partial derivatives exist and belong to $L^1_{loc}(\Omega; \R^d)$, 
    \[
    L^1_{loc}(\Omega; \R^d) = \{ f: \Omega \mapsto \R^d \text{ measurable s.t. } \int_K \|f(x)\|_2\ dx < \infty \text{ for every compact } K \subset \Omega \}.
    \]
\end{definition}
Finally, we formally define the notion of input-wise monotonicity.
\begin{definition}\label{def:workerwisemonotone}
    Let $d, n \in \mathbb{N}$.
    The map $F$ is input-wise monotone if for every input vector index $j \in [n]$ and every fixed configuration of other inputs $g_{-j} \in (\mathbb{R}^d)^{n-1}$, the partial map $f_j: \mathbb{R}^d \to \mathbb{R}^d$ defined by $f_j(u) \vcentcolon= F(\ldots, g_{j-1}, u, g_{j+1}, \ldots)$ satisfies the monotonicity condition
    \[ \langle f_j(u) - f_j(v), u - v \rangle \ge 0 \quad \text{for all } u, v \in \mathbb{R}^d. \]
\end{definition}

\section{\texorpdfstring{Extension of~Theorem~\ref{thm:universal_convexity}}{}}\label{app:extension}

If we do not assume any smoothness, we can make the violation of monotonicity arbitrarily large. 
This claim leads to a generalization of the theorem under even milder monotonicity assumptions. 
For instance, we consider the $\rho$-hypomonotonicity notion (relating to weak-convexity),
\begin{theorem}\label{thm:universal_hypomonotone}
    Assume the setting of~Theorem~\ref{thm:universal_convexity}.
    $\F$ is hypomonotone, i.e.\ there exists $\rho > 0$, such that\looseness=-1
    \begin{equation}\label{eq:hypomonotonicity}
        \langle \theta - \omega, \F(\theta) - \F(\omega) \rangle \geq -\rho \| \theta - \omega \|^2_2, \quad\text{ for all } \theta, \omega \in \Theta,
    \end{equation}
    if and only if $F$ is an affine aggregator.
\end{theorem}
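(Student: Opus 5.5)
The plan is to reuse the proof of Theorem~\ref{thm:universal_convexity} almost verbatim, the only change being a rescaling that makes the monotonicity violation arbitrarily large. The ``if'' direction holds because a positively affine $F$ yields a monotone operator, and a monotone operator is $\rho$-hypomonotone for every $\rho>0$. For the converse, Steps~2 and~3 of the proof of Theorem~\ref{thm:universal_convexity} are purely algebraic consequences of the alignment condition~\eqref{eq:positivedirectional}. It therefore suffices to show that hypomonotonicity, for some fixed $\rho>0$ and for all convex inputs, already forces
\[
f_{g_{-i}}(u)-f_{g_{-i}}(v)\in\operatorname{cone}(u-v)
\]
for all $u,v$.

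Suppose this fails for some $i$, $g_{-i}$ and $u\neq v$. Define the bisector direction $\delta$ as in Step~1, so that the separation~\eqref{eq:strict_separation} holds:
\[
a:=\langle u-v,\delta\rangle>0, \qquad b:=-\langle f_{g_{-i}}(u)-f_{g_{-i}}(v),\delta\rangle>0.
\]
For a scale $s>0$, consider $\delta_s:=s\delta$ and the quadratic
\[
\ell_i(\theta)=\tfrac12\theta^\intercal H_s\theta+v^\intercal\theta, \qquad H_s=\frac{(u-v)(u-v)^\intercal}{s\,a}.
\]
Then $H_s$ is symmetric p.s.d., so $\ell_i$ is convex, with $\nabla\ell_i(0)=v$ and $\nabla\ell_i(\delta_s)=u$. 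For $j\neq i$, keep the linear functions $\ell_j(\theta)=g_j^\intercal\theta$. As before, $\F(\delta_s)=f_{g_{-i}}(u)$ and $\F(0)=f_{g_{-i}}(v)$.

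Inequality~\eqref{eq:hypomonotonicity} applied at $\theta=\delta_s$ and $\omega=0$ gives
\[
-sb\;\ge\;-\rho s^2\|\delta\|_2^2, \qquad\text{i.e.}\qquad s\;\ge\;\frac{b}{\rho\|\delta\|_2^2}.
\]
Choosing $s<b/(\rho\|\delta\|_2^2)$ gives a contradiction. The key point is that the left-hand side scales linearly in $s$ while the hypomonotone slack scales quadratically, so shrinking the displacement defeats any fixed $\rho$. This is achieved by sharpening the curvature, $\|H_s\|\propto 1/s$, which is exactly where the absence of smoothness assumptions is used.

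With the alignment condition~\eqref{eq:positivedirectional} recovered, I would invoke Steps~2 and~3 unchanged, using $d>1$, to conclude $F=\sum_k\alpha_k g_k+C$ with $\alpha_k\ge0$. One bookkeeping point needs care: the statement says ``affine aggregator,'' but nonnegativity of the coefficients comes for free from $\lambda\ge0$ in Step~1, so the conclusion is actually positively affine. The only step requiring genuine care is the scaling argument above, namely confirming that $\delta_s$ and $H_s$ remain valid inputs for all small $s$. This holds because convexity imposes no bound on $H_s$, in contrast with Theorem~\ref{cor:nococoercivity}, where $L$-smoothness fixes the scale.
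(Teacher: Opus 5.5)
Your proof is correct and is essentially the paper's argument. The paper fixes the curvature at $L$ and the displacement at $M\delta$ with $M=\|u-v\|_2^2/(L\langle u-v,\delta\rangle)$, then lets $L$ be arbitrary, which is exactly your $s\to 0$ with $\|H_s\|\propto 1/s$; it then reuses Steps~2--3, and your remark that the conclusion (and the valid ``if'' direction) is really \emph{positively} affine is right and sharper than the paper's wording.
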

\begin{proof}
Suppose $\F$ is $\rho$-hypomonotone on the class of $L$-smooth convex loss functions.
Suppose $L>0$ and $\rho < L\ \nu(F)$, where 
\begin{equation}\label{eq:hypomonotonicity-barrier}
    \nu(F) = \max_{i\in[n]} \sup_{\substack{{u \neq v \in \R^d,\ g_{-i} \in (\R^d)^{n-1},} \\ {\delta \in D_{\text{sep}}(u, v, f_{g_{-i}})}}} \left( \Big|\Big\langle \frac{f_{g_{-i}}(u) - f_{g_{-i}}(v)}{\|u-v\|_2}, \delta \Big\rangle\Big| \Big\langle \frac{u - v}{\|u-v\|_2}, \delta \Big\rangle \right),
\end{equation}
\begin{equation*} 
    \text{and }D_{\text{sep}}(u, v, f_{g_{-i}}) \vcentcolon= \left\{ \delta \in \mathbb{S}^{d-1} \text{ s.\ t.\ } \langle u-v, \delta \rangle > 0 > \langle f_{g_{-i}}(u)-f_{g_{-i}}(v), \delta \rangle \right\}.
\end{equation*}
The proof is identical as~Theorem~\ref{thm:universal_convexity} until~\eqref{eq:strict_separation}.
Specifically, we define the same functions as in~\ref{item:def_R_j} for indices $j \neq i$ and for $i$, instead of~\ref{item:def_R_i}, we set $M \vcentcolon=  \frac{\|u-v\|^2_2}{L \langle u-v, \delta \rangle} > 0$ and $\ell_i(\theta) = \frac{1}{2} \theta^\intercal H \theta + v^\intercal \theta$, with $H \vcentcolon= \frac{(u-v) (u-v)^\intercal}{M \langle u-v, \delta \rangle} = L \frac{(u-v) (u-v)^\intercal}{\|u-v\|^2_2}$.
We verify the properties of $\ell_i$: it is convex ($M\langle u-v, \delta \rangle > 0$), smooth ($\|H\|_{\text{op}} = L$), $\nabla \ell_i(0) = v$ and $\nabla \ell_i(M\delta) = u$.
Due to the $\rho-$hypomonotonicity condition~\eqref{eq:hypomonotonicity}, we have
\begin{equation*}
    \langle f_{g_{-i}}(u) - f_{g_{-i}}(v), M\delta \rangle = \langle \F(M\delta) - \F(0), M\delta - 0 \rangle \geq - \rho M^2 \|\delta\|^2_2,    
\end{equation*}
which yields
\begin{equation}\label{eq:rho-intermediate}
    \frac{\rho}{L} \geq \frac{|\langle f_{g_{-i}}(u) - f_{g_{-i}}(v), \delta \rangle| \langle u - v, \delta \rangle}{\|\delta\|^2_2 \|u-v\|_2^2} \vcentcolon= \nu_i(u, v, g_{-i}).
\end{equation}
This leads to a contradiction. 
Indeed, by the definition of the supremum in~\eqref{eq:hypomonotonicity-barrier}, the assumption $\nu(F) > \frac{\rho}{L}$ guarantees the existence of an index $i^* \in [n]$ and a configuration $(u^*, v^*, g_{-i^*}^*) \in \R^d \times \R^d \times (\R^d)^{(n-1)}$ such that $\nu_{i^*}(u^*, v^*, g_{-i^*}^*) > \frac{\rho}{L}$, condradicting~\eqref{eq:rho-intermediate}.   
Consequently, we have the following conclusions.
\begin{enumerate}
    \item[\textit{(1)}] If $L$ can be arbitrary (i.e., we consider general convex loss functions), then we must have $\nu(F) = 0$, and the rest of the proof is identical, i.e., $F$ must be affine for any $\rho \in \R$.\footnote{We can directly prove the contradiction with $\ell_i(\theta) = \max \left( \langle v, \theta \rangle,\ \langle u, \theta \rangle - \frac{| \langle f_{g_{-i}}(u) - f_{g_{-i}}(v), \delta \rangle | \langle u - v, \delta \rangle}{2\rho\|\delta\|_2^2} \right)$.}
    \item[\textit{(2)}] If $L$ is fixed, for $F$ to be potentially $\rho-$hypomonotone, we must have $\nu(F) \leq \frac{\rho}{L}$, i.e., at least a necessary condition is that $\nu(F) < \infty$.
\end{enumerate}
\end{proof}
Importantly, we remark that
\begin{equation*}
    \nu(F) 
    \leq \max_{i\in[n]} \sup_{u \neq v \in \R^d,\ g_{-i} \in (\R^d)^{n-1}} \left( \frac{\|f_{g_{-i}}(u) - f_{g_{-i}}(v)\|_2}{\|u-v\|_2} \right)
    = \max_{i\in[n]} \sup_{\ g_{-i} \in (\R^d)^{n-1}} \operatorname{Lip}(f_{g_{-i}}).
\end{equation*}
That is, a sufficient condition for $\nu(F)$ to be bounded is that $F$ is Lipschitz continuous.  
In fact, we prove it is a sufficient condition for $F$ to be hypomonotone.

\begin{theorem}\label{thm:lipschitz-to-hypomonotone}
Let $d, n \geq 1$, and $L>0$. 
If $\F: \R^d \to \R^d$ is $\Lambda-$Lipschitz continuous (e.g., $\operatorname{CWTM}$) and the loss functions are $L$-smooth, then $\F$ satisfies $\Lambda-$hypomonotonicity.
Moreover, by Lemma~\ref{lem:lipschitz-from-F}, $F$ must be Lipschitz continuous and $\Lambda \leq \Lambda_F L$, where
\begin{align}\label{eq:lambda_sigma_def} 
    \Lambda_F 
    &\vcentcolon= 
    \esssup_{(g_1, \ldots, g_n) \in (\mathbb{R}^d)^n} \sum_{i\in[n]} \left\| \frac{\partial F}{\partial g_i}(g_1, \ldots, g_n) \right\|_{\text{sp}}
    \leq \sup_{g_{-i} \in (\mathbb{R}^d)^{n-1}} \sum_{i\in[n]} \operatorname{Lip}(f_{g_{-i}}).
\end{align}
\end{theorem}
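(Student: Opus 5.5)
The first claim follows from Cauchy--Schwarz. Fix $\theta,\omega\in\R^d$. Lipschitz continuity of $\F$ gives
\begin{equation*}
    \langle \theta-\omega, \F(\theta)-\F(\omega)\rangle \;\geq\; -\|\theta-\omega\|_2\,\|\F(\theta)-\F(\omega)\|_2 \;\geq\; -\Lambda\|\theta-\omega\|_2^2 ,
\end{equation*}
which is exactly~\eqref{eq:hypomonotonicity} with $\rho=\Lambda$. This step needs neither convexity nor smoothness of the inputs; it only uses that $\F$ is $\Lambda$-Lipschitz.

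The substantive part is the bound $\Lambda\leq\Lambda_F L$ for $\F(\theta)=F(g_1(\theta),\ldots,g_n(\theta))$ with $L$-Lipschitz inputs $g_i=\nabla\ell_i$. I would first treat the case where $F$ is Lipschitz on $(\R^d)^n$ and telescope one input at a time. Set $x^{(k)} \vcentcolon= (g_1(\omega),\ldots,g_k(\omega),g_{k+1}(\theta),\ldots,g_n(\theta))$, so that $x^{(0)}$ corresponds to $\theta$ and $x^{(n)}$ to $\omega$. Consecutive configurations $x^{(k-1)},x^{(k)}$ differ only in the $k$-th input, so
\begin{equation*}
    \|F(x^{(k-1)})-F(x^{(k)})\|_2 \;\leq\; \operatorname{Lip}(f_{g_{-k}})\,\|g_k(\theta)-g_k(\omega)\|_2 \;\leq\; \operatorname{Lip}(f_{g_{-k}})\,L\,\|\theta-\omega\|_2 .
\end{equation*}
Summing over $k$ gives $\Lambda\leq L\sup_{g_{-i}}\sum_i\operatorname{Lip}(f_{g_{-i}})$, which is the right-hand bound in~\eqref{eq:lambda_sigma_def}.

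To get the sharper essential-supremum form $\Lambda_F$, I would invoke Lemma~\ref{lem:lipschitz-from-F}, which I expect handles the following. A Lipschitz $F$ is ACL and lies in $W^{1,1}_{loc}$ (Definitions~\ref{def:ACL}--\ref{def:sobolevfunc}), so by Rademacher it is differentiable almost everywhere with partial Jacobians $\partial F/\partial g_i$. Along the segment $t\mapsto (g_i(\omega+t(\theta-\omega)))_i$ the chain rule then gives
\begin{equation*}
    \F(\theta)-\F(\omega) \;=\; \int_0^1 \sum_i \frac{\partial F}{\partial g_i}\,\nabla g_i\,(\theta-\omega)\,dt ,
\end{equation*}
and each term $\big\|\frac{\partial F}{\partial g_i}\,\nabla g_i\big\|_{\text{sp}}$ is at most $\big\|\frac{\partial F}{\partial g_i}\big\|_{\text{sp}}\,L$.

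The main obstacle is measure-theoretic. The curve $t\mapsto(g_i(\cdot))_i$ can lie inside a null set where $F$ is not differentiable, so the almost-everywhere supremum cannot be used along the curve directly. I would resolve this by mollifying $F$ to $F_\epsilon=F*\eta_\epsilon$. Convolution preserves the essential supremum bound on $\sum_i\|\partial_iF_\epsilon\|_{\text{sp}}$, since each partial Jacobian of $F_\epsilon$ is an average of those of $F$ and the spectral norm is convex. The chain rule argument above is valid for the smooth $F_\epsilon$, giving the bound with constant $\Lambda_F L$ for each $\epsilon$. Because $F$ is continuous, $F_\epsilon\to F$ locally uniformly, and the estimate passes to the limit $\epsilon\to0$.
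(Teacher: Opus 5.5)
Your Cauchy--Schwarz step is exactly the paper's. For $\Lambda\le\Lambda_F L$, however, you take a different and more careful route.

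The paper gets $F$ Lipschitz from Lemma~\ref{lem:lipschitz-from-F} and writes the chain rule $\nabla\F(\theta)=\sum_i \frac{\partial F}{\partial g_i}(\nabla\ell_1(\theta),\ldots,\nabla\ell_n(\theta))\,\nabla^2\ell_i(\theta)$ for a.e.\ $\theta$. It then bounds the spectral norm by $L\Lambda_F$ and concludes via $\operatorname{Lip}(\F)=\esssup_\theta\|\nabla\F(\theta)\|_{\text{sp}}$. That is shorter, but it tacitly assumes two things: that $F$ is differentiable at $(\nabla\ell_i(\theta))_i$ for a.e.\ $\theta$, and that the partial Jacobians there are controlled by an essential supremum. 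For $n\ge 2$ the image of $\theta\mapsto(\nabla\ell_i(\theta))_i$ is always a Lebesgue-null subset of $(\R^d)^n$, so this is exactly the measure-theoretic gap you flag.

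Your mollification closes it. The bound $\sum_i\|\partial_iF_\epsilon\|_{\text{sp}}\le\Lambda_F$ holds at every point, not just a.e. Continuity of $F$ is what makes $F_\epsilon\to F$ at the specific null-set points $g(\theta)$.

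Your telescoping argument is a nice measure-theory-free alternative. Each step, though, uses a different configuration, so it yields $L\sum_i\sup_{g_{-i}}\operatorname{Lip}(f_{g_{-i}})$. This is weaker than $L\sup_{g}\sum_i\operatorname{Lip}(f_{g_{-i}})$ if the (ambiguous) right-hand side of~\eqref{eq:lambda_sigma_def} is read that way.

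Three small fixes:

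\textbf{Role of Lemma~\ref{lem:lipschitz-from-F}.} This lemma is what makes $F$ Lipschitz, deduced from Lipschitzness of $\F$ over all $L$-smooth inputs. It is therefore what turns your ``case where $F$ is Lipschitz'' into the general case. Rademacher's theorem and the identification of weak with a.e.\ derivatives are standard facts you add; they are not that lemma's content.

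\textbf{The chain rule along the segment.} On the segment $\omega+t(\theta-\omega)$ the Hessian $\nabla g_i=\nabla^2\ell_i$ need not exist, since the segment is null in $\R^d$ for $d\ge 2$. You can instead use the a.e.-in-$t$ derivative of the Lipschitz curve $t\mapsto g_i(\omega+t(\theta-\omega))$. More simply, apply the mean-value inequality to $F_\epsilon$ along the straight segment from $g(\omega)$ to $g(\theta)$ in $(\R^d)^n$. This gives $\|F_\epsilon(g(\theta))-F_\epsilon(g(\omega))\|_2\le\Lambda_F\max_i\|g_i(\theta)-g_i(\omega)\|_2\le\Lambda_F L\|\theta-\omega\|_2$ without differentiating $g_i$ at all.

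\textbf{The inequality on $\Lambda_F$.} The bound $\Lambda_F\le\sup\sum_i\operatorname{Lip}(f_{g_{-i}})$ inside the statement needs its own one-line justification: $\|\partial F/\partial g_i(g)\|_{\text{sp}}\le\operatorname{Lip}(f_{g_{-i}})$ at every differentiability point $g$. Neither you nor the paper writes it down.
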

\begin{proof}
Using the Cauchy–Schwarz inequality yields
\begin{equation*}
    \langle \theta - \omega, \F(\theta) - \F(\omega) \rangle \ge - \|\theta - \omega\|_2 \|\F(\theta) - \F(\omega)\|_2.
\end{equation*}
Substituting the Lipschitz continuity bound $\|\F(\theta) - \F(\omega)\|_2 \leq \Lambda \|\theta - \omega\|_2$, we have
\begin{equation*}
    \langle \theta - \omega, \F(\theta) - \F(\omega) \rangle \geq - \Lambda \|\theta - \omega\|^2.
\end{equation*}

Moreover, by Lemma~\ref{lem:lipschitz-from-F}, $F$ must be Lipschitz continuous.
Let consider the maximum sum of input-wise Lipschitz constants~\eqref{eq:lambda_sigma_def}.
Let $\{ \ell_i \}_{i\in[n]}$ be $L-$smooth functions.
By the chain rule, for almost every $\theta \in \R^d$,
\begin{equation}\label{eq:chain-rule-lips}
    \nabla\F(\theta) = \sum_{i=1}^n \frac{\partial F}{\partial g_i}(\nabla \ell_1(\theta), \ldots, \nabla \ell_n(\theta))\ \nabla^2 \ell_i(\theta).
\end{equation}

From~\eqref{eq:chain-rule-lips}, using $\sup_{\theta\in\R^d}\|\nabla^2 \ell_i(\theta)\|_{\text{sp}} \leq L$, the triangular and Cauchy-Schwarz inequality, we have for almost every all $\theta\in\R^d$
\begin{equation*}\label{eq:Lipschitz-for-F}
    \|\nabla\F(\theta)\|_{\text{sp}} 
    \leq L \sum_{i=1}^n \left\| \frac{\partial F}{\partial g_i}(\nabla \ell_1(\theta), \ldots, \nabla \ell_n(\theta)) \right\|_{\text{sp}}
\end{equation*}
Hence, $\operatorname{Lip}(\F) = \operatorname{ess\ sup}_{\theta \in \R^d} \| \nabla\F(\theta) \|_{\text{sp}} \leq L \Lambda_F$.
\end{proof}

\begin{remark}
    The proof extends verbatim if we instead consider hypo–co-coercive
    \begin{equation}\label{eq:hypococo}
        \langle \theta - \omega, \F(\theta) - \F(\omega) \rangle \geq -\rho \| \F(\theta) - \F(\omega) \|_2^2.
    \end{equation}
    The only modification is that we obtain the condition
    \[
        \rho L \geq \frac{\|u - v\|_2^2}{\langle u-v, \delta \rangle} \frac{|\langle f_{g_{-i}}(u) - f_{g_{-i}}(v), \delta \rangle |}{\|f_{g_{-i}}(u) - f_{g_{-i}}(v)\|_2^2}.
    \]
    Hence, smaller values of $L$ make the contradiction easier to achieve.
\end{remark}

The proof extends to strong monotonicity setting, with $\{g_i\}_{i\in[n]}$ $\mu-$strongly montone gradients, adapting the reasoning with a limit argument.
For simplicity, we assume $F$ continuous (monotone implies almost everywhere continuous, so the argument should be valid with a careful derivation).

\begin{theorem}\label{thm:universal_strongmonotone}
    Assume the setting of~Theorem~\ref{thm:universal_convexity}, additionally assuming the gradients are $\mu-$strongly monotone and that $F$ is continuous.
    $\F$ is $\mu_\F-$strongly monotone, i.e.\ there exists $\mu_\F > 0$ such that
    \begin{equation}\label{eq:strongmonotone}
        \langle \theta - \omega, \F(\theta) - \F(\omega) \rangle \geq \mu_\F \| \theta - \omega \|^2_2, \quad\text{ for all } \theta, \omega \in \Theta,
    \end{equation}
    if and only if $F$ is an affine aggregator.
\end{theorem}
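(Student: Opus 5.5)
The plan is to reduce the statement to Theorem~\ref{thm:universal_convexity}. I would show that strong monotonicity of $\F$ over the class of $\mu$-strongly monotone inputs already forces the rigid alignment condition~\eqref{eq:positivedirectional} of Step~1. Steps~2 and~3 of that proof are purely algebraic statements about $F$, so they then apply verbatim and give the positively affine form.

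\emph{Sufficiency.} If $F(g_1,\ldots,g_n)=\sum_k\alpha_k g_k + C$ with $\alpha\in\R^n_+$, then
\[
\langle \theta-\omega,\F(\theta)-\F(\omega)\rangle=\sum_k \alpha_k\langle\theta-\omega,g_k(\theta)-g_k(\omega)\rangle\geq \mu\Big(\sum_k\alpha_k\Big)\|\theta-\omega\|_2^2 .
\]
So $\mu_\F=\mu\sum_k\alpha_k$ works, provided $\alpha\neq 0$. I would add this caveat to the statement, since a constant $F$ is affine but not strongly monotone.

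\emph{Necessity.} Since $\mu_\F>0$, the operator $\F$ is in particular monotone on every family of $\mu$-strongly convex inputs. I would fix $i$, $g_{-i}$ and $u\neq v$, and assume for contradiction that $f_{g_{-i}}(u)-f_{g_{-i}}(v)\notin\operatorname{cone}(u-v)$. Take the bisector $\delta$ from Step~1, so that the strict separation~\eqref{eq:strict_separation} holds. For a scale parameter $\varepsilon>0$ I would modify the constructions~\ref{item:def_R_i} and~\ref{item:def_R_j} by adding a quadratic term:
\[
\ell_i(\theta)=\tfrac12\theta^\intercal\big(H_\varepsilon+\mu I\big)\theta+v^\intercal\theta,\qquad H_\varepsilon=\frac{(u-v)(u-v)^\intercal}{\varepsilon\langle u-v,\delta\rangle},
\]
\[
\ell_j(\theta)=g_j^\intercal\theta+\tfrac{\mu}{2}\|\theta\|_2^2\quad (j\neq i).
\]
All of these are $\mu$-strongly convex. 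Their gradients satisfy $\nabla\ell_i(0)=v$, $\nabla\ell_i(\varepsilon\delta)=u+\mu\varepsilon\delta$, $\nabla\ell_j(0)=g_j$ and $\nabla\ell_j(\varepsilon\delta)=g_j+\mu\varepsilon\delta$. Applying monotonicity of $\F$ between the points $0$ and $\varepsilon\delta$, and dividing by $\varepsilon>0$, gives
\[
\big\langle F(g_1+\mu\varepsilon\delta,\ldots,u+\mu\varepsilon\delta,\ldots,g_n+\mu\varepsilon\delta)-f_{g_{-i}}(v),\ \delta\big\rangle\ \geq\ \mu_\F\,\varepsilon\|\delta\|_2^2\ \geq 0 .
\]
Letting $\varepsilon\to0$ and using continuity of $F$, the perturbed input converges to $(g_1,\ldots,u,\ldots,g_n)$, so the left-hand side tends to $\langle f_{g_{-i}}(u)-f_{g_{-i}}(v),\delta\rangle$. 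This limit is therefore $\geq 0$, contradicting~\eqref{eq:strict_separation}. Hence~\eqref{eq:positivedirectional} holds for every $i$, $g_{-i}$ and $u\neq v$. Steps~2 and~3 then yield $F=\sum_k\alpha_kg_k+C$ with $\alpha\in\R^n_+$.

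\emph{Main obstacle.} The delicate point is the limit step. The strongly convex regularization unavoidably shifts every input by $\mu\varepsilon\delta$, so the evaluation is no longer exactly at $(u,g_{-i})$. Continuity of $F$ is what lets the strict inequality survive: the strict negativity in~\eqref{eq:strict_separation} is an open condition, so it persists for all sufficiently small $\varepsilon$, and the contradiction appears already at a finite $\varepsilon$.

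One could avoid continuity by choosing $u,v,g_{-i}$ at continuity points of $F$. For monotone-type maps these points form a full-measure set, and the partial maps could then be extended by density. I would not pursue this, given the stated continuity assumption.
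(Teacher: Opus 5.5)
Your proposal is correct and follows essentially the paper's proof. The paper also adds $\frac{\mu}{2}\|\theta\|_2^2$ to the linear losses, uses a $\mu$-strongly convex quadratic at index $i$, and lets $\varepsilon\to 0$ using continuity of $F$. Its only difference is that it tunes the Hessian as $H_\epsilon=\mu I+zz^\intercal/\langle z,\delta\rangle$ so that the $i$-th gradient lands exactly on $u$, whereas yours drifts to $u+\mu\varepsilon\delta$, which continuity absorbs equally well. Your caveat that sufficiency requires $\alpha\in\R^n_+\setminus\{0\}$ is a valid correction to the statement as written.
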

% \begin{proof}
\textbf{Proof}\;
Suppose $\F$ is $\mu_\F$-strong monotone on the class of $\mu$-strongly convex loss functions.
The proof is identical as~Theorem~\ref{thm:universal_convexity} until~\eqref{eq:strict_separation}.
However we require to adapt~\ref{item:def_R_i} to be $\mu-$stongly convex and we cannot use the linear losses~\ref{item:def_R_j}.
We instead consider the following functions.
\begin{itemize}
    \item For index $i$, define $\ell_i(\theta) = \frac{1}{2} \theta^\intercal H_\epsilon \theta + v^\intercal \theta$, where $H_\epsilon = \mu I + \frac{1}{\langle z, \delta \rangle} zz^\intercal$, $0 < \epsilon < \frac{\langle u-v, \delta \rangle}{\mu \|\delta\|^2_2}$, and $z = \frac{1}{\epsilon}(u-v) - \mu \delta$.
    As $\langle z, \delta \rangle = \frac{1}{\epsilon}\langle u-v, \delta \rangle - \mu \|\delta\|_2^2 > 0$ and $zz^\intercal$ is rank $1$ matrix, $H_\epsilon \succeq \mu I$.  
    %$H_\epsilon = \mu I + \frac{1}{\langle z, \delta \rangle} zz^\intercal \succeq \mu I$ because $\frac{1}{\langle z, \delta \rangle} zz^\intercal \succeq 0$ and is rank 1. 
    Moreover, $\nabla \ell_i(0) = v$ and $\nabla \ell_i(\epsilon \delta) = H_{\epsilon}\epsilon\delta + v = \mu\epsilon\delta + \epsilon z + v = u$.
    \item For indices $j \neq i$, define $\ell_j(\theta) = g_j^\intercal \theta + \frac{\mu}{2} \|\theta\|_2^2$. 
    Then $\nabla \ell_j(0) = 0$ and $\nabla \ell_j(\epsilon\delta) = g_j + \epsilon\mu\delta$.
\end{itemize}
We have,
\begin{equation*} 
    \F(0) = F(g_1, \dots, v, \dots, g_n) = F(v, g_{-i}),
\end{equation*}
and
\begin{equation*}
    \F(\epsilon\delta) = F(g_1 + \epsilon \mu \delta, \dots, u, \dots, g_n + \epsilon \mu \delta) = F(u, g_{-i} + \epsilon\mu\delta).
\end{equation*}
The strong monotonicity condition implies
\begin{equation}\label{eq:strong_ineq-intermediate}
    \epsilon \langle F(u, g_{-i} + \epsilon\mu\delta) - F(v, g_{-i}), \delta \rangle = \langle \F(\epsilon \delta) - \F(0), \epsilon \delta \rangle \geq \epsilon^2 \rho \|\delta\|^2.
\end{equation}
Taking the limit $\epsilon \to 0$ of~\eqref{eq:strong_ineq-intermediate} yields the contradiction
\[
    \lim_{\epsilon \to 0} \langle F(u, g_{-i} + \epsilon\mu\delta) - F(v, g_{-i}), \delta \rangle = \langle f_{g_{-i}}(u) - f_{g_{-i}}(v), \delta \rangle \geq 0 = \lim_{\epsilon \to 0} \epsilon \rho \|\delta\|^2.
    \tag*{\BlackBox}
\]
% \end{proof}

\section{Deferred proof of Section~\ref{sec:convergenceimplications}}\label{app:deferred-section-3-1}

This appendix contains the proof of Theorem~\ref{cor:cone-instability}, which was deferred from Section~\ref{sec:convergenceimplications}.
\begin{reptheorem}{cor:cone-instability}
    Let $d>1$, and $F: (\mathbb{R}^d)^n \to \mathbb{R}^d$ be such that there exists $(g_1,\ldots,g_n)\in(\mathbb{R}^d)^n$ for which $F(g_1,\ldots,g_n)=0$ and at which $F$ is not positively affine.
    Then, there exist convex functions,
     
    \noindent
    \begin{minipage}[c]{0.79\textwidth}
        $\theta_{eq} \in\{z \mid \mathcal{F}(z)=0\} \neq \emptyset$, and an open set $\mathcal{C} \subset \mathbb{S}^{d-1} \vcentcolon= \{u\in\R^d; \|u\|_2 = 1\}$ such that for any 
        $\theta_0 \in \mathcal{K} \coloneqq \{ \theta_{eq} + \alpha c \mid c \in \mathcal{C}, \alpha > 0 \},$
        the update step~\eqref{eq:algo-update}, for any $\gamma > 0$, strictly increases the distance to the equilibrium
        \[
            \| \theta_{1} - \theta_{eq} \|_2 > \| \theta_0 - \theta_{eq} \|_2.
        \]
    \end{minipage}%
    \hfill
    \begin{minipage}[c]{0.21\textwidth}
        %\centering
        \
        \begin{tikzpicture}[scale=0.87, >=stealth]
            % definitions
            \coordinate (O) at (0,0);
            \def\coneangle{30}% Increased angle for a wider cone
            \def\radius{2.3}
            % 1. Cone
            \fill[red!30] (O) -- (\coneangle:\radius) arc (\coneangle:-\coneangle:\radius) -- cycle;
            \draw[red, dashed, thick] (O) -- (\coneangle:\radius);
            \draw[red, dotted, thick] (\coneangle:\radius) -- (\coneangle:\radius+0.25);
            \draw[red, dashed, thick] (O) -- (-\coneangle:\radius);
            \draw[red, dotted, thick] (-\coneangle:\radius) -- (-\coneangle:\radius+0.25);
            % 2. Equilibrium
            \node[circle, fill=black, inner sep=1.2pt, label={[font=\footnotesize]left:$\theta_{eq}$}] at (O) {};
            % 3. Points & Vector
            \coordinate (T0) at (4:1.2);
            \coordinate (T1) at (14:2);
            % Update Arrow
            \draw[->, blue, thick] (T0) -- (T1) node[midway, above, sloped, scale=0.85, font=\footnotesize] {$I-\gamma\F$};
            % Dots
            \node[circle, fill=black, inner sep=1.2pt, label={[font=\footnotesize, black]below:$\ \theta_0$}] at (T0) {};
            \node[circle, fill=black, inner sep=1.2pt, label={[font=\footnotesize, black]below:$\ \theta_1$}] at (T1) {};
            % Cap C
            \draw[thick, dashed, red] (\coneangle:0.5) arc (\coneangle:-\coneangle:0.5);
            \node[red, right, scale=1] at (0:0.5) {$\mathcal{C}$};
            % Label K inside the cone area
            \node[red, font=\small] at (-20:1.9) {$\mathcal{K}$};
            % Alpha: Starts at offset (235 degrees, 0.2cm) and draws parallel to the cone edge
        \end{tikzpicture}
    \end{minipage}
\end{reptheorem}
\noindent\textbf{Proof}\;
    For any $i \in [n]$, any $(g_1,\ldots, g_{i-1}, g_{i+1}, \ldots, g_n) \in (\R^d)^{n-1}$, and any vector $w \in \R^d$, we define an operator $f_{g_{-i}}(w) = F\left( g_1, \ldots, g_n \right) $ where the $i$-th vector $g_i = w$. 
    By assumption, there exists $n$ vectors $g_1, \ldots, g_n \in \R^d$, $i \in [n]$ and $u \in \R^d \setminus \{0\}$ such that 
    \begin{equation*}
        f_{g_{-i}}(v) = F\left( g_1, \ldots, g_n \right) = 0
        \quad\text{and}\quad
        f_{g_{-i}}(u) - f_{g_{-i}}(v)  \not \in \operatorname{cone}(u-v).
    \end{equation*}
    % In the following, let $f_{g_{-i}}(v) = F\left( g_1, \ldots, g_n \right) = 0$. 
    % Since $F$ is assumed to be locally non-affine at the stationary configuration, there exists $i$ and $u \in \R^d$ such that $f_{g_{-i}}(u) - f_{g_{-i}}(v) = f_{g_{-i}}(u) \not \in \operatorname{cone}(u-v)$. 
    Define the unit vector
    \begin{align*}
        c = \left\| \frac{u - v}{\| u - v \|_2} - \frac{f_{g_{-i}}(u) }{\| f_{g_{-i}}(u) \|_2} \right\|_2^{-1} \left( \frac{u - v}{ \| u - v \|_2} - \frac{f_{g_{-i}}(u) }{\| f_{g_{-i}}(u) \|_2}\right).
    \end{align*}
    For an arbitrary $\alpha > 0$, let $\delta = \alpha c$.
    Since $\iprod{u - v}{\delta} > 0$, the matrix $H = \frac{(u - v) (u - v)^\intercal}{\iprod{u - v}{\delta}}$ is positive semi-definite. 
    Consider the following set of convex loss functions.
    \begin{itemize}
        \item For index $i$, define $\ell_i(\theta) = \frac{1}{2} \left(\theta - \theta_{eq} \right)^\intercal H \left(\theta - \theta_{eq} \right) + v^\intercal \theta$. 
        Then, $\nabla \ell_i\left( \theta_{eq} \right) = v$ and $\nabla \ell_i \left( \theta_{eq} + \delta \right) = H \delta + v  = u$.
        \item For indices $j \neq i$, define $\ell_j(\theta) = g_j^\intercal \theta$. 
        Then, $\nabla \ell_j \left( \theta_{eq} \right) = \nabla \ell_j \left( \theta_{eq} + \delta \right) = g_j$.
    \end{itemize}
    In this particular setting, we indeed have $\F\left( \theta_{eq} \right) = f_{g_{-i}}(v) = 0$ and $\F\left( \theta_{eq} + \delta \right) = f_{g_{-i}}(u)$. 
    In the following, set $\theta_0 = \theta_{eq} + \delta = \theta_{eq} + \alpha c $. We obtain that
    \begin{multline}\label{eqn:non-monotone-1}
        \iprod{\F\left( \theta_0 \right) }{ \theta_0 - \theta_{eq}} 
        = \iprod{f_{g_{-i}}(u)}{\delta} \\
        = \alpha \left\| \frac{u - v}{\| u - v \|_2} - \frac{f_{g_{-i}}(u) }{\| f_{g_{-i}}(u) \|_2 } \right\|_2^{-1} \left( \frac{\iprod{f_{g_{-i}}(u)}{u - v}}{\| u - v \|_2} - \| f_{g_{-i}}(u) \|_2\right) < 0.
    \end{multline}
    Consider one step update $\theta_1 = \theta_0 - \gamma \F(\theta_0)$. We have
    \begin{align*}
        \| \theta_1 - \theta_{eq} \|_2^2 & = \| \theta_0 - \theta_{eq}\|_2^2 - \gamma \iprod{\F(\theta_0)}{\theta_0 - \theta_{eq}} + \gamma^2 \| \F(\theta_0) \|_2^2 \\
        & \geq \| \theta_0 - \theta_{eq} \|_2^2 - \gamma \iprod{\F(\theta_0)}{\theta_0 - \theta_{eq}} \underset{\eqref{eqn:non-monotone-1}}{>} \| \theta_0 - \theta_{eq} \|_2^2.
    \end{align*} 
    This construction holds for any $c'$ separating vector of $u-v$ and $f_{g_{-i}}(u)$, i.e., $\iprod{f_{g_{-i}}(u)}{c'} < 0 <\iprod{u - v}{c'}$. 
    In particular, it holds for every $c' \in \mathcal{C} \vcentcolon= \{c' \in \mathbb{S}^{d-1}; \|c'-c\|_2 < \omega \}$, where 
    \begin{equation*}
        \omega 
        = \iprod{c}{\frac{u - v}{\| u - v \|_2}} 
        = - \iprod{c}{\frac{f_{g_{-i}}(u) }{\| f_{g_{-i}}(u) \|_2}}
        = \sqrt{\frac{1}{2} -  \frac{1}{2}\iprod{\frac{u - v}{\| u - v \|_2}}{\frac{f_{g_{-i}}(u) }{\| f_{g_{-i}}(u) \|_2}}} 
    \end{equation*}
    Since $f_{g_{-i}}(u) \not \in \operatorname{cone}(u-v)$, we have $\omega > 0$. 
    Hence $\mathcal{C}$ is a non-empty open set of $\mathbb{S}^{d-1}$, and every $c' \in \mathcal{C}$ separates $u-v$ and $f_{g_{-i}}(u)$. 
    Indeed,
    \begin{equation*}
        \iprod{c'}{\frac{u - v}{\| u - v \|_2}} = \iprod{c}{\frac{u - v}{\| u - v \|_2}} + \iprod{c'-c}{\frac{u - v}{\| u - v \|_2}} 
        \geq \omega - \|c'-c\|_2 
        > 0,
    \end{equation*}
    and similarly, 
    \begin{equation*}
        \iprod{c'}{\frac{f_{g_{-i}}(u) }{\| f_{g_{-i}}(u) \|_2}} = \iprod{c}{\frac{f_{g_{-i}}(u) }{\| f_{g_{-i}}(u) \|_2}} + \iprod{c'-c}{\frac{f_{g_{-i}}(u) }{\| f_{g_{-i}}(u) \|_2}} 
        \leq -\omega + \|c'-c\|_2 
        < 0. \tag*{\BlackBox}
    \end{equation*}

\section{Deferred proofs of Section~\ref{sec:stabilityimplications}}\label{app:deferred-section-3-2}
This appendix contains the proof of Theorem~\ref{cor:unavoidable-instability} and Lemma~\ref{thm:1-step-dsep}, which was deferred from Section~\ref{sec:stabilityimplications}.
\begin{reptheorem}{cor:unavoidable-instability}
    Assume the setting of Theorem~\ref{cor:nococoercivity}, with gradients norm bounded by $C>0$ and neighboring datasets inducing gradient perturbations of at most $\rho > 0$.
    Let $\mathcal{A}_F$ denote the algorithm that outputs $\theta_T$ after $T \geq 1$ iterations of~\eqref{eq:algo-update} with aggregation rule $F$ and step size $\gamma \leq 1/L$.
    Then,
    \begin{equation}\label{eq:USparameter-app}
        \textstyle \operatorname{Stab}(\mathcal{A}_{F}) \leq \gamma T S_{n, C}(F, \rho) + \mathcal{I}(F),
    \end{equation}
    where $\mathcal{I}(F) \vcentcolon= \sup_{\mathcal{S} \sim \mathcal{S}'}\sum_{t=1}^{T-1} \| \theta_t - \gamma \mathcal{F}(\theta_t) - (\theta_t' - \gamma \mathcal{F}(\theta_t')) \|_2 - \| \theta_t - \theta_t' \|_2$.
    Furthermore, there exist convex, $L$-smooth, and $C$-Lipschitz loss functions such that
    \begin{equation}\label{eq:unified-lower-bound-app}
        \textstyle \operatorname{Stab}(\mathcal{A}_{F}) \geq \frac{1}{3} \left( \gamma T S_{n, C}(F, \rho) + \mathcal{I}(F) \right).
    \end{equation}
\end{reptheorem}
\noindent\textbf{Proof}\;
    Let $\{\theta_t\}_{t=0}^T$ and $\{\theta'_t\}_{t=0}^T$ be coupled trajectories initialized at $\theta_0 = \theta'_0$, generated by neighboring datasets $\mathcal{S} \sim \mathcal{S}'$. 
    The iterates evolve according to $\theta_{t+1} = \theta_t - \gamma \F(\theta_t)$ and $\theta'_{t+1} = \theta'_t - \gamma \F'(\theta'_t)$, where $\F$ and $\F'$ differ in a single gradient input. 
    This difference is bounded in magnitude by at most $\rho$, as specified in Definition~\ref{def:local-sensitivity}.
    For $t\in \{0, \ldots, T-1\}$, define the sensitivity term 
    \begin{equation}\label{eq:sensitivity-assumption-eq}
        s_{F, t} = \| \F'(\theta'_t) - \F(\theta'_t) \|_2 \leq S_{n, C}(F, \rho),
    \end{equation}
    %$s_{F, t} = \| \F'(\theta'_t) - \F(\theta'_t) \|_2$ 
    and the expansivity term 
    \begin{equation*}
        \sigma_{F, t} = \| \theta_t - \gamma \F(\theta_t) - (\theta_t' - \gamma \F(\theta_t')) \|_2  - \| \theta_t - \theta_t' \|_2.
    \end{equation*}
    We evaluate the distance between trajectories at step $t+1$
    \begin{equation}\label{eq:before-bounding}
        \| \theta_{t+1} - \theta'_{t+1} \|_2 
        = \| [\theta_t - \gamma \F(\theta_t) - (\theta'_t - \gamma \F(\theta'_t))] + \gamma(\F(\theta'_t) - \F'(\theta'_t)) \|_2.
    \end{equation}
    We then use the triangular inequality and reverse triangular inequality to upper and lower bound~\eqref{eq:before-bounding},
    % \[
    %     \|\theta_{t+1} - \theta'_{t+1}\|_2
    %     \begin{cases}
    %     \leq \| \theta_t - \gamma \F(\theta_t) - (\theta'_t - \gamma \F(\theta'_t)) \|_2 + \gamma s_{F,t} = \|\theta_t - \theta'_t\|_2 + \sigma_{F,t} + \gamma s_{F,t}, \\
    %     \geq \| \theta_t - \gamma \F(\theta_t) - (\theta'_t - \gamma \F(\theta'_t)) \|_2 - \gamma s_{F,t} = \|\theta_t - \theta'_t\|_2 + \sigma_{F,t} - \gamma s_{F,t}.
    %     \end{cases}
    % \]
    \begin{align*}
        \| \theta_{t+1} - \theta'_{t+1} \|_2 
        \leq \| \theta_t - \gamma \F(\theta_t) - (\theta'_t - \gamma \F(\theta'_t)) \|_2 + \gamma s_{F,t} 
        = \| \theta_t - \theta'_t \|_2 + \sigma_{F,t} + \gamma s_{F,t},
    \end{align*}
    and
    \begin{align*}
        \| \theta_{t+1} - \theta'_{t+1} \|_2 
        \geq \| \theta_t - \gamma \F(\theta_t) - (\theta'_t - \gamma \F(\theta'_t)) \|_2 - \gamma s_{F,t}
        = \| \theta_t - \theta'_t \|_2 + \sigma_{F,t} - \gamma s_{F,t},
    \end{align*}
    Summing over $t=0$ to $T-1$, and using the bounded sensitivity assumption~\eqref{eq:sensitivity-assumption-eq} yields
    \begin{equation}\label{eq:telescoping_traj_ub}
        \| \theta_T - \theta'_T \|_2 
        \leq \sum_{t=0}^{T-1} \sigma_{F,t} + \gamma \sum_{t=0}^{T-1} s_{F,t}
        \leq \sum_{t=0}^{T-1} \sigma_{F,t} + \gamma T S_{n, C}(F, \rho),
    \end{equation}
    and 
    \begin{equation}\label{eq:telescoping_traj}
        \| \theta_T - \theta'_T \|_2 
        \geq \sum_{t=0}^{T-1} \sigma_{F,t} - \gamma \sum_{t=0}^{T-1} s_{F,t}
        \geq \sum_{t=0}^{T-1} \sigma_{F,t} - \gamma T S_{n, C}(F, \rho).
    \end{equation}
    Taking the supremum over all neighboring datasets $\mathcal{S} \sim \mathcal{S}'$ in~\eqref{eq:telescoping_traj_ub} and~\eqref{eq:telescoping_traj}, 
    noting that $\gamma T S_{n,C}(F, \rho)$ is independent of $\mathcal{S}, \mathcal{S}'$, we recover the definition of $\mathcal{I}(F)$ and obtain the following upper bound
    \begin{equation*}\label{eq:NonAff-stab}
        \operatorname{Stab}(\mc{A}_{F}) 
        = \sup_{\mathcal{S} \sim \mathcal{S}'} \| \theta_T - \theta'_T \|_2
        \leq \gamma T  S_{n, C}(F) + \sup_{\mc{S} \sim \mc{S}'} \sum_{t=0}^T \sigma_{F, t}
        = \gamma T  S_{n, C}(F) + \mathcal{I}(F),
    \end{equation*}
    and the following lower bound
    \begin{equation}\label{eq:lb-expansivity}
        \operatorname{Stab}(\mathcal{A}_F)  
        \geq \sup_{\mathcal{S} \sim \mathcal{S}'} \sum_{t=0}^{T-1} \sigma_{F,t} - \gamma T S_{n,C}(F, \rho) 
        = \mathcal{I}(F) - \gamma T S_{n,C}(F, \rho).
    \end{equation}

    \paragraph{Second lower bound.}
    Moreover, by definition of $S_{n, C}(F, \rho)$, for any arbitrarily small $\eta > 0$, there exist $i \in [n]$ and gradients $g_1, \ldots, g_n, g_i' \in \widebar B(0, C)$ such that $\|g_i - g_i'\| \leq \rho$ and 
    \[
    \| F(\dots, g_i, \dots) - F(\dots, g'_i, \dots) \|_2 \geq S_{n, C}(F, \rho) - \eta.
    \]
    We define $\ell_j(\cdot) = \langle g_j, \cdot \rangle$ for $j \neq i$, and for the differing $i-$th functions, we set $\ell_i(\cdot) = \langle g_i, \cdot \rangle$ and $\ell_i'(\cdot) = \langle g_i', \cdot \rangle$. 
    These functions are convex, $L$-smooth (with $L=0$), and $C$-Lipschitz. 
    Because the gradients are constant, the induced operators $\mathcal{F}$ and $\mathcal{F}'$ are constant and, for any $\theta_0 \in \R^d$ and $t \geq 1$,
    \begin{equation*}
        \| \theta_T - \theta_T' \|_2 
        = \| \gamma T \big( F(\dots, g_i, \dots) - F(\dots, g'_i, \dots) \big) \|_2
        \geq \gamma T (S_{n, C}(F, \rho) - \eta).
    \end{equation*}
    Because $\operatorname{Stab}(\mathcal{A}_F)$ is the supremum over all neighboring datasets, it must be that $\operatorname{Stab}(\mathcal{A}_F) \geq \gamma T (S_{n, C}(F, \rho) - \eta)$. 
    Since this holds for all $\eta > 0$, we conclude 
    \begin{equation}\label{eq:lb-sensitivity}
    \operatorname{Stab}(\mathcal{A}_F) \geq \gamma T S_{n, C}(F, \rho).
    \end{equation}

    \paragraph{Combining the lower bounds.}
    Finally, summing the two lower bounds~\eqref{eq:lb-sensitivity} and~\eqref{eq:lb-expansivity} yields
    \begin{align*}
        3 \operatorname{Stab}(\mathcal{A}_F) 
        = \operatorname{Stab}(\mathcal{A}_F) + 2\operatorname{Stab}(\mathcal{A}_F)
        & \geq (\mathcal{I}(F) - \gamma T S_{n,C}(F, \rho)) + 2 \gamma T S_{n,C}(F, \rho) \\
        & = \mathcal{I}(F) + \gamma T S_{n,C}(F, \rho). 
        \tag*{\BlackBox}
    \end{align*}

\noindent While $\mathcal{I}(F) = 0$ for positively affine aggregation rules, we show that this property does not hold for non-affine aggregation rules under mild assumptions.
Indeed, building on the techniques of Theorems~\ref{thm:universal_convexity} and~\ref{cor:nococoercivity}, we formalize below mild conditions capturing practical non-affine aggregations under which the expansivity of the update provably degrade algorithmic stability.
The core difficulty in proving these lower bounds arises from the constraints of the stability analysis. 
While Corollary~\ref{cor:expansivity} establishes expansivity for specific input pairs, stability analysis requires expansivity to occur along two coupled trajectories $\{\theta_t\}_{t=1}^T$ and $\{\theta_t'\}_{t=1}^T$, with common initialization $\theta_0=\theta_0'$, 
generated by the update~\eqref{eq:algo-update} whose inputs differ only through the $i$-th gradient, a difference induced by a single-sample perturbation.

Formally, for the first step, we require $\theta_1 - \theta_1' = \gamma ( f_{g_{-i}}(g_i(\theta_0)) - f_{g_{-i}}(g_i'(\theta_0)) )$ to be a separating vector of $g_i(\theta_1) - g_i(\theta_1')$ and $f_{g_{-i}}(g_i(\theta_1)) - f_{g_{-i}}(g_i(\theta_1'))$. 
This means the hyperplane normal to $\theta_1 - \theta_1'$ separates these two vectors, 
\[
    \textstyle \langle g_i(\theta_1) - g_i(\theta_1'), \theta_1 - \theta_1' \rangle > 0 \text{ and } \langle f_{g_{-i}}(g_i(\theta_1)) - f_{g_{-i}}(g_i(\theta_1')), \theta_1 - \theta_1' \rangle < 0. 
\]
The set of separating vectors is non-empty if and only if $f_{g_{-i}}(g_i(\theta_1))-f_{g_{-i}}(g_i(\theta_1')) \not\in \operatorname{cone}(g_i(\theta_1)- g_i(\theta_1'))$, a condition guaranteed for some gradient configurations since $F$ is non-affine. 
However, constructing a setup where the \textit{specific} vector $\theta_1-\theta_1'$ is a separating vector is non-trivial. %, and represents the crux of our construction.

This complex construction is necessary to prove degraded stability for continuous aggregation rules, as they typically have a similar sensitivity than their affine counterparts.
In contrast, discontinuous aggregation rules degrade stability even without violating co-coercivity. 
Indeed, a discontinuity induces the sensitivity term $S_{n, C}(F, \rho)$ to not decay nicely with more data, and hence remains a bottleneck independent of the total dataset size.
% \tbtodo{Discontinuous rules broadly accommodate methods in robust learning (e.g., the $\operatorname{filter}$ algorithm~\citealt{diakonikolas2019robust}, $\operatorname{SMEA}$~\citealt{pmlr-v202-allouah23a}, $\operatorname{CAF}$~\citealt{allouah2025towards}, $\operatorname{Krum}$~\citealt{blanchard-ml-with-adversaries} or $\operatorname{NNM}$~\citealt{pmlr-v206-allouah23a}), whereas continuous non-affine rules represent the complement of these methods.}
% While discontinuities readily produce such a setup, achieving this under continuous aggregation is more involved.
% Consequently, we separate our constructions depending on whether the non-affine aggregation rule is discontinuous or continuous. 
% Discontinuous rules broadly accommodate methods in robust learning (e.g., the $\operatorname{filter}$ algorithm~\citealt{diakonikolas2019robust}, $\operatorname{SMEA}$~\citealt{pmlr-v202-allouah23a}, $\operatorname{CAF}$~\citealt{allouah2025towards}, $\operatorname{Krum}$~\citealt{blanchard-ml-with-adversaries} or $\operatorname{NNM}$~\citealt{pmlr-v206-allouah23a}), whereas continuous rules represent the complement of these methods.

\noindent\textbf{Lemma~\ref{thm:1-step-dsep}}
    \emph{
        Assume the setting of Theorem~\ref{cor:unavoidable-instability} and Assumption~\ref{assump:one-step-lb-stab-continuous}, with $T=2$. 
        Then, for any $\theta_0 \in \R^d$, there exist convex, $L-$smooth and $C-$Lipschitz loss functions $\{\ell_j\}_{j \in [n]}, \ell'_i$ such that
        \begin{equation*}
            %\|\theta_{2} - \theta'_{2}\|_2 > \sigma_{F,1} > 0,
            \mathcal{I}(F) > 0.
        \end{equation*}
        % where $\sigma_{F,1} \vcentcolon= \| \theta_1 - \gamma \F(\theta_1) - (\theta_1' - \gamma \F(\theta_1')) \|_2  - \| \theta_1 - \theta_1' \|_2$ captures the expansivity of the update.
    }
\begin{proof} 
    We proceed constructively and fix the initialization to $\theta_0 = 0$ for convenience.

    \paragraph{Setup \& loss functions.}
    By Assumption~\ref{assump:one-step-lb-stab-continuous}, there exist $i \in [n]$, $g_{-i} \in \widebar B(0, C)^{n-1}$, $p \in \widebar B(0, C)$ and $\tilde \delta \neq 0$ such that for any $r_b > 0$, there exist $\epsilon > 0, b_0 \in B(p, r_b) \cap \widebar{B}(0, C)$ such that $\epsilon \tilde \delta = f_{g_{-i}}(b_0)-f_{g_{-i}}(p)$.
    Moreover, for any $r_q > 0$, there exists $q \in B(p, r_q) \cap \widebar B(0, C)$ 
    %such that $\langle p - q, \tilde \delta \rangle > 0$ and $\langle f(p) - f(q), \tilde \delta \rangle < 0$, i.e., $f_{g_{-i}}(p)-f_{g_{-i}}(q) \not\in \operatorname{cone}(p-q)$. 
    \begin{equation}\label{eq:int_sep}
        \langle p - q, \tilde \delta \rangle > 0 \quad \text{and} \quad \langle f_{g_{-i}}(p) - f_{g_{-i}}(q), \tilde \delta \rangle < 0,
    \end{equation}
    i.e., $f_{g_{-i}}(p)-f_{g_{-i}}(q) \not\in \operatorname{cone}(p-q)$.
    We set $b_1 \vcentcolon= p$, $\delta \vcentcolon= \epsilon \tilde \delta$.
    Because $\langle p - q, \delta \rangle > 0$, there exists a symmetric matrix\footnote{Explicitly, $H_1 = \frac{1}{\gamma} \frac{(p - q) (p - q)^\intercal}{\langle p - q, \delta \rangle} + \iota \left( I - \frac{\delta \delta^\intercal}{\|\delta\|^2} \right)$ for sufficiently small $\iota > 0$.} $H_1 \succ 0$ such that $\gamma H_1 \delta = p - q$.
    Moreover, we can ensure $\lambda_{\max}(H_1) \leq \frac{L}{2}$ by choosing $r_q$ sufficiently small.
    
    For $\theta \in \R^d$, we define for $j \neq i$, $\ell_j(\theta) = g_j^\intercal \theta$ so $\nabla \ell_j(\theta) = g_j$, and the following quadratics\footnote{
        Formally, to ensure $\ell_i$ and $\ell_i'$ represent empirical risks over adjacent datasets $S$ and $S'$ of size $m$, we define the individual sample loss as $\ell(\theta; (H, b)) = \frac{1}{2}\theta^\intercal H \theta + b^\intercal \theta$. 
        Set $\ell_i(\theta) = \frac{1}{m} \sum_{k=1}^{m} \ell(\theta; (H_1, b_1)) = \ell(\theta; (H_1, b_1)) $ and $\ell_i'(\theta) = \frac{m-1}{m} \ell(\theta; (H_1, b_1)) + \ell(\theta; (H', b'))$ with $H' = H_1 + m\Delta$ and $b_1 + m(b_0 - b_1)$. 
        To preserve convexity, $L$-smoothness and $C-$Lipschitzness, we must bound $\|\Delta\|_2 < \frac{1}{m}\lambda_{\min}(H_1)$ and $\|b_0 + m(b_0-b_1)\|_2 < C$, which can be done by shrinking $r_q$ and $r_b$.
        We also enforce the $C-$Lipchitz continuous constraint for all $\theta \in \R^d$ by truncating the quadratics outside the neighborhood where the proof is established.}
    \begin{align*}
        \ell_i(\theta) &= \frac{1}{2} \theta^\intercal H_1 \theta + b_1^\intercal \theta, \text{ thus } g_i(\theta) = H_1 \theta + b_1,\\
        \ell_i'(\theta) &= \frac{1}{2} \theta^\intercal H_0 \theta + b_0^\intercal \theta, \text{ thus } g_i'(\theta) = H_0 \theta + b_0.
    \end{align*}
    At $\theta_0 = 0$, $g_i(0) = b_1$ and $g_i'(0) = b_0$. 
    Hence $\theta_1 = -\gamma f_{g_{-i}}(b_1)$, $\theta_1' = -\gamma f_{g_{-i}}(b_0)$ and $\delta = \frac{\theta_1 - \theta_1'}{\gamma}$.
    We set $H_0$ so that $g_i'(\theta_1') = g_i(\theta_1') = b_1 - \gamma H_1 f_{g_{-i}}(b_0)$. 
    As $g_i'(\theta_1') = b_0 - \gamma H_0 f_{g_{-i}}(b_0)$, we require
    \begin{equation*}
        \gamma (H_0 - H_1) f_{g_{-i}}(b_0) = b_0 - b_1.
    \end{equation*}
    Let $x \vcentcolon= \gamma f_{g_{-i}}(b_0)$ and $y \vcentcolon= b_0 - b_1$. 
    We satisfy this by setting $H_0 = H_1 + \Delta$, where $\Delta$ is the symmetric rank-2 matrix $\Delta = \frac{y x^\intercal + x y^\intercal}{\|x\|^2} - \frac{\langle x, y \rangle}{\|x\|^4} x x^\intercal$. 
    We can ensure $H_0 \succ 0$ and $\lambda_{\max}(H_0) \leq L$ by chosing $r_b$ sufficiently small such that such that $\|\Delta\|_2 < \min\left(\frac{L}{2}, \lambda_{\min}(H_1)\right) = \lambda_{\min}(H_1)$. 

    \paragraph{Separation at $T=1$.} 
    Our construction yield the following identity. 
    \begin{equation}\label{eq:step3}
        g_i(\theta_1) = b_1 - \gamma H_1 f_{g_{-i}}(b_1) = p - \gamma H_1 f_{g_{-i}}(p).
    \end{equation}
    \begin{equation}\label{eq:step4}
        g_i(\theta_1') = g_i'(\theta_1') = b_1 - \gamma H_1 (f_{g_{-i}}(b_1) + \delta) = g_i(\theta_1) - \gamma H_1 \delta = g_i(\theta_1) - (p - q) 
    \end{equation}
    We next evaluate the separation property. 
    We already have
    \begin{equation*}
        \langle g_i(\theta_1) - g_i(\theta_1'), \delta \rangle = \langle \gamma H_1 \delta, \delta \rangle > 0.
    \end{equation*}
    Then, to ensure the second inequality we rely on the continuity of $f_{g_{-1}}$ in the neighborhood of $p$. 
    To do so, we analyze the limit of $g_i(\theta_1)$ and $g_i(\theta_1')$ as $q$ approaches $p$.
    Notice that we can choose $H_1$ such that $\lim_{r_q\to0} \|H_1\|_2 = 0$, therefore $\lim_{r_q\to0}g_i(\theta_1) = p$ and $\lim_{r_q\to0} g_i(\theta_1') = q$.
    Hence, 
    \begin{equation*}
        \lim_{r_q \to 0}\ \langle f_{g_{-i}}(g_i(\theta_1)) - f_{g_{-i}}(g_i(\theta_1')), \delta \rangle = \langle f_{g_{-i}}(p) - f_{g_{-i}}(q), \delta \rangle \underset{\eqref{eq:int_sep}}{<} 0. 
    \end{equation*}

    \paragraph{Consequence.}
    A single step of update~\eqref{eq:algo-update} yields (i) strict trajectories expansivity $\sigma_{F, 1} > 0$, with (ii) no triangular inequality slack $\xi_{F, 1} = 0$. 

    \paragraph{(ii)} By~\eqref{eq:step4}, we have $\mathcal{F}'(\theta_1') = f_{g_{-i}}(g_i'(\theta_1')) = f_{g_{-i}}(g_i(\theta_1')) =  \mathcal{F}(\theta_1')$.
    This proves
    \begin{equation*}
        \xi_{F,1} = \| \theta_{2} - \theta_{2}' \|_2 - \left( \| \theta_1 - \gamma \F(\theta_1) - (\theta_1' - \gamma \F(\theta_1')) \|_2 + \gamma \| \F'(\theta_1') - \F(\theta_1') \|_2 \right) 
        = 0.
    \end{equation*}

    \paragraph{(i)}
    Moreover, we prove $\sigma_{F, 1} = \|\theta_1 - \gamma \F(\theta_1) - (\theta_1' - \gamma \F(\theta_1'))\|_2 - \|\theta_1' - \theta_1\|_2 > 0$.
    In fact, squaring $\|\theta_1 - \gamma \F(\theta_1) - (\theta_1' - \gamma \F(\theta_1'))\|_2$ and isolating the dot product, we have
    \begin{equation*}
        - 2\gamma \langle \mathcal{F}(\theta_1') - \mathcal{F}(\theta_1), \theta_1' - \theta_1 \rangle
        = 2 \gamma^2 \langle f_{g_{-i}}(g_i(\theta_1')) - f_{g_{-i}}(g_i(\theta_1)),  \delta \rangle 
        > 0
    \end{equation*}
    Overall, we have
    $
        \|\theta_{2} - \theta'_{2}\|_2
        = \sigma_{F,1} + \gamma \|\delta_{F,0}\|_2
        > \sigma_{F,1}.
    $
\end{proof}

\section{\texorpdfstring{Alternative proof of~Theorem~\ref{thm:universal_convexity}}{}}\label{sec:alternative-monotonone}

In what follows, we present an alternative proof of Theorem~\ref{thm:universal_convexity} based on an analysis of the Jacobian of the induced operator $\F$. 
Our first step is to establish that $\F$ is differentiable almost everywhere, a property that carries over to the aggregation rule $F$ itself. 
While this approach yields comparable insights, it comes with a notable drawback: the argument requires integrating back the Jacobians to recover global properties of $F$. 
In full generality, this integration step necessitates additional, albeit mild, regularity assumptions on $F$ (the weakest being absolute continuity on lines), as well as careful manipulation of sets of Lebesgue measure zero and extensions of properties of monotone operators to input-wise monotone operators.
While these technical requirements are artifacts of the proof technique rather than intrinsic to the result, we present the derivation because it provides a different perspective, even if similar, and the extension of known monotone operator properties to input-wise monotone operators are of independent interest.

\begin{theorem}\label{lem:monotone-differentiable}
    Let $f: \mathbb{R}^d \to \mathbb{R}^d$ be a monotone operator defined on the entire space. 
    Then $f$ is differentiable almost everywhere (with respect to the Lebesgue measure).
\end{theorem}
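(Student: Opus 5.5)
The plan is to reduce the statement to a classical differentiability theorem for convex functions. I would use the maximal-monotone machinery together with Alberti--Ambrosio or Mignot type results, or, more elementarily, go through the resolvent and Rademacher's theorem.

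\textbf{Step 1 (local boundedness and maximality).} A monotone operator defined on all of $\R^d$ is locally bounded at every interior point of its domain; this is the Rockafellar/Zarantonello theorem. Hence $f$ is locally bounded on $\R^d$.

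\textbf{Step 2 (passage to the resolvent).} Consider the graph $G=\{(x,f(x))\}$ and its monotone closure. Let $\bar f$ be a maximal monotone extension; since $\operatorname{dom} f=\R^d$, it has full domain. By Minty's theorem, the resolvent $J=(I+\bar f)^{-1}$ is single-valued, firmly nonexpansive, and defined on all of $\R^d$. Being $1$-Lipschitz, $J$ is differentiable almost everywhere by Rademacher's theorem. Likewise $I-J$, the resolvent of $\bar f^{-1}$, is $1$-Lipschitz.

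\textbf{Step 3 (transfer back).} Use the Minty parametrization $x=J(z)$, $\bar f(x)\ni z-J(z)$, which is a bi-Lipschitz homeomorphism between $\R^d$ and the graph of $\bar f$. Two facts are needed:
\begin{itemize}
\item[(a)] $\bar f$ is single-valued outside a Lebesgue-null set. This is Zarantonello's theorem: the set of multivaluedness is contained in a countable union of Lipschitz hypersurfaces.
\item[(b)] At a.e.\ $x$, $\bar f$ is differentiable. This is Mignot's theorem, proved via the observation that $\{z: J \text{ not differentiable at } z \text{ or } \nabla J(z) \text{ singular}\}$ has image under $J$ of measure zero. That last point follows from the area formula applied to the Lipschitz map $J$, whose image of the set where the Jacobian is singular is null.
\end{itemize}
On the remaining full-measure set, $J$ is invertible near $z$ with invertible derivative, so $\bar f=J^{-1}-I$ is differentiable at $x=J(z)$ with $\nabla\bar f(x)=\nabla J(z)^{-1}-I$. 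Finally, $f(x)\in\bar f(x)$, and $\bar f$ is single-valued a.e. Hence $f=\bar f$ a.e., and differentiability of $f$ at such points follows. To make this rigorous, I would show that the difference quotient of $f$ matches that of $\bar f$ at points where $\bar f$ is single-valued and differentiable. This uses monotonicity to squeeze $f(y)$ between nearby values: at points of single-valuedness, $\bar f$ is continuous, and $f(y)\in\bar f(y)$ is controlled by the upper semicontinuity of the graph.

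\textbf{Main obstacle.} I expect the hardest part to be Step 3(b), i.e.\ the inverse-function step for a merely Lipschitz $J$. Invertibility of $\nabla J(z)$ alone does not give local invertibility of a non-$C^1$ map. Here I would instead exploit firm nonexpansiveness: $J$ is injective onto the graph coordinate, and the inverse $J^{-1}=I+\bar f$ is monotone. I would then derive differentiability of $\bar f$ at $J(z)$ directly from the first-order expansion of $J$ at $z$ combined with the lower bound $\langle J(z')-J(z),z'-z\rangle\ge\|J(z')-J(z)\|^2$. If this becomes too delicate, I would simply invoke Mignot (1976) or Alberti--Ambrosio (1999), which state exactly that maximal monotone operators are differentiable a.e.\ on the interior of their domain.
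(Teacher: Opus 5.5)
Your proposal is correct and follows essentially the paper's route: take a maximal monotone extension $\bar f$ of $f$, invoke the a.e.\ differentiability of maximal monotone operators (Mignot; Alberti--Ambrosio, Thm.~3.2). The paper simply cites this rather than re-deriving it via the resolvent, Rademacher and the area formula as you sketch, and then transfers the result to $f$ through $\operatorname{Graph}(f)\subseteq\operatorname{Graph}(\bar f)$. The transfer is simpler than you expect: differentiability of $\bar f$ at $x_0$ in the Alberti--Ambrosio sense already bounds $\|y-\bar f(x_0)-L_{x_0}(x-x_0)\|$ uniformly over all $y\in\bar f(x)$, so taking $y=f(x)$ gives Fr\'echet differentiability of $f$ at once, with no need for the (by itself insufficient) statement $f=\bar f$ a.e.\ or for a semicontinuity squeeze.
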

\begin{proof}
    Since $f$ is monotone, Zorn's Lemma guarantees the existence of a maximal monotone operator $\tilde{f}: \mathbb{R}^d \to \mathcal{P}(\mathbb{R}^d)$ that extends $f$~\citep[Theorem 20.21]{Bauschke2011ConvexAA}. 
    This means that the graph of $f$ is contained in the graph of $\tilde{f}$, $\operatorname{Graph}(f) \subseteq \operatorname{Graph}(\tilde{f}) \vcentcolon= \{(x,y) \in \mathbb{R}^d \times \mathbb{R}^d\ |\ y \in \tilde f(x)\}$, which is maximal with respect to graph inclusion among monotone operators.
    In other words, $f(x) \in \tilde{f}(x)$ for all $x\in\R^d$.

    From~\citet[Theorem 3.2]{albamb96}, any maximal monotone operator on an open domain is differentiable almost everywhere. 
    Let $D \subset \mathbb{R}^d$ be the set of points where $\tilde{f}$ is differentiable. 
    We know that $\mathbb{R}^d \setminus D$ has Lebesgue measure zero.

    Fix a point $x_0 \in D$. By the definition of differentiability for a set-valued operator:
    (1) $\tilde{f}(x_0)$ must be a singleton and, with a slight abuse of notation, we identify this set with its unique element;
    (2) There exists a linear map (Jacobian) $L_{x_0}: \mathbb{R}^d \to \mathbb{R}^d$ such that
    \[
        \lim_{\substack{x \to x_0 \\ y \in \tilde{f}(x)}} \frac{\| y - \tilde{f}(x_0) - L_{x_0}(x - x_0) \|_2}{\| x - x_0 \|_2} = 0.
    \]
    Since $\operatorname{Graph}(f) \subseteq \operatorname{Graph}(\tilde{f})$, the condition $y \in \tilde{f}(x)$ is satisfied by $y = f(x)$. 
    Furthermore, since $\tilde{f}(x_0)$ is a singleton, $f(x_0) = \tilde{f}(x_0)$.
    Substituting these into the limit expression, we restrict the limit to points in the graph of $f$:
    \[
        \lim_{x \to x_0} \frac{\| f(x) - f(x_0) - L_{x_0}(x - x_0) \|_2}{\| x - x_0 \|_2} = 0.
    \]
    This is exactly the definition of Fréchet differentiability for the single-valued function $f$ at $x_0$.
    Since $f$ is differentiable at every point in $D$, and $\mathbb{R}^d \setminus D$ has Lebesgue measure zero, $f$ is differentiable almost everywhere.
\end{proof}
The central condition for the monotonicity of the induced aggregator is that for any admissible loss Hessian, the Jacobian of the induced aggregator must be positive semi-definite. 
Crucially, this condition implies structure over the partial Jacobians of the underlying aggregation rule used. 
\begin{lemma}\label{lem:psd-crux}
  Let $d \geq 2$, and $D \in \R^{d \times d}$ a matrix. 
  If for all symmetric p.s.d.\ matrix $H \in \mc{S}^d_+$, $DH$ is p.s.d., that is
  \begin{equation*}
    \forall H \in \mc{S}^d_+, \forall v \in \R^d, v^\intercal DH v = v^\intercal \left(\frac{DH + (DH)^\intercal}{2}\right) v \geq 0,
  \end{equation*}
  then there exists $\alpha \in \R$ such that $D = \alpha I$.
\end{lemma}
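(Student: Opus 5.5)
The plan is to test the hypothesis only on rank-one matrices $H = ww^\intercal$ with $w \in \R^d$, which lie in $\mc{S}^d_+$, exactly mirroring the rank-one Hessians of step~\ref{item:def_R_i} in the proof of Theorem~\ref{thm:universal_convexity}. For such $H$ the condition reads $v^\intercal D w\, w^\intercal v \geq 0$ for all $v, w \in \R^d$, i.e.\ $\langle v, Dw\rangle \langle w, v\rangle \geq 0$. This is the matrix analogue of the alignment condition of Step~1, and it will play the role of that step.

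First I would show that $Dw \in \operatorname{cone}(w)$ for every $w \neq 0$. Suppose not. Then, exactly as in Step~1, the bisector
\[
\delta = \frac{w}{\|w\|_2} - \frac{Dw}{\|Dw\|_2}
\]
satisfies $\langle w,\delta\rangle>0$ and $\langle Dw,\delta\rangle<0$ by strict Cauchy--Schwarz. Taking $v=\delta$ then gives $\langle \delta, Dw\rangle\langle w,\delta\rangle<0$, a contradiction. (If $Dw=0$ the claim is trivial.) Hence for every $w\neq 0$ there is $\lambda(w)\geq 0$ with $Dw=\lambda(w)w$, so every nonzero vector is an eigenvector of $D$.

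Second, I would conclude with the standard linear-algebra argument, which is a linear version of Step~2. For linearly independent $u,w$, linearity gives
\[
\lambda(u+w)(u+w) = D(u+w) = \lambda(u)u + \lambda(w)w,
\]
so $\lambda(u)=\lambda(u+w)=\lambda(w)$. For collinear nonzero $u,w$, I pass through a third vector $z$ independent of both, which exists because $d\geq 2$. Thus $\lambda$ equals a constant $\alpha$ on $\R^d\setminus\{0\}$, and $D=\alpha I$ (indeed with $\alpha\geq 0$, which is stronger than the stated $\alpha\in\R$).

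I do not expect a real obstacle. The only delicate point is the first step: one must choose the test vector $v$ depending on $w$ so that the two inner products have opposite strict signs, and the bisector construction handles this directly. The assumption $d\geq 2$ is used only in the collinear case of the second step.
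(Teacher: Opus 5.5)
Your proof is correct, but it takes a different route from the paper.

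\textbf{The paper's route.} It reduces to $2\times 2$ principal submatrices $D^{(2)}$, embedding $H$ and $v$ on a pair of coordinates. It then splits $D^{(2)}$ into symmetric and skew parts and rotates into the eigenbasis of the symmetric part, so that $\tilde D^{(2)} = \begin{pmatrix}\lambda_1 & -a\\ a & \lambda_2\end{pmatrix}$. Next it tests the two rank-one matrices $\begin{pmatrix}1&1\\1&1\end{pmatrix}$ and $\begin{pmatrix}1&-1\\-1&1\end{pmatrix}$. For these, the symmetric part of $\tilde D^{(2)}H$ has determinant $-\bigl(a\mp\frac{\lambda_1-\lambda_2}{2}\bigr)^2$. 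Nonnegativity of both determinants forces $a=0$ and $\lambda_1=\lambda_2$, and ranging over all coordinate pairs gives $D=\lambda I$.

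\textbf{Your route.} You work coordinate-free with every rank-one $H=ww^\intercal$, for which the hypothesis reads $\langle v,Dw\rangle\langle w,v\rangle\ge 0$. You then reuse the bisector of Step~1 to show that every $w\neq 0$ is an eigenvector with nonnegative eigenvalue. The alternative proof thus becomes literally the linearization of Steps~1--2 of the main proof.

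\textbf{What each buys.} The paper's computation gives an explicit picture of the two obstructions: anisotropy of the symmetric part and a nonzero skew part. It also shows that two well-chosen tests per coordinate plane suffice. Your route is shorter and avoids both the spectral decomposition and the reduction to $d=2$. It gives $\alpha\ge 0$ directly. It also yields a sharp per-direction criterion: $ww^\intercal$ is admissible iff $Dw\in\operatorname{cone}(w)$. This makes the recipe of Section~\ref{sec:positive-result-tm} immediate: for separable losses, the coordinate Hessians $e_ke_k^\intercal$ alone force $D$ to be diagonal with nonnegative entries.

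\textbf{A small remark.} In your second step, the collinear case follows from homogeneity alone: $u=cw$ gives $Du=c\lambda(w)w=\lambda(w)u$. So your argument never actually needs $d\ge 2$, which is consistent with the statement being trivial for $d=1$.
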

\begin{proof}
  In what follows, we restrict our attention to the case $d=2$ without loss of generality. 
  Indeed, our analysis only requires verifying whether a given matrix is positive semi-definite, a property that can be falsified by examining its $2\times2$ principal submatrices. 
  Specifically, a matrix is positive semi-definite if and only if all principal submatrices of its symmetric part are positive semi-definite. 
  Consequently, establishing a violation of positive semi-definiteness in dimension $d=2$ immediately extends to all dimensions $d \geq 2$, by considering matrices $H$ and vectors $v$ that act nontrivially on only two coordinates and vanish on the remaining ones.
  More precisely, let $a, b \in [d]$ be distinct indices, $a < b$, and $D^{(2)} \vcentcolon= \begin{pmatrix} D_{aa} & D_{ab} \\ D_{ba} & D_{bb} \end{pmatrix} \in \R^{2 \times 2}$ denote the principal submatrix of $D$ restricted to these coordinates. 
  For any $H^{(2)} \in \mc{S}^2_+$ and $v^{(2)} \in \R^2$, we construct $H \in \mc{S}^d_+$ and $v \in \R^d$ by embedding $H^{(2)}$ and $v^{(2)}$ at indices $a, b$ and setting all other entries to zero, i.e., $H_{k,l} = H_{l,k} = 0, \forall k, l \notin \{ a, b \}$, $H_{aa} = H^{(2)}_{11}$, $H_{bb} = H^{(2)}_{22}$, $H_{ab} = H_{ba} = H^{(2)}_{12}$, and $v_k = 0, \forall k \notin \{ a, b \}$, $v_a = v^{(2)}_1$, $v_b = v^{(2)}_2$.
  The condition $v^\intercal D H v \geq 0$ simplifies to
  \[
    v^\intercal D H v 
    = \sum_{i,j \in \{ a, b \}} v_i \left( \sum_{k \in \{ a, b \}} D_{ik} H_{kj} \right) v_j 
    = v^{(2)\intercal} D^{(2)} H^{(2)} v^{(2)} \geq 0.
  \]
  We conclude by applying the following derivation to all combinaison of $a, b \in [d]$.

  Consider the $2 \times 2$ submatrix $D^{(2)}$ defined above, and let
  \[
    S \vcentcolon= \frac{D^{(2)} + D^{(2)\intercal}}{2},
    \quad
    A \vcentcolon= \frac{D^{(2)} - D^{(2)\intercal}}{2}
  \]
  denote its symmetric and anti-symmetric parts, respectively, where we omit the superscript for readability.
  We work in the rotated coordinate system that diagonalizes $S$ (i.e., $\tilde{v} = Qv$, where $Q$ is some orthogonal matrix). 
  By the spectral theorem, there exists an orthogonal matrix $Q$ such that
  \[
    S = Q \tilde{S} Q^\intercal,
    \quad\text{where}\quad
    \tilde{S} = \begin{pmatrix} \lambda_1 & 0 \\ 0 & \lambda_2 \end{pmatrix},\
    \lambda_1, \lambda_2 \in \R. 
  \]
  Moreover, since orthogonal transformations map orthonormal bases to orthonormal bases and preserve anti-symmetry, the transformed matrix $\tilde{A}$ remains anti-symmetric and therefore admits the following canonical representation 
  \[
    \tilde{A} = Q^\intercal A Q = \begin{pmatrix} 0 & -\alpha \\ \alpha & 0 \end{pmatrix},
    \quad\text{where}\quad
    \alpha \in \R.
  \]
  Since $\tilde{D}^{(2)}H$ must be p.s.d.\ for all $H \in \mathcal{S}^2_+$, we test this condition using two specific choices,
  \begin{equation*}
    H_1 = \begin{pmatrix} 1 \\ 1 \end{pmatrix} \begin{pmatrix} 1 \\ 1 \end{pmatrix}^\intercal = \begin{pmatrix} 1 & 1 \\ 1 & 1 \end{pmatrix}
    \quad\text{and}\quad
    H_2 = \begin{pmatrix} 1 \\ -1 \end{pmatrix} \begin{pmatrix} 1 \\ -1 \end{pmatrix}^\intercal = \begin{pmatrix} 1 & -1 \\ -1 & 1 \end{pmatrix}.
  \end{equation*}
  Specifically, for each choice, we compute the determinant of the symmetric part of $\tilde{D}^{(2)}H$ to verify positive semi-definiteness.
  \begin{itemize}
    \item[(1)] For $H_1$, we have
      \[
        \tilde{D}^{(2)} H_1 = (\tilde{S} + \tilde{A}) H_1
        =
        \begin{pmatrix}
          \lambda_1 & -\alpha \\
          \alpha & \lambda_2
        \end{pmatrix}
        \begin{pmatrix}
          1 & 1 \\
          1 & 1
        \end{pmatrix}
        =
        \begin{pmatrix}
          \lambda_1 - \alpha & \lambda_1 - \alpha \\
          \lambda_2 + \alpha & \lambda_2 + \alpha
        \end{pmatrix}.
      \]
      Thus, the symmetric part is
      \[
        \frac{\tilde{D}^{(2)}H_1 + (\tilde{D}^{(2)}H_1)^{\intercal}}{2}
        =
        \begin{pmatrix}
          \lambda_1 - \alpha & \frac{\lambda_1 + \lambda_2}{2} \\
          \frac{\lambda_1 + \lambda_2}{2} & \lambda_2 + \alpha
        \end{pmatrix}.
      \]
      Taking the determinant yields
      \begin{align*}
        \det\left(\frac{\tilde{D}^{(2)}H_1 + (\tilde{D}^{(2)}H_1)^{\intercal}}{2}\right)
        &= (\lambda_1 - \alpha)(\lambda_2 + \alpha) - \left(\frac{\lambda_1 + \lambda_2}{2}\right)^2 \\
        &= - \alpha^2 + \alpha(\lambda_1 - \lambda_2) + \left(\lambda_1\lambda_2 - \left(\frac{\lambda_1 + \lambda_2}{2}\right)^2\right) \\
        &= -\left[\alpha^2 - 2\alpha \frac{\lambda_1 - \lambda_2}{2} + \left(\frac{\lambda_1 - \lambda_2}{2}\right)^2\right]\\
        &= -\left(\alpha - \frac{\lambda_1 - \lambda_2}{2}\right)^2.
      \end{align*}

    \item[(2)] For $H_2$, we have
      \[
        \tilde{D}^{(2)} H_2 = (\tilde{S} + \tilde{A}) H_2
        =
        \begin{pmatrix}
          \lambda_1 & -\alpha \\
          \alpha & \lambda_2
        \end{pmatrix}
        \begin{pmatrix}
          1 & -1 \\
          -1 & 1
        \end{pmatrix}
        =
        \begin{pmatrix}
          \lambda_1 + \alpha & -\lambda_1 - \alpha \\
          \alpha - \lambda_2 & \lambda_2 - \alpha
        \end{pmatrix}.
      \]

      Thus, the symmetric part is
      \[
        \frac{\tilde{D}^{(2)}H_2 + (\tilde{D}^{(2)}H_2)^{\intercal}}{2}
        =
        \begin{pmatrix}
          \lambda_1 + \alpha & -\frac{\lambda_1 + \lambda_2}{2} \\
          -\frac{\lambda_1 + \lambda_2}{2} & \lambda_2 - \alpha
        \end{pmatrix}.
      \]

      Taking the determinant yields
      \begin{align*}
        \det\left(\frac{\tilde{D}^{(2)}H_2 + (\tilde{D}^{(2)}H_2)^{\intercal}}{2}\right)
        &= (\lambda_1 + \alpha)(\lambda_2 - \alpha) - \left(\frac{\lambda_1 + \lambda_2}{2}\right)^2 \\
        &= -\alpha^2 - \alpha(\lambda_1 - \lambda_2) + \left(\lambda_1\lambda_2 - \left(\frac{\lambda_1 + \lambda_2}{2}\right)^2\right)\\
        &= -\left[\alpha^2 + 2\alpha \frac{\lambda_1 - \lambda_2}{2} + \left(\frac{\lambda_1 - \lambda_2}{2}\right)^2\right]\\
        &= -\left(\alpha + \frac{\lambda_1 - \lambda_2}{2}\right)^2.
      \end{align*}
  \end{itemize}
  For $\tilde{D}^{(2)}H_1$ and $\tilde{D}^{(2)}H_2$ to be p.s.d.\ in both case, i.e., the determinants to be non-negative, we must have
  \[
    \lambda_1 - \lambda_2 = 0 \quad \text{(isotropy)}, \qquad
    \alpha = 0 \quad \text{(symmetry)}.
  \]

  This implies that the considered principal submatrix of $\tilde{D}^{(2)} = \lambda I^{(2)}$ of $D$ is a scalar multiple of the identity matrix in the rotated basis. 
  Since the identity matrix is rotation-invariant, $D^{(2)} = Q^\intercal \tilde{D}^{(2)} Q = Q^\intercal \lambda I^{(2)} Q = \lambda Q^\intercal Q = \lambda I^{(2)}$ is a scalar multiple of the identity in the original basis. 
  As this result holds for any pair of indices $a, b \in [d]$, we conclude that $D = \lambda I$ for a $\lambda \in \R_+$.
\end{proof}

This structure, together with the assumed regularity assumptions, enables one to conclude, by integration, that the aggregation rule must be affine.
To prove that the result holds under the weak regularity assumption of absolute continuity on lines, we establish the following technical lemmas.
\begin{lemma}\label{lem:wwmonotone-localintegrability}
Let $F: (\mathbb{R}^d)^n \to \mathbb{R}^d$ be a input-wise monotone function. Then $F$ is locally bounded.
\end{lemma}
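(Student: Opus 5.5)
The plan is to prove a quantitative local-boundedness estimate for a \emph{single} monotone map. Then we chain it across the $n$ inputs, freeing one input at a time, so that only finitely many ``anchor'' evaluations of $F$ ever have to be bounded directly.

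\textbf{Step 1 (one-input estimate).} Let $f:\R^d\to\R^d$ be monotone, $x_0\in\R^d$, $R>0$. Consider the $2d$ vertices $y_{j,\pm}=x_0\pm Re_j$ of the $\ell_1$-cross-polytope. Suppose $\|f(y_{j,\pm})\|_2\le M$ for all $j,\pm$. I will show that every $x$ with $\|x-x_0\|_1\le R/2$ satisfies $\|f(x)\|_\infty\le 3M$.

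Monotonicity gives $\langle f(x),y-x\rangle\le\langle f(y),y-x\rangle$. Since $\|y-x\|_2\le R+R/2$, the right-hand side is at most $\tfrac32 MR$. Writing $y-x=\pm Re_j+(x_0-x)$, we get
\[
\pm R\,[f(x)]_j\le \tfrac32 MR+\langle f(x),x-x_0\rangle\le \tfrac32 MR+\|f(x)\|_\infty\,\|x-x_0\|_1 .
\]
Choose $j,\pm$ to realize $\|f(x)\|_\infty$ on the left. Using $\|x-x_0\|_1\le R/2$, this yields $R\|f(x)\|_\infty\le\tfrac32MR+\tfrac R2\|f(x)\|_\infty$, that is, $\|f(x)\|_\infty\le 3M$.

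The point to emphasize is that the constant depends only on $M$. It does not depend on $f$ itself, which is what lets the estimate be applied uniformly to a whole family of partial maps.

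\textbf{Step 2 (induction over inputs).} Fix a base configuration $(g^0_1,\ldots,g^0_n)$ and $R>0$. Let $V_k$ be the finite set of vertices $g^0_k\pm Re_j$, and let $U_k=\{u:\|u-g^0_k\|_1\le R/2\}$. For $k=n,n-1,\ldots,0$, define
\[
M_k:=\sup\{\|F(v_1,\ldots,v_k,u_{k+1},\ldots,u_n)\|_2 : v_l\in V_l,\ u_l\in U_l\}.
\]
We show every $M_k$ is finite, by descending induction on $k$.

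For $k=n$ this is a maximum over finitely many points, so $M_n<\infty$.

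For the step from $k$ to $k-1$, fix $v_1,\ldots,v_{k-1}$ and $u_{k+1},\ldots,u_n$, and consider the partial map $u\mapsto F(v_1,\ldots,v_{k-1},u,u_{k+1},\ldots,u_n)$. By input-wise monotonicity (Definition~\ref{def:workerwisemonotone}) this map is monotone. Its values at the vertices $V_k$ are bounded by $M_k$. Step 1 then gives the bound $3\sqrt d\,M_k$ on $U_k$, uniformly in the fixed arguments. Hence $M_{k-1}\le 3\sqrt d\,M_k$.

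Finally, $M_0<\infty$ says exactly that $F$ is bounded on the neighbourhood $U_1\times\cdots\times U_n$ of $(g^0_1,\ldots,g^0_n)$. Since the base configuration was arbitrary, $F$ is locally bounded.

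\textbf{Main obstacle.} The delicate point is ensuring that Step 1 produces a bound independent of the particular monotone map. Without this, the other arguments (the ones fixed at anchors or free in $U_l$) range over infinitely many configurations and no finite maximum is available. The $\ell_1$/$\ell_\infty$ duality trick in Step 1, which absorbs $\langle f(x),x-x_0\rangle$ into the left-hand side, is what supplies this uniformity. I would check that inequality carefully, in particular the requirement $\|x-x_0\|_1\le R/2$, which is why $U_k$ is defined as an $\ell_1$-ball.
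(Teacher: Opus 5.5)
Your proof is correct, and its overall architecture is the paper's. Both arguments freeze all inputs but one, and bound the free input via monotonicity against finitely many anchors in that input, with a constant independent of the frozen inputs. Both then peel the inputs off one at a time: the paper's induction on $n$, applied to the $(n-1)$-input maps $F(p_k,\cdot)$, is your sequence $M_n,\dots,M_0$ rolled up.

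The genuine difference is the one-input estimate.
\begin{itemize}
\item The paper encloses the projection $K_1$ of a compact set in a simplex cage $P$ with $K_1\subset\operatorname{int}\operatorname{conv}(P)$. It uses directions $p_k-g_1$ taken from the moving point itself, so no cross term arises. With the cosine measure $\gamma=\min_{g_1\in K_1}\min_{\|v\|_2=1}\max_k\langle v,p_k-g_1\rangle>0$ it gets $\|F(g)\|_2\le (R/\gamma)\max_k\|F(p_k,g_{-1})\|_2$. To make $\gamma$ uniform over $g_1$, it must invoke positive-spanning theory and continuity of the cosine measure on $K_1$, and it cites Borwein's theorem for the base case $n=1$.
\item You fix the directions $\pm Re_j$ at the center $x_0$, so uniformity is automatic. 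You pay with the cross term $\langle f(x),x-x_0\rangle$, which your $\ell_1/\ell_\infty$ absorption handles on the $\ell_1$-ball of radius $R/2$.
\end{itemize}

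Your route is fully elementary and self-contained: Step 1 is itself a proof of the $n=1$ case. It also yields the explicit bound $\sup_{U_1\times\cdots\times U_n}\|F\|_2\le(3\sqrt d)^n\max_{V_1\times\cdots\times V_n}\|F\|_2$. The paper's route treats an arbitrary compact set in one stroke. You instead obtain boundedness on a neighbourhood of each point, which is equivalent via a finite subcover.
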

\begin{proof}
We proceed by induction on the number of inputs $n$.

\paragraph{Base Case ($n=1$).}
$F$ is a standard monotone operator defined on the entire space $\mathbb{R}^d$, therefore $F$ is locally bounded~\citep[Theorem 2]{Borwein1989LocalBO}.

\paragraph{Inductive Step.}
Assume local boundedness holds for $n-1$ inputs.
Let $K$ be a compact subset of $(\mathbb{R}^d)^n$. 
Let $K_1$ denote the projection of $K$ onto the first input's coordinate, and $K_{-1}$ the projection onto the remaining coordinates.
We construct a ``cage''  $P = \{p_1, \dots, p_m\} \subset \mathbb{R}^d$ consisting of $m\geq d+1$ affinely independent points, such that $K_1$ is contained in the interior of the convex hull of $P$.
We rely on the rwo geometric quantity determined by $P$ and $K$.
\begin{itemize}
  \item The maximum radius, $R \vcentcolon= \max_{k\in[m]} \max_{g_1 \in K_1} \|p_k - g_1\|_2 < \infty$.
  \item The minimum unormalized cosine measure, representing the ability of the cage to bound all directions,
  \[ 
    \gamma \vcentcolon= \min_{g_{1} \in K_1} \operatorname{ucm}(g_1), \text{ where } \operatorname{ucm}(g_1) \vcentcolon= \min_{\|v\|_2=1} \max_{k \in \{1,\dots,m\}} \langle v, p_k - g_1 \rangle. 
  \]
  The condition that $K_1 \subset \operatorname{int}(\operatorname{conv}(P))$ ensures that for any $g_1 \in K_1$, the vectors $\{p_k - g_1\}_{k \in [m]}$ positively span $\mathbb{R}^d$, or equivalently, that $\operatorname{ucm}(g_1) > 0$ for all $g_1 \in K_1$~\citep[Theorem 2.3]{conn2009introduction}.
  Finally, since $ucm$ is continuous and stricly positive on the compact $K_1$, it attains a minimum, which is strictly positive $\gamma > 0$.
\end{itemize}

Consider any configuration $g = (g_1, g_{-1}) \in K$. 
If $F(g) = 0$, the bound $\|F(g)\|_2 = 0$ holds trivially. 
Assume henceforth that $\|F(g)\|_2 > 0$.
For each vertex $p_k \in P$, monotonicity in the first variable implies
\[ 
  \langle F(g) - F(p_k, g_{-1}), g_1 - p_k \rangle \geq 0. 
\]
Rearranging terms,
\begin{equation}\label{eq:monotone-bounded-upperbound}
  \langle F(g), p_k - g_1 \rangle \leq \langle F(p_k, g_{-1}), p_k - g_1 \rangle. 
\end{equation}
By the definition of $\gamma$, there exists an index $k^*$ such that $\langle \frac{F(g)}{\|F(g)\|_2}, p_{k^*} - g_1 \rangle \geq \gamma$. 
Multiplying by $\|F(g)\|_2$, we get
\begin{equation}\label{eq:monotone-bounded-lowerbound}
    \langle F(g), p_{k^*} - g_1 \rangle \geq \gamma \|F(g)\|_2.
\end{equation}
Combining the lower bound~\ref{eq:monotone-bounded-lowerbound} with the upper bound~\ref{eq:monotone-bounded-upperbound} gives
\begin{equation}\label{eq:monotone-bounded-combined}
    \|F(g)\|_2 \leq \frac{1}{\gamma} \langle F(p_{k^*}, g_{-1}), p_{k^*} - g_1 \rangle.
\end{equation}

For each fixed $p_k$, $f_{p_{k}} \vcentcolon= F(p_{k}, \cdot)$ depends on $n-1$ variables and is input-wise monotone. 
By the inductive hypothesis, it is locally bounded. 
Since $K_{-1}$ is compact, there exists a constant $M_{n-1} < \infty$ such that $\max_{k\in[m]}\sup_{g'_{-1} \in K_{-1}} \|F(p_k, g'_{-1})\|_2 \leq M_{n-1}$.
Consequently, applying the Cauchy–Schwarz inequality to~\eqref{eq:monotone-bounded-lowerbound} yields
\[
  \|F(g)\|_2 \leq \frac{R}{\gamma} M_{n-1},
\]
where $R$, $\gamma$ and $M_{n-1}$ are constants independent of $g$. 
Since $g \in K$ was fixed arbitrarily, this proves uniform boundedness of $F$ on $K$. 
Finally, as $K$ was an arbitrary compact subset of $(\R^d)^n$, $F$ is locally bounded on $(\R^d)^n$.
\end{proof}

By showing that the ACL assumption, along with the other hypotheses, ensures Sobolev regularity of the aggregation, we deduce that it coincides almost everywhere with an affine aggregator.
\begin{comment}
\begin{lemma}\label{lem:ACL-equal-ae}
  Let $d \geq 2$ and $F: (\R^d)^n \to \R^d$ be ACL (cf. Definition~\ref{def:ACL}), input-wise monotone and input-wise almost everywhere differentiable (cf. Definition~\ref{def:workerwisemonotone}).
  If $F$ has worker-wise constant partial Jacobians almost everywhere, that is there exist $\alpha \in \R^n$ such that for all $i \in [n]$, for all $g_{-i} \in (\R^d)^{n-1}$, and for almost every $g_i \in \R^d$,
  \begin{equation*}
    D_i F(g_1, \ldots, g_n) = \alpha_i I_d,
  \end{equation*}
  then, for almost every $(g_1, \ldots, g_n) \in (\mathbb{R}^d)^n$,
  \[ 
    F(g) = \sum_{j=1}^n \alpha_j g_j + C \quad \text{where $C\in\R^d$ is constant of integration}.
  \]
\end{lemma}
\end{comment}
\begin{lemma}\label{lem:ACL-equal-ae}
  Let $d \geq 2$ and $F: (\R^d)^n \to \R^d$ be ACL (cf. Definition~\ref{def:ACL}), input-wise monotone and input-wise almost everywhere differentiable (cf. Definition~\ref{def:workerwisemonotone}).
  If $F$ has input-wise constant partial Jacobians almost everywhere (i.e., for all $i \in [n]$, for all $g_{-i} \in (\R^d)^{n-1}$, and for a.e.\ $g_i \in \R^d$, $D_i F(g_1, \ldots, g_n) = \alpha_i I_d$, where $\alpha \in \R^n$),
  then $F$ is input-wise affine almost everywhere (i.e., for a.e.\ $(g_1, \ldots, g_n) \in (\mathbb{R}^d)^n$, $F(g_1, \ldots, g_n) = \sum_{j=1}^n \alpha_j g_j + C$, where $C\in\R^d$).
\end{lemma}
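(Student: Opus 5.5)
The proof goes in three steps: show that $F$ has locally integrable weak derivatives, identify those weak derivatives with the a.e.\ partial Jacobians $\alpha_i I_d$, and then integrate. For the first step, Lemma~\ref{lem:wwmonotone-localintegrability} gives that $F$ is locally bounded, so $F \in L^1_{loc}((\R^d)^n;\R^d)$. Together with the ACL property (Definition~\ref{def:ACL}), the characterization of Sobolev functions \citep[Theorem 11.45]{leoni2017first} will show that $F \in W^{1,1}_{loc}$, once I check that the classical partial derivatives, which exist a.e.\ on almost every line, are locally integrable. Local integrability should follow from the hypothesis itself: for each $i$ and each scalar coordinate $t$ of $g_i$, the partial derivative $\partial_t F$ equals the $t$-th column of $D_iF = \alpha_i I_d$ a.e. It is therefore bounded by $|\alpha_i|$.

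Next, I will identify the weak gradient with the pointwise one. On almost every line parallel to a coordinate axis, $F$ is absolutely continuous. Its a.e.\ derivative there coincides with the classical partial derivative, so integrating by parts against a test function along each line and applying Fubini shows that the weak partial derivatives equal the a.e.\ ones. Hence the weak Jacobian of $F$ is the constant block-diagonal matrix $(\alpha_1 I_d, \ldots, \alpha_n I_d)$ almost everywhere.

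Finally, set $G(g) := F(g) - \sum_j \alpha_j g_j$. Then $G \in W^{1,1}_{loc}$ and its weak gradient vanishes a.e. on the connected open set $(\R^d)^n$. By the standard result that a Sobolev function with zero weak gradient on a connected domain is a.e.\ constant (mollify $G$; the mollifications have zero gradient and are therefore constant, and they converge in $L^1_{loc}$), there is $C \in \R^d$ with $F(g) = \sum_j \alpha_j g_j + C$ for a.e.\ $g$. This is the claim.

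The main obstacle is the measure-zero bookkeeping in the first two steps. The hypothesis gives $D_iF = \alpha_i I_d$ for every $g_{-i}$ but only for a.e.\ $g_i$. The ACL property gives absolute continuity only for a.e.\ line, and along scalar coordinates rather than whole blocks $g_i$. I first need to show that the exceptional set $\{g : D_iF(g) \neq \alpha_i I_d\}$ is jointly Lebesgue-null in $(\R^d)^n$. I will do this by applying Fubini to a measurable version of this set; measurability comes from the a.e.\ limit of difference quotients, which are measurable because $F$ is locally bounded and a.e.\ continuous on lines. I then need that for a.e.\ axis-parallel line, the set of points where the scalar derivative differs from $\alpha_i e_t$ is null on that line. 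If this Fubini step proves delicate, my fallback is to argue block by block: for fixed $g_{-i}$, the map $f_{g_{-i}}$ is monotone, hence a.e.\ differentiable by Theorem~\ref{lem:monotone-differentiable}. Since $d \ge 2$, I would use that $f_{g_{-i}} - \alpha_i \operatorname{id}$ has zero derivative a.e.\ and is ACL, so it is a.e.\ constant in $g_i$. I would then combine these block-wise constants across $i$ exactly as in Step~3 of the proof of Theorem~\ref{thm:universal_convexity}, restricted to a full-measure set of configurations.
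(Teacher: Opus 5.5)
Your proposal is correct and follows the paper's route: local boundedness (Lemma~\ref{lem:wwmonotone-localintegrability}) gives $L^1_{loc}$, the ACL characterization \citep[Theorem 11.45]{leoni2017first} upgrades the function to $W^{1,1}_{loc}$ with weak derivatives equal to the classical ones, and a vanishing weak gradient on the connected domain $(\R^d)^n$ forces a.e.\ constancy. The differences are minor: the paper subtracts $\sum_j \alpha_j g_j$ before invoking the Sobolev characterization, and it simply asserts that all partial derivatives of $R$ vanish almost everywhere, whereas you explicitly pass from the slice-wise hypothesis to a jointly null exceptional set via Fubini, which makes your version the more careful of the two.
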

\begin{proof}
Define the linear function $A: (\mathbb{R}^d)^n \to \mathbb{R}^d, g = (g_1, \ldots g_n) \mapsto \sum_{j=1}^n \alpha_j g_j$.
Consider the difference function $R \vcentcolon= F - A$.
Since $A$ is linear and $F$ is ACL, the difference $R$ is ACL.
By linearity of the derivative, the input-wise partial Jacobians of $R$ satisfies, for $i\in[n]$,
\[ D_i R = D_i F - D_i A = \alpha_i I_d - \alpha_i I_d = 0 \quad \text{almost everywhere}. \]
In other words, all partial derivatives of $R$ vanish almost everywhere.

To demonstrate that $R$ is constant almost everywhere, we establish that it belongs to the Sobolev space $W^{1,1}_{loc}((\mathbb{R}^d)^n; \mathbb{R}^d)$ (cf. Definition~\ref{def:sobolevfunc}).
First, we observe that $R \in L^1_{loc}((\mathbb{R}^d)^n; \mathbb{R}^d)$ because $F \in L^1_{loc}((\mathbb{R}^d)^n; \mathbb{R}^d)$, as ensured by Lemma~\ref{lem:wwmonotone-localintegrability}.
We now apply the characterization of Sobolev spaces via the ACL property.
According to~\citet[Theorem 11.45, applied locally]{leoni2017first}, a function belongs to $W^{(1,1)}_{loc}((\R^d)^n; \R^d)$ if and only if it is locally integrable and admits an ACL representative whose classical partial derivatives are locally integrable.
This condition is satisfied by $R$, which is ACL and has classical partial derivatives equal to zero almost everywhere (implying the are trivially locally integrable).
Thus $R \in W^{1,1}_{loc}((\mathbb{R}^d)^n; \mathbb{R}^d)$.
Finally, since the domain $(\R^d)^n$ is connnected, and observing that the weak partial derivatives of $R$ coincide with its classical derivaties almost everywhere~\citep[Theorem 11.45]{leoni2017first}, the fact that these derivatives vanish almost everywhere implies that $R$ is constant almost everywhere~\citep[see, e.g.,][Exercise 10.28]{leoni2017first}. %~\citep[Remark 7 following Proposition 9.3]{brezis2010functional}.
\end{proof}

The following measure-theoretic slicing property is a direct consequence of Fubini’s theorem and will be instrumental in our subsequent arguments.
\begin{lemma}\label{lem:measure-Fubini-slice}
Let $E \subset (\mathbb{R}^d)^n$ be a set of full Lebesgue measure. Let $g \in (\mathbb{R}^d)^n$ be decomposed as $g = (g_1, g_{-1})$, where $g_1 \in \mathbb{R}^d$ and $g_{-1} \in (\mathbb{R}^d)^{n-1}$.
Then, for almost every $g_{-1} \in (\mathbb{R}^d)^{n-1}$, the slice $E_{g_{-1}} = \{ u \in \mathbb{R}^d \mid (u, g_{-1}) \in E \}$ has full Lebesgue measure in $\mathbb{R}^d$.
\end{lemma}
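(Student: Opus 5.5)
The plan is to apply Tonelli's theorem to the indicator of the complement. Set $N \vcentcolon= (\mathbb{R}^d)^n \setminus E$, which is Lebesgue measurable and null. Identify $(\mathbb{R}^d)^n$ with the product $\mathbb{R}^d \times (\mathbb{R}^d)^{n-1}$, ordering coordinates as $(g_1, g_{-1})$. The Lebesgue measure on $(\mathbb{R}^d)^n$ is the completion of the product of the Lebesgue measures $\lambda_d$ on $\mathbb{R}^d$ and $\lambda_{d(n-1)}$ on $(\mathbb{R}^d)^{n-1}$. Since $E$ is assumed measurable, the nonnegative measurable function $\mathbf{1}_N$ satisfies
\[
0 = \lambda_{dn}(N) = \int_{(\mathbb{R}^d)^{n-1}} \left( \int_{\mathbb{R}^d} \mathbf{1}_N(u, g_{-1}) \, du \right) dg_{-1} = \int_{(\mathbb{R}^d)^{n-1}} \lambda_d(N_{g_{-1}}) \, dg_{-1}.
\]
Here $N_{g_{-1}} \vcentcolon= \{ u \mid (u, g_{-1}) \in N \}$, and the completed form of Tonelli's theorem guarantees that $N_{g_{-1}}$ is measurable for almost every $g_{-1}$.

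Because the integrand $g_{-1} \mapsto \lambda_d(N_{g_{-1}})$ is nonnegative and integrates to zero, it vanishes for almost every $g_{-1}$. For every such $g_{-1}$, we have $E_{g_{-1}} = \mathbb{R}^d \setminus N_{g_{-1}}$, and this slice is therefore a measurable set whose complement is null. That is exactly the statement that $E_{g_{-1}}$ has full Lebesgue measure in $\mathbb{R}^d$.

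The only delicate point is measurability. With the completed Lebesgue measure, individual slices of a null set need not be measurable. The completed version of Fubini--Tonelli handles this by asserting measurability only for almost every $g_{-1}$, which is all the statement requires.

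If one prefers to avoid completions, an alternative route is available. A null set $N$ is contained in a Borel null set $\tilde N$, for instance a $G_\delta$ set. Applying Tonelli to $\tilde N$ shows that $\lambda_d(\tilde N_{g_{-1}}) = 0$ for almost every $g_{-1}$. For those $g_{-1}$, the inclusion $N_{g_{-1}} \subseteq \tilde N_{g_{-1}}$ makes $N_{g_{-1}}$ a null set as well, so the conclusion follows in the same way. I do not expect any other obstacle.
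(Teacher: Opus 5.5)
Your proposal is correct and is essentially the paper's proof. Both apply Fubini--Tonelli to the indicator of the complement $N$ and conclude that the nonnegative function $g_{-1}\mapsto \lambda_d(N_{g_{-1}})$ integrates to zero, so it vanishes almost everywhere. Your remark that, for the completed measure, slices are measurable only for almost every $g_{-1}$ (or the alternative of passing to a Borel null superset) handles a measurability point that the paper glosses over.
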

\begin{proof}
Let $\lambda^{(k)}$ denote the Lebesgue measure on the product space $\mathbb{R}^k$, $k\in\mathbb{N}$.
Consider the indicator function $1_{(\mathbb{R}^d)^n \setminus E}: (\mathbb{R}^d)^n \to \{0, 1\}$. Then,
\[
  \int_{(\mathbb{R}^d)^n} 1_{(\mathbb{R}^d)^n \setminus E}(g) \, d\lambda^{(dn)}(g) = \lambda^{(dn)}((\mathbb{R}^d)^n \setminus E) = 0.
\]
Since $1_{(\mathbb{R}^d)^n \setminus E}$ is non-negative and measurable, Fubini's Theorem~\citep[Theorem B.58]{leoni2017first} allows us to compute this integral as an iterated integral
\begin{equation}\label{eq:fubini-intermediate}
  \int_{(\mathbb{R}^d)^{n-1}} \left( \int_{\mathbb{R}^d} 1_{(\mathbb{R}^d)^n \setminus E}(g_1, g_{-1}) \, d\lambda^{(d)}(g_1) \right) d\lambda^{(d(n-1))}(g_{-1}) = 0.
\end{equation}
Define the function $h: (\mathbb{R}^d)^{n-1} \to [0, \infty]$ by the inner integral,
\[
  h(g_{-1}) \vcentcolon= \int_{\mathbb{R}^d} 1_{(\mathbb{R}^d)^n \setminus E}(g_1, g_{-1}) \, d\lambda^{(d)}(g_1) = \lambda^{(d)}(\mathbb{R}^d \setminus E_{g_{-1}}).
\]
Substituting $h$ back into~\eqref{eq:fubini-intermediate}, we have
\[
  \int_{(\mathbb{R}^d)^{n-1}} h(g_{-1}) \, d\lambda^{(d(n-1))}(g_{-1}) = 0.
\]
We have a non-negative function $h$ whose integral is zero, hence $h$ must be zero almost everywhere.
As a result, for almost every $g_{-1} \in (\R^d)^{n-1}, \lambda^{(d)}(\mathbb{R}^d \setminus E_{g_{-1}}) = 0$.
\end{proof}

Finally, we upgrade the almost-everywhere equality to equality everywhere by combining the input-wise monotonicity of one function with the continuity of the other.
\begin{lemma}\label{lem:ACL-equal-everywhere}
Let $F: (\mathbb{R}^d)^n \to \mathbb{R}^d$ be a input-wise monotone function (cf. Definition~\ref{def:workerwisemonotone}) and $A: (\mathbb{R}^d)^n \to \mathbb{R}^d$ be a continuous function.
If $F = A$ almost everywhere, then $F = A$ everywhere.
\end{lemma}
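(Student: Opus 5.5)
Fix an arbitrary point $g=(g_1,\ldots,g_n)\in(\R^d)^n$; we want $F(g)=A(g)$. The plan is to move one input at a time. Each single-input move is controlled by the monotonicity of $F$ in that input and by the continuity of $A$. At each stage we pass to a limit along points where equality with $A$ is already known.

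\emph{Step 1 (the one-input mechanism).} Suppose $f,a:\R^d\to\R^d$ with $f$ monotone, $a$ continuous, and $f=a$ on a set $E$ of full measure in $\R^d$. Fix $x\in\R^d$ and a unit direction $w$.

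For small $t>0$, the ball $B(x+tw,t^2)$ has positive measure. So we can pick $y_t\in E\cap B(x+tw,t^2)$. Monotonicity gives
\[
\langle f(y_t)-f(x),\,y_t-x\rangle\ge 0,
\qquad\text{i.e.}\qquad
\langle a(y_t)-f(x),\,y_t-x\rangle\ge 0.
\]
Divide by $t$ and let $t\to0$. Then $(y_t-x)/t\to w$, and $a(y_t)\to a(x)$ by continuity, so $\langle a(x)-f(x),w\rangle\ge0$.

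Since $w$ is arbitrary, we may replace it by $-w$. This forces $f(x)=a(x)$. Thus, on a single slice, almost-everywhere equality upgrades to equality everywhere.

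\emph{Step 2 (slicing by Fubini).} Let $E=\{F=A\}$, which has full measure in $(\R^d)^n$. We induct on the number of coordinates that have been ``freed''.

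Define $E^{(k)}$ as the set of points $g$ such that $F(g)=A(g)$ whenever $g_1,\ldots,g_k$ are arbitrary and $(g_{k+1},\ldots,g_n)$ lies in a full-measure subset of $(\R^d)^{n-k}$. Lemma~\ref{lem:measure-Fubini-slice} is applied to $E$, or rather to its complement with the roles of the coordinates permuted. It shows that, for a.e.\ $g_{-1}$, the slice $\{u:(u,g_{-1})\in E\}$ has full measure in $\R^d$. Step 1 then applies to $f=F(\cdot,g_{-1})$, which is monotone by input-wise monotonicity, and $a=A(\cdot,g_{-1})$, which is continuous. This yields $F(u,g_{-1})=A(u,g_{-1})$ for all $u$ and a.e.\ $g_{-1}$.

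Next, fix $g_1$ arbitrary and consider the map $(g_2,\ldots,g_n)\mapsto F(g_1,g_2,\ldots)$. It equals $A(g_1,\cdot)$ for a.e.\ $(g_2,\ldots,g_n)$ and is input-wise monotone in $n-1$ inputs. Recursing, i.e.\ applying Fubini again to split off $g_2$ and then Step 1, we free $g_2$, and so on until all $n$ inputs are arbitrary. Equivalently, this is an induction on $n$ whose statement is exactly the lemma: the base case $n=1$ is Step 1, and the inductive step consists of the two moves just described.

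\emph{Main obstacle.} The delicate point is Step 1's need for points of $E$ approaching $x$ from essentially every direction. This is why I choose the shrinking balls $B(x+tw,t^2)$: their positive measure guarantees that they meet a full-measure set, while the error $t^2$ vanishes relative to $t$. The second subtlety is bookkeeping of null sets in the induction. The inductive hypothesis must be applied to the $(n-1)$-input map $F(g_1,\cdot)$ for \emph{every} $g_1$. This requires knowing that equality holds for a.e.\ $g_{-1}$, for each fixed $g_1$. The first Fubini step provides exactly this, once the freed coordinate is taken to be $g_1$ and the quantifier order is read as ``for a.e.\ $g_{-1}$, for all $g_1$''.
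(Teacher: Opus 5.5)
Your proof is correct and follows essentially the same route as the paper. Both argue by induction on $n$: the base case tests monotonicity against points of the full-measure set $\{F=A\}$ and uses continuity of $A$, and the inductive step splits off $g_1$ via the Fubini slicing lemma, then applies the hypothesis to $F(g_1,\cdot)$ for each fixed $g_1$. The only cosmetic difference is that you merge the paper's two successive limits (first $g_k\to g'+\epsilon v$, then $\epsilon\to 0$) into a single limit by sampling from the shrinking balls $B(x+tw,t^2)$, which works equally well.
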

\begin{proof}
We proceed by induction on the number of inputs $n$.

\paragraph{Base Case ($n=1$).}
The function $F$ is monotone on $\mathbb{R}^d$ (with only one input, input-wise monotonicity is standard monotonicity). 
Let $E = \{g \in \mathbb{R}^d : F(g) = A(g)\}$. 
Since the equality holds almost everywhere, $\mathbb{R}^d \setminus E$ has measure zero, implying $E$ is dense in $\mathbb{R}^d$.
Fix an arbitrary point $g' \in \mathbb{R}^d$.
By the monotonicity of $F$, for any $g \in E$,
\begin{equation}\label{eq:acl-eq-intermediate}
  \langle F(g) - F(g'), g - g' \rangle = \langle A(g) - F(g'), g - g' \rangle \geq 0. 
\end{equation}
Fix an arbitrary unit vector $v \in \mathbb{R}^d$ and a scalar $\epsilon > 0$.
By the density of $E$, there exists a sequence $\{ g_k \}_{k \in \mathbb{N}} \subset E$ converging to $g' + \epsilon v$.
Substituting $g_k$ into~\eqref{eq:acl-eq-intermediate}, taking the limit $k \to \infty$, and using the continuity of $A$,
\[ 
  \langle A(g' + \epsilon v) - F(g'), (g' + \epsilon v) - g' \rangle = \epsilon \langle A(g' + \epsilon v) - F(g'), v \rangle \geq 0. 
\]
Dividing by $\epsilon > 0$ and taking the limit $\epsilon \to 0$, again using the continuity of $A$,
\[ 
  \langle A(g') - F(g'), v \rangle \ge 0. 
\]
Since $v$ was arbitrary, we can repeat the argument for $-v$ to get $\langle A(g') - F(g'), -v \rangle \geq 0$.
Therefore, $\langle A(g') - F(g'), v \rangle = 0$ for all unit vector $v\in\R^d$, implying $F(g') = A(g')$.
\begin{comment}
%%%%%%%%%%%%%%%%%% PROOF WITH MAXIMAL MONOTONICITY? %%%%%%%%%%%%%%%%%%
Let $E = \{g \in \mathbb{R}^d : F(g) = A(g)\}$. 
Since $A$ is continuous and coincides with the monotone function $F$ on $E$, $A$ is monotone on $\mathbb{R}^d$.
Being continuous and monotone, $A$ is maximally monotone~\citep[Corollary 20.28]{Bauschke2011ConvexAA}.
Fix any $g' \in \mathbb{R}^d$. For any $g \in E$, monotonicity implies
\[
    \langle F(g') - A(g), g' - g \rangle = \langle F(g') - F(g), g' - g \rangle \geq 0.
\]
Since $E$ is dense and $A$ is continuous, this inequality extends to all $g \in \mathbb{R}^d$.
By the definition of maximal monotonicity, the only vector satisfying this condition against all $g$ is $A(g')$. 
Thus, $F(g') = A(g')$.
%%%%%%%%%%%%%%%%%%%%%%%%%%%%%%%%%%%%%%%%%%%%%%%%%%%%%%%%%%%%%%%%%%%%%%
\end{comment}

\paragraph{Inductive Step.}
Assume the theorem holds for $n-1$ inputs.
Let $g = (g_1, \ldots, g_n) \in (\mathbb{R}^d)^n$ be decomposed as $(g_1, g_{-1})$, where $g_1 \in \mathbb{R}^d$ is the first input's input and $g_{-1} \in (\mathbb{R}^d)^{n-1}$ represents the remaining inputs.
Let $E \subset (\mathbb{R}^d)^n$ be the set of full measure where $F = A$.

\textit{Step 1 (Extension along the first input, i.e., fixing $g_{-1}$).}
By Fubini's Theorem (cf. Lemma~\ref{lem:measure-Fubini-slice} for full details), there exists a set $\Omega \subset (\mathbb{R}^d)^{n-1}$ of full Lebesgue measure such that for any fixed configuration $g_{-1} \in \Omega$, the section $E_{g_{-1}} = \{ u \in \mathbb{R}^d : (u, g_{-1}) \in E \}$ has full measure in $\mathbb{R}^d$.
Fix an arbitrary $g_{-1} \in \Omega$. 
Define the partial function $f_{g_{-1}}: \mathbb{R}^d \to \mathbb{R}^d, u \mapsto F(u, g_{-1})$.
The function $f_{g_{-1}}$ is monotone (by input-wise monotonicity of $F$) and equals the continuous function $u \mapsto A(u, g_{-1})$ almost everywhere (specifically on $E_{g_{-1}}$).
By the base case $n=1$ above, $f_{g_{-1}}(u) = A(u, g_{-1})$ for all $u \in \mathbb{R}^d$.
Thus, $F(g_1, g_{-1}) = A(g_1, g_{-1})$ holds for all $g_1 \in \mathbb{R}^d$ whenever $g_{-1} \in \Omega$.

\textit{Step 2 (Extension to the remaining inputs, i.e., fixing $g_1$).}
Now, fix an arbitrary $g_1 \in \mathbb{R}^d$. 
Define the partial function $F_{g_1}: (\mathbb{R}^d)^{n-1} \to \mathbb{R}^d, g_{-1} \mapsto F(g_1, g_{-1})$.
The function $F_{g_1}$ inherits input-wise monotonicity from $F$.
From the result of Step 1, we know that $F_{g_1}(g_{-1}) = A(g_1, g_{-1})$ for all $g_{-1} \in \Omega$.
Since $\Omega$ has full measure in $(\mathbb{R}^d)^{n-1}$, $F_{g_1}$ matches the continuous function $g_{-1} \mapsto A(g_1, g_{-1})$ almost everywhere.
By the inductive hypothesis on $n-1$ inputs, applied to $F_{g_1}$, $F_{g_1}$ must match $A(g_1,\cdot)$ everywhere.
Therefore, $F_{g_1}(g_{-1}) = A(g_1, g_{-1})$ for all $g_{-1} \in (\mathbb{R}^d)^{n-1}$.
Since $g_1$ was arbitrary, we conclude $F = A$.
\end{proof}

Finally, we present an alternative proof of~Theorem~\ref{thm:universal_convexity}, by leveraging the lemmas established above.
In this proof, because we differentiate both the aggregation rule and the induced operator almost everywhere and then integrate back expressions involving the Jacobians of $F$, we require $F$ to be absolutely continuous on lines (ACL): for each coordinate, the restriction $F$ to compact intervals must be absolutely continuous for almost every fixed choice of the remaining coordinates (cf. Definition~\ref{def:AC}).
The ACL property is standard in real and functional analysis and is satisfied by broad classes of functions including Sobolev and locally Lipschitz functions. 
It originates in the analysis of Sobolev functions, where ACL plays a key role in relating weak and pointwise differentiability \citep[cf.][Section 11.3]{leoni2017first}.
In particular, the ACL assumption is required only to justify the very last integration step in the proof, since monotonicity alone only guarantees bounded variations but does not rule out singular behaviors such as Cantor-type singular functions \citep[cf.][Paragraph 5.6]{albamb96}.
%\tbtodo{
%    Add examples? 
%    (1) an ACL function can go to infinity at specific points (e.g.\ $\ln(1 / \|x\|_2)$);
%    (2) an ACL function can be discontinuous at a point (e.g.\ projection onto the unit circle $x / \|x\|_2$).
%    (3) an ACL function can be non Lipschitz continuous (e.g.\ $\sqrt{\|x\|_2}$). 
%    Also, maybe it is a discussion to place after the theorem.
%}

\begin{theorem}\label{thm:universal_convexity-acl}
    Let $d>1$, and consider the model update rule~\eqref{eq:algo-update} with $F: (\R^d)^n \to \R^d$ an absolute continuous on lines (ACL) aggregation rule.  
    $\F$ is a monotonic operator for all gradients $\{g_i(\cdot)\}_{i\in[n]}$ induced by convex functions
    if and only if $F$ is an affine aggregator, i.e., there exists $\alpha \in \mb{R}_+^n$ and $C \in \mb{R}^d$, independent of the inputs, such that for any input vectors $(g_1, \ldots, g_n) \in (\mb{R}^d)^n$,
    \[
        F(g_1, \ldots, g_n) = \sum_{k=1}^n \alpha_k g_k + C.
    \]
\end{theorem}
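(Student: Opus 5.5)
The "if" direction is immediate. If $F(g_1,\ldots,g_n)=\sum_k\alpha_k g_k+C$ with $\alpha\in\R_+^n$, then $\F(\theta)-\F(\omega)=\sum_k \alpha_k(g_k(\theta)-g_k(\omega))$. Each $g_k$ is monotone, being the gradient of a convex function, so the inner product against $\theta-\omega$ is a nonnegative combination of nonnegative terms.

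For the converse, I will follow the Jacobian route announced at the start of this appendix. The route is: (i) input-wise monotonicity, (ii) a.e. differentiability, (iii) pointwise algebraic constraint on the partial Jacobians via Lemma~\ref{lem:psd-crux}, (iv) integration via Lemma~\ref{lem:ACL-equal-ae}, (v) upgrade to everywhere via Lemma~\ref{lem:ACL-equal-everywhere}.

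\textbf{Input-wise monotonicity.} Fix $i$ and $g_{-i}$. Use the losses $\ell_j(\theta)=g_j^\intercal\theta$ for $j\neq i$, and $\ell_i(\theta)=\tfrac12\|\theta\|_2^2$, so that $g_i(\theta)=\theta$. Then $\F(\theta)=f_{g_{-i}}(\theta)$, and monotonicity of $\F$ gives exactly that $f_{g_{-i}}$ is monotone. Hence $F$ is input-wise monotone in the sense of Definition~\ref{def:workerwisemonotone}.

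\textbf{Differentiability.} By Theorem~\ref{lem:monotone-differentiable}, each partial map $f_{g_{-i}}$ is differentiable almost everywhere on $\R^d$. Lemma~\ref{lem:wwmonotone-localintegrability} gives local boundedness of $F$, so $F\in L^1_{loc}$.

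\textbf{Jacobian constraint.} Fix $i$, $g_{-i}$, and a point $u_0$ at which $f_{g_{-i}}$ is differentiable, with Jacobian $D:=D_iF(u_0,g_{-i})$. For an arbitrary $H\in\mc S^d_+$, take the losses $\ell_i(\theta)=\tfrac12\theta^\intercal H\theta+u_0^\intercal\theta$ and $\ell_j(\theta)=g_j^\intercal\theta$ for $j\neq i$. Then $\F(\theta)=f_{g_{-i}}(H\theta+u_0)$, which is differentiable at $0$ with Jacobian $DH$ by the chain rule. Monotonicity of $\F$ applied between $0$ and $t v$, with $t\to0^+$, gives $v^\intercal DHv\ge 0$ for every $v$. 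Lemma~\ref{lem:psd-crux} then forces $D=\lambda I$. Taking $H=I$ gives $D\succeq 0$, so $\lambda\ge0$.

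The delicate point is that $\lambda=\lambda_i(u_0,g_{-i})$ may a priori vary. I must show it is constant, first in $u_0$ and then in $g_{-i}$; this is the main obstacle. I would try to reuse the pointwise argument at two differentiability points $u_0\neq u_1$. Choose rank-one $H$ so that $\F$ passes through both configurations along a segment. Then adapt Step 1 of the main proof: the difference $f(u_1)-f(u_0)$ must lie in $\operatorname{cone}(u_1-u_0)$. Writing this difference as $\int_0^1 \lambda(u_0+s(u_1-u_0))\,ds\,(u_1-u_0)$ along a.e. lines, which is where ACL is used, combined with Case 1 and Case 2 of Step 2 of the main proof, shows that the averaged coefficient is the same for all directions. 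Lebesgue differentiation then gives $\lambda_i(\cdot,g_{-i})$ a.e. constant in $u$. Independence from $g_{-i}$ follows exactly as in Step 3 of the main proof, using the symmetric-collinearity argument on the a.e. affine representations. If this step resists, I would fall back on invoking Step 1 of the main proof directly, since it needs no regularity.

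\textbf{Integration and conclusion.} Once $D_iF=\alpha_i I_d$ a.e. with $\alpha_i\ge0$, Lemma~\ref{lem:ACL-equal-ae} gives $F=\sum_j\alpha_jg_j+C$ almost everywhere. The right-hand side is continuous and $F$ is input-wise monotone, so Lemma~\ref{lem:ACL-equal-everywhere} upgrades the equality to hold everywhere, which concludes the proof.
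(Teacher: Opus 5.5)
Your argument is correct, but at the decisive step it leaves the paper's route, and in a way that makes most of the Jacobian machinery redundant.

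Your input-wise monotonicity step and your pointwise constraint match Steps 2 and 3(i) of the paper's proof of this theorem. That constraint is that $D_iF(u_0,g_{-i})H$ is p.s.d.\ for every $H\in\mathcal{S}^d_+$, hence $D_iF=\lambda I_d$ with $\lambda\ge 0$ by Lemma~\ref{lem:psd-crux}.

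For the constancy of $\lambda$, however, you import Step 1 of the proof of Theorem~\ref{thm:universal_convexity}. Once $f_{g_{-i}}(u)-f_{g_{-i}}(v)\in\operatorname{cone}(u-v)$ holds for \emph{all} pairs, Cases 1--2 of that proof's Step 2 already give $f_{g_{-i}}(u)=\alpha_i(g_{-i})u+f_{g_{-i}}(0)$ everywhere. Its Step 3 then gives global affinity everywhere. None of the following is needed: ACL, Lemma~\ref{lem:psd-crux}, integrals along segments, Lebesgue differentiation, or Lemmas~\ref{lem:ACL-equal-ae}--\ref{lem:ACL-equal-everywhere}. So what you have is the main proof with a redundant preamble, and the ACL statement becomes a direct corollary of Theorem~\ref{thm:universal_convexity}.

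Leaning on the everywhere cone condition also covers two loose points in your primary sketch. First, ACL only yields absolute continuity along a.e.\ coordinate-parallel lines, not along arbitrary segments $u_0+s(u_1-u_0)$. Second, Step 3 of the main proof evaluates the representation at specific points such as $g_1=0$, so it needs the everywhere form, not an a.e.\ one.

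The paper instead stays at the infinitesimal level. Vanishing off-diagonal entries of $D_iF$, together with ACL, make each component $F_k$ depend only on $(g_{1k},\ldots,g_{nk})$. Then $\alpha_i(g)=\partial F_k/\partial g_{ik}=\partial F_l/\partial g_{il}$ is at once a function of the $k$-th coordinate column alone and of the disjoint $l$-th column alone, hence constant; this is where $d\ge 2$ enters. That removes the dependence on $g_i$ and on $g_{-i}$ simultaneously, and ACL is used only to integrate back.

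Your route is shorter and regularity-free. The paper's buys a self-contained Jacobian-level derivation in which Lemma~\ref{lem:psd-crux} is the operative constraint. That is exactly what makes it transferable to restricted Hessian classes where the rank-one construction of Step 1 is unavailable, for example diagonal Hessians forcing coordinate-wise rules such as $\operatorname{CWTM}$.
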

\begin{proof}
    In what follows, we only use differentiability in the Fréchet sense, and consider the Lebesgue measure over the Borel $\sigma$-algebra.
 
    First, if $F$ is an affine aggregator, it is immediate that $\F$ preserves monotonicity.
    We now turn to the converse. The proof proceeds in three steps.
    
    \paragraph{\textit{\textbf{Step 1.}}}
    $\F$ must be differentiable almost everywhere and its Jacobian, when it exists, must be positive semi-definite.

    Indeed, by Lemma~\ref{lem:monotone-differentiable}, $\F$ is almost everywhere differentiable for any set of convex function $\{\ell_i\}_{i\in[n]}$.
    Hence, fixing an arbitrary set of functions $\{\ell_i\}_{i\in[n]}$, for almost every $\theta \in \R^d$, $\F$ admits the following Taylor expansion for arbitrary $u \in \R^d$ and $\varepsilon \in \R$,
    \[
      \F(\theta + \varepsilon u) - \F(\theta) = \varepsilon \nabla\F(\theta)u + r(\varepsilon u),
    \]
    where the remainder $r$ satisfies $\lim_{\varepsilon \to 0} \frac{\|r(\varepsilon u)\|_2}{\varepsilon} = 0$. 
    Then, by the monotonicity of $\F$, we have
    \begin{equation}\label{eq:taylor-monotonicity-intermediate}
      \langle \varepsilon u , \varepsilon \nabla\F(\theta)u + r(\varepsilon u) \rangle 
      = \varepsilon^2 \langle u , \nabla\F(\theta)u \rangle + \varepsilon \langle u, r(\varepsilon u) \rangle 
      \geq 0.
    \end{equation}
    Hence, by dividing~\eqref{eq:taylor-monotonicity-intermediate} by $\varepsilon^2$ and taking the limit as $\varepsilon \to 0$, we obtain
    \begin{equation*}
      \langle u , \nabla\F(\theta)u \rangle \geq 0.
    \end{equation*}
    That is, the jacobian $\nabla\F(\theta)$ is positive semi-definite but not necessarily symmetric.
    Equivalently, its symmetric part verifies
    \begin{equation}\label{eq:psdjacobianoperator}
      \frac{\nabla\F(\theta) + \nabla\F(\theta)^\intercal}{2} \succeq 0.
    \end{equation}
    
    \paragraph{\textit{\textbf{Step 2.}}}
    For any input index $i \in [n]$ and any configuration $g_{-i} \vcentcolon= (\ldots, g_{i-1}, g_{i+1}, \ldots) \in (\R^d)^{n-1}$, the partial mapping $f_i(\theta) \vcentcolon= F(\ldots, g_{i-1}, \theta, g_{i+1}, \ldots)$ must be monotone and differentiable almost everywhere.
    We refer to a function satisfying these conditions as input-wise monotone and input-wise almost everywhere differentiable.

    Indeed, fix an arbitrary index $i\in[n]$ and arbitrary vectors $g_{-i} \in (\R^d)^{n-1}$. 
    Define the functions 
    \[
      \ell_i(\theta) = \frac{1}{2} \theta^\intercal \theta \quad\text{and for } j \neq i,\quad \ell_j(\theta) = g_j^\intercal \theta. 
    \]
    We have, $\F(\theta) = f_i(\theta)$. 
    Therefore, $f_i$ inherits properties from $\F$, i.e., it is monotone and almost everywhere differentiable.

    \paragraph{\textit{\textbf{Step 3.}}} $F$ must be an affine aggregator.
    \begin{enumerate}
        \item[\textit{(i)}]
        We now study the implications of Step 1 and Step 2. 
        At any point $\theta$ where $\F$ is differentiable and $F$ admits partial Jacobians at $(\nabla \ell_1(\theta), \ldots, \nabla \ell_n(\theta))$, the chain rule gives
        \begin{equation}\label{eq:chain-rule-app2}
            \nabla \F(\theta) = \sum_{i=1}^n D_i F(\nabla \ell_1(\theta), \ldots, \nabla \ell_n(\theta))\ \nabla^2 \ell_i(\theta),
        \end{equation}
        where, for $i \in [n]$, $D_i F$ is the $d \times d$ partial Jacobian of $F$ with respect to its $i$-th input $g_i$.
        Fix an arbitrary index $i \in [n]$ and an arbitrary configuration $g_{-i} \in (\R^d)^{n-1}$. 
        Let $g_i \in \R^d$ be chosen from the set of full measure where the partial Jacobian of $F$ with respect to the $i$-th input exists.
        Let $H$ be an arbitrary symmetric p.s.d.\ matrix, chosen independently from $(g_1, \ldots, g_n)$. 
        Define the following convex loss functions,
         \[
          \ell_j(\theta) = g_j^{\intercal} \theta \quad\text{for } j \in [n]\setminus\{i\}, \quad\text{and}\quad \ell_i(\theta) = g_i^{\intercal} \theta + \frac{1}{2} \theta^\intercal H \theta. 
        \]
        At $\theta = 0$ and for each $j\in[n]$, $\nabla \ell_j(0) = g_j$ and $\nabla^2 \ell_j(0) = 0$ for $j \neq i$ or $H$ for $j=i$.
        Since $D_i F (g_1, \ldots, g_n)$ exists and $\nabla^2 \ell_j(0) = 0$ for $j \neq i$, eq.~\ref{eq:chain-rule-app2} implies that $\F$ is differentiable at $0$, with derivative given by
        \begin{equation}\label{eq:chain-rule-uno}
            \nabla \F(0) = D_{i} F(g_1, \ldots, g_n) H.
        \end{equation}
        Consequently $D_{i} F(g_1, \ldots, g_n) H$ must be positive semi-definite, an implication that holds for any symmetric p.s.d.\ $H$. 
        Therefore, by Lemma~\ref{lem:psd-crux}, for every $i \in [n]$, every configuration $g_{-i} \in (\R^d)^{n-1}$, there must exist a scalar function $\alpha_i$ such that for almost every $g_i \in \R^d$, 
        \begin{equation}\label{eq:intermediate-diagonal-cvx}
            D_i F(g_1, \ldots, g_n) = \alpha_i(g_1, \ldots, g_n) I_d.
        \end{equation}

        \item[\textit{(ii)}]
        Let $F = (F_1, \ldots, F_d)^\intercal$ the $d$ scalar component functions of $F$ and $g_i = (g_{i1}, \ldots, g_{id})^\intercal$, where we denote $g_{ik}$ the $k-$th component of the vector $g_i$. 
        The $(k, l)$-th entry of the $d \times d$ matrix $D_i F$ is $\frac{\partial F_k}{\partial g_{il}}$. 
        \eqref{eq:intermediate-diagonal-cvx} implies that for almost every $(g_1, \ldots, g_n) \in (\R^d)^n$,
        \begin{enumerate}
            \item[\textit{(a)}] for off-diagonal ($l \neq k$) terms: $\frac{\partial F_k}{\partial g_{il}}(g_1, \ldots, g_n) = 0$;
            \item[\textit{(b)}] for diagonal ($l = k$) terms: $\frac{\partial F_k}{\partial g_{ik}}(g_1, \ldots, g_n) = \alpha_i(g_1, \ldots, g_n)$.
        \end{enumerate}
        From \textit{(a)} and $i \in [n]$, the $k$-th component $F_k$ does not depend on the $l$-th component of $g_i$ when $l \neq k$. 
        Since this holds for all $i$, $F_k$ can only depend on the $k$-th component of all input vectors, i.e., $(g_{1k}, \ldots, g_{nk})$.
        This means $F$ is a component-wise aggregator, i.e., there exist functions $\phi_k: \R^n \to \R$ such that, almost everywhere
        \begin{equation}\label{eq:componentwise-intermediate}
            F_k(g_1, \ldots, g_n) = \phi_k(g_{1k}, \ldots, g_{nk}).
        \end{equation}
        From \textit{(b)} and eq.~\ref{eq:componentwise-intermediate} we have, for $i \in [n]$, $k \neq l \in [d]$, and almost every $(g_1, \ldots, g_n) \in (\R^d)^n$,
        \[
            \frac{\partial \phi_k}{\partial g_{ik}}(g_{1k}, \ldots, g_{nk}) 
            = \alpha_i(g_1, \ldots, g_n)
            =  \frac{\partial \phi_l}{\partial g_{il}}(g_{1l}, \ldots, g_{nl}).
        \]
        Since the left-hand side only depends on the variables $\{ g_{ik} \}_{i\in[n]}$ and the right-hand side depends exclusively on the disjoint set $\{ g_{il} \}_{i\in[n]}$ both terms must be independent of their respective arguments. 
        Therefore, $\alpha_i(g_1, \ldots, g_n)$ must be a constant function (we denote this constant $\alpha_i$, slightly overloading notation).
        Therefore, for almost every $(g_1, \ldots, g_n) \in (\R^d)^n$
        \begin{equation}\label{eq:jacobians-constant-a.e.}
            D_i F(g_1, \ldots, g_n) = \alpha_i I_d. 
        \end{equation}
        
        \item[\textit{(iii)}]
        Let denote $A: (\R^d)^n \to \R^d, (g_1, \ldots, g_n) \mapsto \sum_{i=1}^n \alpha_i g_i + C$, where $C \vcentcolon= F(0, \ldots, 0) \in\R^d$.
        Under the ACL assumption,~\eqref{eq:jacobians-constant-a.e.} together with the input-wise montonicity of $F$ imply, via Lemma~\ref{lem:ACL-equal-ae}, that $F=A$ almost everywhere, and Lemma~\ref{lem:ACL-equal-everywhere} upgrades this to equality everywhere.
    \end{enumerate}
\end{proof}

Notably, and as mentioned in~Section~\ref{sec:impossinility}, an additional structural assumption to make the preservation of monotonicity (and co-coercive inequality) possible with non-affine aggregation rule is that the loss function is coordinate-wise separable 
(e.g., $\operatorname{CWTM}$, cf.\ Lemma~\ref{sec:positive-result-tm}).
%(e.g., $\operatorname{CWTM}$, cf.\ Lemma~\ref{lemma-cwtm-1d-nonexpansive} and Remark~\ref{cwtm-1d}).
We recover the underlying reason also in the alternative proof: the key step~\eqref{eq:chain-rule-uno} relies on the ability to choose arbitrary symmetric p.s.d.\ matrices as Hessians of the underlying loss function. 
When the structure of the loss function restricts this flexibility---for example, when it is coordinate-wise separable and thus admits only diagonal Hessians---the argument no longer applies.

More generally, the condition stated in Lemma~\ref{lem:psd-crux} can be of great value for two reasons. 
First, it provide insights into the necessary assumptions that a loss function must satisfy to guarantee that monotonicity could be preserved by any aggregation rule.
Second, it also shed light on the structural properties that candidate aggregation rules must exhibit to preserve monotonicity under these assumptions when integrated.        

Interestingly, the alternative proof extend to the case when we consider the co-coercivity inequality. 
That is, the aggregated operator $\F(\theta) = F(g_1(\theta), \ldots, g_n(\theta))$, preserves the co-coercive inequality of every convex and $L$-smooth loss function gradients $\{ g_i \}_{i \in [n]}$, i.e., there exists $L_\F > 0$ such that for all $\theta, \omega \in \Theta$:
\begin{equation*}
	\langle \theta - \omega, \F(\theta) - \F(\omega) \rangle \geq \frac{1}{L_\F} \| \F(\theta) - \F(\omega) \|_2^2,
\end{equation*} 
if and only $F$ is an affine aggregator.
Indeed, in our proof, we only used convex and smooth loss functions. Moreover, we no longer require to assume $F$ ACL as, by the Cauchy-Schwarz inequality, $\F$ is $L_\F-$Lipschitz continuous, a property that extend to $F$.
%An important regularity property is Lipschitz continuity, which in particular implies absolute continuity. 
%Moreover, if $\F$ is co-coercive, then by the Cauchy-Schwarz inequality it is Lipschitz continuous, a condition that forces $F$ itself to be Lipschitz continuous.
\begin{lemma}\label{lem:lipschitz-from-F}
    Let $d>1$, and consider the model update rule~\eqref{eq:algo-update}.
    If the aggregated operator $\F$ is Lipschitz continuous for every input gradients $\{ g_i \}_{i \in [n]}$ induced by $L$-smooth functions,
    then $F$ is itself Lipschitz continuous. 
\end{lemma}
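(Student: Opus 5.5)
The plan is to prove the contrapositive. Suppose $F$ is not Lipschitz continuous on $(\R^d)^n$. We will build one family of $L$-smooth functions $\{\ell_j\}_{j\in[n]}$ whose induced operator $\F$ is not Lipschitz, which contradicts the hypothesis. The building block is a two-point realization lemma, in the spirit of Step~1 of Theorem~\ref{thm:universal_convexity} and of the choice of $H$ in Theorem~\ref{cor:nococoercivity}.

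\textbf{Two-point realization.} Take configurations $g=(g_1,\ldots,g_n)$ and $g'=(g'_1,\ldots,g'_n)$, and set $\Delta_j:=g'_j-g_j$. Choose
\[
\delta\in\R^d \quad\text{with}\quad \|\delta\|_2=\max_j\|\Delta_j\|_2/L
\]
(any direction works). For each $j$, pick a symmetric $H_j$ with $\|H_j\|_{op}\le L$ and $H_j\delta=\Delta_j$. This is possible because $\|\Delta_j\|_2\le L\|\delta\|_2$. For instance, take a scaled Householder-type map sending $\delta/\|\delta\|_2$ to $\Delta_j/\|\Delta_j\|_2$, then shrink it by the factor $\|\Delta_j\|_2/\|\delta\|_2\le L$; a symmetric orthogonal reflection exchanging two unit vectors always exists. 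The quadratics $\ell_j(\theta)=\tfrac12(\theta-\theta_0)^\intercal H_j(\theta-\theta_0)+g_j^\intercal\theta$ are then $L$-smooth, with $\nabla\ell_j(\theta_0)=g_j$ and $\nabla\ell_j(\theta_0+\delta)=g'_j$. Hence
\[
\frac{\|\F(\theta_0+\delta)-\F(\theta_0)\|_2}{\|\delta\|_2}=\frac{L\,\|F(g')-F(g)\|_2}{\max_j\|\Delta_j\|_2}\ \ge\ \frac{L}{\sqrt n}\,\frac{\|F(g')-F(g)\|_2}{\|g'-g\|_2}.
\]

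\textbf{Why a single family is needed.} For a fixed pair $(g,g')$, the quadratic family above gives an $\F$ that is Lipschitz with some constant depending on that family. So a one-shot construction only proves ``each local difference quotient of $F$ is witnessed by some $\F$''. The hypothesis, however, provides a constant that may depend on the family. We therefore need one family that witnesses an unbounded sequence of quotients. Since $F$ is not Lipschitz, pick pairs $(g^k,g'^k)$ with quotient at least $k$, together with the associated $\delta_k$ and $H^k_j$. Place them at base points $\theta_k$ chosen far apart, say $\theta_k=\tau_k e$ with $\tau_k$ increasing fast. On a neighbourhood of the segment $[\theta_k,\theta_k+\delta_k]$, each $\ell_j$ should coincide with the $k$-th quadratic piece, translated so that it has gradient $g^k_j$ at $\theta_k$.

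\textbf{Gluing (the main obstacle).} The gluing must keep $\nabla\ell_j$ globally $cL$-Lipschitz for a universal constant $c$; afterwards we rescale by running the construction with $L/c$. I would invoke a $C^{1,1}$ Whitney-type extension theorem (Le Gruyer; Azagra--Mudarra). Prescribe on the closed set $E=\bigcup_k\{\theta_k,\theta_k+\delta_k\}$ the 1-jets (values and gradients) of the quadratic pieces. Within each block, the $C^{1,1}$ compatibility conditions hold with constant $L$, since the jets come from a genuine $L$-smooth quadratic. Across blocks, the conditions only require the separation $\|\theta_k-\theta_m\|$ to be large compared with $(\|g^k\|+\|g^m\|+\|g'^k\|+\|g'^m\|)/L$ and with the value mismatches. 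This can be enforced by choosing $\tau_k$ inductively, with values adjusted by additive constants. The extension theorem then gives $\ell_j\in C^{1,1}$ with $\mathrm{Lip}(\nabla\ell_j)\le cL$ that realize the prescribed gradients exactly at the points of $E$.

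\textbf{Conclusion.} With these $\ell_j$, the displayed ratio shows $\mathrm{Lip}(\F)\ge \frac{L}{c\sqrt n}\,k$ for every $k$. So $\F$ is not Lipschitz, a contradiction. Hence $F$ is Lipschitz continuous. Tracking constants in the same construction also gives the quantitative bound $\mathrm{Lip}(F)\lesssim \sqrt n\,\sup\mathrm{Lip}(\F)/L$ whenever that supremum is finite.
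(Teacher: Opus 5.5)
Your proof is correct, but it takes a different route from the paper.

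\textbf{The paper's route.} The paper first uses the telescoping bound to reduce to a single input $i$. It then takes one explicit family: $\ell_i=\frac L2\|\theta\|_2^2$ and $\ell_j=g_j^\intercal\theta$ for $j\neq i$. For this family, $\F(\theta)=F(g_1,\ldots,L\theta,\ldots,g_n)$ is an exact rescaled copy of the partial map $F_{g_{-i}}$. So one family witnesses every difference quotient of $F_{g_{-i}}$ at once. There is no gluing, no extension theorem, and the functions are convex. Convexity matters downstream, since the lemma is invoked for convex $L$-smooth losses in the co-coercivity discussion.

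\textbf{What your route covers that the paper's does not.} The paper's reduction silently passes from ``$F$ is not uniformly Lipschitz in its $i$-th argument'' to ``some fixed $F_{g_{-i}}$ is not Lipschitz''. That step fails for, e.g., $F(g_1,g_2)=\langle g_1,e\rangle\langle g_2,e\rangle e$. There every partial map is Lipschitz, but the constants are unbounded in $g_{-i}$, so the paper's fixed-$g_{-i}$ family gives a Lipschitz $\F$ and no contradiction. You correctly identified that the family-dependent constant is the real obstacle. Your construction realizes pairs in which all inputs move at once, and your Whitney-type gluing packs infinitely many such pairs into one family. This is exactly what covers the missing case.

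\textbf{The gluing is sound.} Use Wells' $C^{1,1}$ extension criterion. Within a block, the quadratic satisfies it with $M=L$, because for symmetric $H$ one has $q(y)-q(x)-\frac12\langle\nabla q(x)+\nabla q(y),y-x\rangle=0$. Across blocks, it only needs separations $D$ for which $\frac L4D^2$ dominates the value and gradient terms, which your inductive choice of $\tau_k$ provides. Two small simplifications:
\begin{itemize}
\item You only need the 1-jets at the two endpoints, not agreement on a neighbourhood of the segment.
\item The factor $1/\sqrt n$ is unnecessary, since $\max_j\|\Delta_j\|_2\le\|g'-g\|_2$.
\end{itemize}

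\textbf{What your route gives up.} Your $H_j$ are scaled reflections, hence indefinite, so your losses are not convex. Moreover, with a common displacement $\delta$, a convex $L$-smooth realization is impossible in general. Co-coercivity would force $\langle\Delta_j,\delta\rangle\ge\|\Delta_j\|_2^2/L$ for every $j$, which fails as soon as, e.g., $\Delta_1=-\Delta_2\neq0$. So your argument proves the lemma exactly as stated, for $L$-smooth functions. It does not prove the convex variant the paper relies on later.
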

\begin{proof}
    Let us show this by contradiction. Assume $F$ is not Lipschitz continuous. 
    This implies that $F$ must be non-Lipschitz with respect to at least one of its arguments. 
    In fact, if $F$ were separately Lipschitz in all arguments, it would be globally Lipschitz. 
    This is due to the fact that, we can construct a telescoping path from any vector $G = (g_1, \ldots, g_n)$ to any other vector $G' = (g_1', \ldots, g_n')$, with $G^{(i)} = (g_1, \ldots, g_i', \ldots, g_n')$, $i\in[n]$,
    \begin{multline*}
        \| F(G) - F(G') \|_2 = \| \sum_{i=1}^n F(G^{(i-1)}) - F(G^{(i)}) \|_2 
        \leq \sum_{i=1}^n \| F(G^{(i-1)}) - F(G^{(i)}) \|_2 \\
        \leq \sum_{i=1}^n L_i \| G^{(i-1)} - G^{(i)} \|_2 
        \leq \max_{i\in[n]} L_i \sum_{i=1}^n \| g_i - g'_i \|_2 
        = \max_{i\in[n]} L_i \sum_{i=1}^n \sqrt{\| g_i - g'_i \|_2^2} \\
        \leq \big(\sqrt{n} \max_{i\in[n]} L_i \big) \| G - G' \|_2
    \end{multline*} 
    Let's assume $F$ is not Lipschitz in its $i$-th argument. 
    This means that there exist a set of $n-1$ vector $g_{-i}:=(g_1,\ldots,g_{i-1},g_{i+1},\ldots,g_n)$ such that $F_{g_{-i}}:g\rightarrow F(g_1,\ldots,g_{i-1},g,g_{i+1},\ldots,g_n)$ is not Lipschitz i.e., $\forall M>0$, $\exists g_i,g_i'$ with
    \begin{equation}
        \label{eq:non-lip}
        \| F(G) - F(G') \|_2 = \|F_{g_{-i}}(g_i) - F_{g_{-i}}(g_i')\| > M \| g_i - g_i' \|_2 = M \| G - G' \|_2 ,
    \end{equation}
    where $G = (g_1, \ldots, g_i, \ldots, g_n)$ and $G' = (g_1, \ldots, g_i', \ldots, g_n)$.

    Based on $g_{-i}$, we can construct the following set of $L$-smooth convex functions: for $j \ne i$, let $\ell_j(\theta) = g_j^T \theta$, and $\ell_i(\theta) = \frac{L}{2} \| \theta \|_2^2$. 
    Using these functions, the aggregated operator $\F(\theta)$ becomes
    \[
        \F(\theta) = F(g_1, \ldots, L\theta, \ldots, g_n).
    \]

    From Step 1, we know that there exist a constant $L_\F$ such that $\F$ is $L_\F$-Lipschitz. Fix $M=\frac{L_\F}{L}$ and let $g_i,g_i'$ be the associated vector from the condition~\eqref{eq:non-lip}. By the $L_{\F}$-Lipschitz condition on $\F$ we have
    \[
        \| F(G) - F(G') \|_2 = \| \F(\frac{1}{L}g_i) - \F(\frac{1}{L}g_i') \|_2 \le \frac{L_{\F}}{L} \| g_i - g_i' \|_2.
    \]
    Combining this with~\eqref{eq:non-lip}, we get
    \[
        \frac{L_{\F}}{L} \| g_i - g_i' \|_2 < \| F(G) - F(G') \|_2 \leq \frac{L_{\F}}{L} \| g_i - g_i' \|_2,
    \]
    which simplifies to
    \[
        \frac{L_{\F}}{L} < \frac{L_{\F}}{L}.
    \]
    This is a contradiction, proving that $F$ must be Lipschitz continuous.
\end{proof}

\section{Component-Wise Trimmed Mean Properties}\label{tm}

In this section, we establish properties of the Component-Wise Trimmed Mean aggregation rule ($\mathrm{CWTM}$) useful for analyzing algorithmic stability of the update~\eqref{eq:algo-update} when $F=\operatorname{CWTM}$.

To establish non-expansiveness under coordinate-wise separable loss functions, we rely on the following technical lemma, which is central to the proof of Lemma~\ref{lemma-cwtm-1d-nonexpansive}.
\begin{lemma}~\label{trimmed-mean-lowerbound}    
    Let $x, x' \in \mb{R}^n$. There exist indices $\alpha, \beta \in \{1, \ldots, n\}$ such that: 
    \[
        {\left[ x - x' \right]}_\beta \leq \mathrm{TM}(x) - \mathrm{TM}(x') \leq {\left[ x - x' \right]}_\alpha.
    \]
\end{lemma}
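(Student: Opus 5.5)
The plan is to prove the upper bound $\mathrm{TM}(x) - \mathrm{TM}(x') \leq [x-x']_\alpha$ for some $\alpha$. The lower bound then follows by exchanging the roles of $x$ and $x'$: applying the upper bound to the pair $(x', x)$ yields an index $\beta$ with $\mathrm{TM}(x') - \mathrm{TM}(x) \leq [x'-x]_\beta$, i.e., $[x-x']_\beta \leq \mathrm{TM}(x)-\mathrm{TM}(x')$.

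\textbf{Step 1: sorted differences are controlled.} Let $x_{(1)} \leq \cdots \leq x_{(n)}$ and $x'_{(1)} \leq \cdots \leq x'_{(n)}$ denote the order statistics. The key claim is that for every rank $k\in[n]$ there is an index $j$ with
\[
x_{(k)} - x'_{(k)} \leq x_j - x'_j \leq \max_{m\in[n]} (x_m - x'_m) =: \Delta_{\max}.
\]
I would show this by a counting argument. Consider the $k$ indices $j$ realizing $x'_{(1)},\dots,x'_{(k)}$; each satisfies $x'_j \leq x'_{(k)}$. Suppose, for contradiction, that all of them had $x_j > x'_{(k)} + \Delta$ with $\Delta := x_{(k)}-x'_{(k)}$. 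Then $k$ values of $x$ would be strictly larger than $x_{(k)}$, which is impossible because at most $n-k$ entries of $x$ exceed $x_{(k)}$. Hence one of these indices has $x_j \leq x_{(k)}$ and $x'_j \leq x'_{(k)}$. A pigeonhole argument done more carefully then gives $x_j - x'_j \geq x_{(k)} - x'_{(k)}$: among the indices with $x$-rank at least $k$ (there are $n-k+1$) and those with $x'$-rank at most $k$ (there are $k$), some index $j$ lies in both, so $x_j \geq x_{(k)}$ and $x'_j \leq x'_{(k)}$. The same intersection argument with the reversed ranks yields a lower bound of the form $x_{(k)}-x'_{(k)} \geq x_{j'}-x'_{j'}$ for some $j'$.

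\textbf{Step 2: averaging.} We have $\mathrm{TM}(x)-\mathrm{TM}(x') = \frac{1}{n-2f}\sum_{k=f+1}^{n-f}(x_{(k)}-x'_{(k)})$. This is an average of terms, each bounded above by $\max_m (x_m-x'_m)$ by Step 1. Choosing $\alpha \in \arg\max_m [x-x']_m$ gives the upper bound. Symmetrically, each term is at least $\min_m(x_m-x'_m)$, so $\beta \in \arg\min_m [x-x']_m$ gives the lower bound directly, without needing to swap $x$ and $x'$.

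The main obstacle is Step 1, namely stating the rank-intersection argument cleanly, including the handling of ties through a fixed permutation. Once that comparison of order statistics is established, Step 2 is immediate.
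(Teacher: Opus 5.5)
Your proof is correct, but it takes a different route from the paper. The paper first handles the case $\mathrm{TM}(x)=\mathrm{TM}(x')$. Either some coordinate has $x_i=x'_i$, in which case $\alpha=\beta=i$ works. Or the differences $x_i-x'_i$ cannot all share the same strict sign, so both a positive and a negative one exist. The paper then reduces the general case to this one by translation equivariance, $\mathrm{TM}(x-a\mathbf{1})=\mathrm{TM}(x)-a$ with $a=\mathrm{TM}(x)-\mathrm{TM}(x')$.

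You instead write $\mathrm{TM}(x)-\mathrm{TM}(x')$ as the average of the order-statistic differences $x_{(k)}-x'_{(k)}$ for $k=f+1,\dots,n-f$. You then bound each difference between $\min_m(x_m-x'_m)$ and $\max_m(x_m-x'_m)$ by the rank-intersection argument. The sets $\{\sigma(k),\dots,\sigma(n)\}$ and $\{\sigma'(1),\dots,\sigma'(k)\}$ have sizes summing to $n+1$, so they meet, and ties are harmless once the sorting permutations are fixed.

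Your route has several advantages:
\begin{itemize}
\item It is self-contained. It proves the step the paper leaves implicit, namely that $x_i>x'_i$ for all $i$ forces $\mathrm{TM}(x)>\mathrm{TM}(x')$; that step is exactly the monotonicity of order statistics you establish.
\item It needs no case split and no translation step.
\item It identifies $\alpha$ and $\beta$ concretely as an argmax and an argmin of $x-x'$.
\item It applies verbatim to any L-statistic with nonnegative weights summing to one.
\end{itemize}
The paper's argument is shorter, provided one takes monotonicity and translation equivariance of $\mathrm{TM}$ for granted.

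One thing to remove: the first ``counting argument'' in your Step 1 does not work as stated. Having $k$ entries of $x$ strictly above $x_{(k)}$ is contradictory only when $k>n-k$. Moreover, the intermediate conclusion is false in general. Take $n=2$, $k=1$, $x=(0,1)$, $x'=(1,0)$. The only index of $x'$-rank $1$ is $j=2$, and $x_2=1>x_{(1)}=0$, so no such index has $x_j\le x_{(k)}$. The rank-intersection argument that follows is correct and is all you need, so delete that first attempt.
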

%\begin{proof}
\textbf{Proof}\;
    Let $n \in \mb{N}$, $f \in \{0, \ldots, \floor{\frac{n}{2}}\}$, and $x, x' \in \mb{R}^n$
    \begin{itemize}
        \item[\textit{(i)}] Suppose $\mathrm{TM}(x) = \mathrm{TM}(x')$. We then consider two cases:
            \begin{itemize}
                \item If there exists an index $i \in \{1, \ldots, n\}$ such that $x_i = x'_i$, the lemma is trivially true with $\alpha = \beta = i$.
                \item Otherwise, $x_i \neq x'_i, \forall i \in \{1, \ldots, n\}$. In this case, there must be at least one index
                $\beta$ where ${\left[x - x' \right]}_\beta < 0$ and at least one index $\alpha$ where ${\left[x - x' \right]}_\alpha > 0$.
                If all differences were strictly positive or strictly negative, this would contradict the assumption that $\mathrm{TM}(x) = \mathrm{TM}(x')$.
            \end{itemize}
        \item[\textit{(ii)}] Suppose $\mathrm{TM}(x) \neq \mathrm{TM}(x')$. Here, we invoke the translation equivariance property of the trimmed mean. Specifically,  
        let $\mathbf{1} = (1, \ldots, 1) \in \mb{R}^n$, and define the translation operator $\mc{T}_{a\mathbf{1}}(x) \vcentcolon= x - a\mathbf{1}$ for any $a \in \mb{R}$.
        Then, for any $a \in \mb{R}$, the trimmed mean satisfies:
        \[
            \mathrm{TM}(\mc{T}_{a\mathbf{1}}(x)) = \mathrm{TM}(x - a\mathbf{1}) = \mathrm{TM}(x) - a
        \]
        Substituting $a = \mathrm{TM}(x) - \mathrm{TM}(x')$, we have \( \mathrm{TM}(\mc{T}_{a\mathbf{1}}(x)) - \mathrm{TM}(x') = 0 \). We then invoke the first case: there exist indices $\alpha, \beta \in \{1, \ldots, n\}$ such that: 
        \begin{align*}
            &{\left[ \mc{T}_{a\mathbf{1}}(x) - x' \right]}_\beta \leq  \mathrm{TM}(\mc{T}_{a\mathbf{1}}(x)) - \mathrm{TM}(x')  \leq  {\left[ \mc{T}_{a\mathbf{1}}(x) - x' \right]}_\alpha , \\
            & \implies  {\left[ x - x' \right]}_\beta - a  \leq  \mathrm{TM}(x) - \mathrm{TM}(x') - a  \leq  {\left[ x - x' \right]}_\alpha - a ,\\
            & \implies {\left[ x - x' \right]}_\beta  \leq  \mathrm{TM}(x) - \mathrm{TM}(x')  \leq  {\left[ x - x' \right]}_\alpha .
        \end{align*}
        This concludes the proof. \hfill$\BlackBox$
    \end{itemize}
%\end{proof}
Lemma~\ref{lemma-cwtm-1d-nonexpansive} contrasts with the general convex and strongly convex smooth settings, where, as illustrated in Remark~\ref{cwtm-2d-expansivity}, a non-expansive update cannot, in general, be guaranteed. 
\begin{remark}\label{cwtm-2d-expansivity}
    To establish, in general, a non-expansivity result for the $\mathrm{CWTM}$ update rule, it would be necessary that the scalar product in
    \begin{multline*}
        \| G^{\mathrm{CWTM}}_{\gamma}(\theta) - G^{\mathrm{CWTM}}_{\gamma}(\omega) \|_2^2  \\
        = \| \theta - \omega \|_2^2 
        + \gamma^2 \| \mathrm{CWTM}(\nabla \ell_1 (\theta), \ldots, \nabla \ell_n (\theta)) - \mathrm{CWTM}(\nabla \ell_1 (\omega), \ldots, \nabla \ell_n (\omega)) \|_2^2 \\
        - 2 \gamma \langle \mathrm{CWTM}(\nabla \ell_1 (\theta), \ldots, \nabla \ell_n (\theta)) - \mathrm{CWTM}(\nabla \ell_1 (\omega), \ldots, \nabla \ell_n (\omega)) , \theta - \omega \rangle
    \end{multline*}
    is strictly positive, that is the $\mathrm{CWTM}$ preserve the co-coercivity of the underlying loss function. 
    As we will illustrate bellow, this is false in general. 
    This is because the sets of selected indices differ across dimensions---that is $S^{(k)} \neq S^{(k)\prime}$ with $k \in \{1, \ldots, d\}$---which disrupts the alignment needed to preserve the co-coercivity property.

    Below, we present an example illustrating that $\mathrm{CWTM}$ does not preserve the co-coercivity property of an underlying convex and smooth loss function.
    Specifically, let $d=2$, $n=3$ and $f=1$. We explicitly construct the example illustrated in~Figure~\ref{fig:cwtm-noineq}. We consider: 
    \begin{itemize}
        \item the convex and smooth loss function:
            \begin{align*}
            \begin{split}
                \forall w \in \mb{R}^d, (x,y) \in B(0, \sqrt{L}) \times \mb{R}: \quad
                & \ell(w, (x, y)) = \frac{1}{2} {(w^T x - y)}^2 \\
                L \in R_+^* \quad \quad \quad \quad \quad \quad \quad 
                & \nabla \ell(w, (x, y)) = (w^T x - y) x\\
                & 0 \preceq  \nabla^2 \ell(w, (x, y)) = xx^T \preceq L I_d
            \end{split}
            \end{align*}
        
        \item the sample pool composed by $z_1 = (v, 0)$, $z_2 = (x, 0)$ and $z_3 = (x, 1)$ for $x, v \in B(0, \sqrt{L})$. 

        \item the parameters $\theta =  \frac{x}{L}$ and $\omega = 0$.
    \end{itemize}
    
    \begin{figure}[ht!]
    \begin{center}
    \begin{tikzpicture}
        % Draw the orthogonal line to g_E
        \draw[dashed] (-1.75, 3.5) -- (1.25, -2.5) node[right, thick] {$\perp \theta$};

        % Draw axes
        \draw[->] (-3, 0) -- (3, 0) node[right] {$x$};
        \draw[->] (0, -2.5) -- (0, 3.5) node[above] {$y$};
        
        % Define the vectors
        \coordinate (g_E) at (1, 0.5);
        \coordinate (g_F) at (-1.2, -0.6);
        \coordinate (g_1) at (-1.35, 2.9);
        \coordinate (theta_tau) at (0.75, 0.375);
        \coordinate (R_tau) at (-1.2, 0.5);
        
        % Draw the vectors
        % Chosen vector
        \draw[->, thick, blue] (0,0) -- (g_E) node[above right] {$g_2$};
        \draw[->, thick, red] (0,0) -- (g_F) node[below left] {$g_3$};
        \draw[->, thick, black] (0,0) -- (theta_tau) node[above] {$\theta$};
        \draw[->, thick, brown] (0,0) -- (g_1) node[below left] {$g_1$};
        \draw[->, thick, black] (0,0) -- (R_tau) node[above right] {$R_\theta$};
        
        % Draw dashed lines for coordinate value
        \draw[dashed, blue] (g_E) -- (1, -0.8) node[midway, right] {};
        \draw[dashed, blue] (g_E) -- (-1.55, 0.5) node[midway, above left] {};
        \draw[dashed, red] (g_F) -- (-1.2, 3.2) node[midway, right] {};
        \draw[dashed, red] (g_F) -- (1.2, -0.6) node[midway, below left] {};
        \draw[dashed, brown] (g_1) -- (-1.35, -0.1) node[midway, right] {};
        \draw[dashed, brown] (g_1) -- (0.1, 2.9) node[midway, above left] {};

        \draw[thick] (4, 3.2) -- (4, -2.5);
        \draw[fill, brown] (4, 2.9) circle (2pt);
        \node[right] at (4, 2.9) {\([g_1]_2\)};
        \draw[fill, red] (4, -0.6) circle (2pt);
        \node[right] at (4, -0.6) {\([g_3]_2\)};
        \draw[fill, blue] (4, 0.5) circle (2pt);
        \node[right] at (4, 0.5) {\([g_2]_2\)};

        \draw[thick] (-3, -3) -- (3, -3);
        \draw[fill, brown] (-1.35, -3) circle (2pt);
        \node[below left] at (-1.35, -3) {\([g_1]_1\)};
        \draw[fill, red] (-1.2, -3) circle (2pt);
        \node[below right] at (-1.2, -3) {\([g_3]_1\)};
        \draw[fill, blue] (1, -3) circle (2pt);
        \node[below right] at (1, -3) {\([g_2]_1\)};
    \end{tikzpicture}
    \caption{$\mathrm{CWTM}$ does not preserve the co-coercive inequality of smooth and convex functions.}\label{fig:cwtm-noineq}
    \end{center}
    \end{figure}
    We formally have: $g_1 = \theta^T v v$, $g_2 = \theta^T x x = \| x \|_2^2 \theta$ and $g_3 = \underbrace{(\| x \|_2^2 - L)}_{<0} \theta$.
    We next examine the conditions under which the $\mathrm{CWTM}$ aggregation rule does not preserve the co-coercivity inequality. Specifically, we denote:
    $R_\theta = \mathrm{CWTM}\left( g_1, g_2, g_3 \right)$ and $R_\omega = \mathrm{CWTM}\left( 0, 0, -x \right) = 0$:
    \begin{equation*}
        \langle \theta - \omega , R_\theta - R_\omega \rangle
        = \langle \theta , R_\theta \rangle 
        < 0
    \end{equation*}
    Instances of such $v, x$ yielding this inequality are illustrated in Figure~\ref{fig:cwtm-noineq} ($v$ is colinear to $g_1$ and $x$ colinear to $g_2$).
\end{remark}

Next, we analyze the Lipschitz continuity of the Trimmed Mean ($\mathrm{TM}$) operation on the real line, which is useful for showing the expansivity property of $\mathrm{CWTM}$ robust update.
%\begin{comment}
\begin{lemma}~\label{tm-lipschitz}
    The trimmed mean operation is $\frac{1}{n-2f}$-Lipschitz continuous with respect to the $\|\cdot\|_1$ norm, 
    and $\frac{\sqrt{n}}{n-2f}$-Lipschitz continuous with respect to the $\|\cdot\|_2$ norm. 
\end{lemma}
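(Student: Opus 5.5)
To prove Lemma~\ref{tm-lipschitz}, I would first establish the $\|\cdot\|_1$ bound and then deduce the $\|\cdot\|_2$ bound from the norm comparison $\|z\|_1 \leq \sqrt{n}\|z\|_2$. That comparison immediately turns a $\frac{1}{n-2f}$ constant in $\ell_1$ into $\frac{\sqrt{n}}{n-2f}$ in $\ell_2$. The whole difficulty therefore sits in the $\ell_1$ statement.

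For the $\ell_1$ bound, I would write $\mathrm{TM}(x) = \frac{1}{n-2f}\sum_{k=f+1}^{n-f} x_{(k)}$, where $x_{(1)} \leq \cdots \leq x_{(n)}$ are the order statistics of $x$. The key tool is the classical fact that the sorting map is $1$-Lipschitz from $(\R^n,\|\cdot\|_1)$ to itself:
\[
    \sum_{k=1}^n |x_{(k)} - x'_{(k)}| \leq \sum_{k=1}^n |x_k - x'_k|.
\]
Granting this, the triangle inequality gives
\[
    |\mathrm{TM}(x) - \mathrm{TM}(x')| \leq \frac{1}{n-2f}\sum_{k=f+1}^{n-f} |x_{(k)} - x'_{(k)}| \leq \frac{1}{n-2f}\|x - x'\|_1,
\]
since dropping the trimmed indices only decreases the sum.

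The main obstacle is proving the sorting inequality itself, and I would do it in one of two ways. The first is a rearrangement argument. Pair $x$ with $x'$ under the permutation $\pi$ that sorts $x$, so that $\|x - x'\|_1 = \sum_k |x_{(k)} - x'_{\pi(k)}|$. Then repeatedly swap inverted pairs in the second sequence. A single swap never increases $\sum_k \phi(a_k - b_k)$ for convex $\phi = |\cdot|$, by the elementary inequality $|a_1 - b_2| + |a_2 - b_1| \geq |a_1 - b_1| + |a_2 - b_2|$ whenever $a_1 \leq a_2$ and $b_1 \leq b_2$, which can be checked case by case. Finitely many swaps reach the sorted pairing.

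The second route is cleaner and works coordinate by coordinate. Move from $x$ to $x'$ one coordinate at a time. Changing a single coordinate by $h$ shifts each order statistic by at most $|h|$ in absolute value. Moreover, the total $\ell_1$ change of the sorted vector equals $|h|$, because the changed value slides past the intermediate values and each order statistic in between moves by at most the gap, with the gaps telescoping. Summing over the $n$ coordinates yields the inequality.

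I would favour the coordinate-wise route. It also matches the paper's style, since the same slicing idea was used for the input-wise arguments earlier in the paper.
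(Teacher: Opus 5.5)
Your proof is correct, and it takes a different route from the paper. The paper never treats the order statistics as a vector. It perturbs a single entry $x_\iota$, does a case analysis on how the retained index set $S_x$ changes (whether $\iota$ enters or leaves it, and which boundary element $x_{\underline{\iota}}$ or $x_{\overline{\iota}}$ is swapped in), and bounds every case by $\frac{1}{n-2f}|x_\iota - x'_\iota|$. It then telescopes over the $n$ coordinates and applies $\|z\|_1 \le \sqrt{n}\|z\|_2$.

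You instead isolate a structural fact, that the sorting map is nonexpansive in $\ell_1$, after which trimming is a one-line triangle inequality. Your preferred coordinate-wise proof of that fact shares the paper's telescoping skeleton. The difference is that it works on the sorted vector: a single change of size $h$ shifts the affected order statistics all in the same direction, by a total of exactly $|h|$. This replaces the paper's index-set case analysis and its boundary bookkeeping with one telescoping identity, and it is insensitive to ties. It also shows at once that any linear combination of order statistics $\sum_k w_k x_{(k)}$ is $\max_k |w_k|$-Lipschitz in $\ell_1$. The paper's argument, by contrast, stays at the level of $\mathrm{TM}$ and needs no auxiliary lemma.

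Your rearrangement route is actually the more powerful of your two options. The swap inequality holds for any convex $\phi$, in particular $\phi(t)=t^2$, so sorting is also $\ell_2$-nonexpansive. Combining this with Cauchy--Schwarz over the $n-2f$ retained terms gives $|\mathrm{TM}(x)-\mathrm{TM}(x')| \le \frac{1}{\sqrt{n-2f}}\|x-x'\|_2$. That constant is sharper than the stated $\frac{\sqrt{n}}{n-2f}$ by a factor $\sqrt{n/(n-2f)}$. Coordinate-wise telescoping is intrinsically an $\ell_1$ argument and cannot deliver this.
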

%\begin{proof}
\textbf{Proof}\;
We recall the definition of the trimmed mean operation: given $f, n \in \mb{N}$, $f < n/2$, and real values $x_1, \ldots, x_n$, 
    we denote by $\sigma$ a permutation on $\{1, \ldots, n\}$ that sorts the values in non-decreasing order 
    $x_{\sigma(1)} \leq \cdots \leq x_{\sigma(n)}$. Let $S_x=\{i \in [n]; f+1 \leq \sigma(i) \leq n-f\}$, 
    the trimmed mean is given by: 
    \[
    \operatorname{TM}(x_1, \ldots, x_n) 
    = \frac{1}{n-2f} \sum_{i=f+1}^{n-f} x_{\sigma(i)} 
    = \frac{1}{n-2f} \sum_{i \in S_x} x_i
    \]

We first consider two sets of real numbers that differ in exactly one entry 
$\iota \in \{1, \ldots, n\}$: $x_1, \ldots, x_\iota, \ldots, x_n$ and $x'_1, \ldots, x'_\iota, \ldots, x'_n$ 
such that $x_j = x'_j$ for all $j \neq \iota$. Let $S_x$ and $S_{x'}$ denote the sets of selected indices in each case.
We necessarily have $| S_{x'} \setminus S_{x} | \leq 1$. 
In the figure below, we illustrate different scenarios to provide an intuition for our proof.
\begin{figure}[ht]
    \centering
    \begin{tikzpicture}
        \draw[thick] (-1,0) -- (11,0);

        \draw[fill, red!50] (0,0) circle (2pt);
        \node[below] at (0,0) {\(x_1\)};

        \draw[fill, red!50] (2,0) circle (2pt);
        \node[below] at (2,0) {\(x_2\)};
        \draw[dashed] (2,-3) -- (2,0.5);
        \node[below] at (2,1) {\(\underline{\iota}\)};

        \draw[fill, green!50] (4,0) circle (2pt);
        \node[below] at (4,0) {\(x_3\)};

        \draw[fill, green!50] (6,0) circle (2pt);
        \node[below] at (6,0) {\(x_4\)};

        \draw[fill, red!50] (8,0) circle (2pt);
        \node[below] at (8,0) {\(x_5\)};
        \draw[dashed] (8,-3) -- (8,0.5);
        \node[below] at (8,1) {\(\overline{\iota}\)};

        \draw[fill, red!50] (10,0) circle (2pt);
        \node[below] at (10,0) {\(x_6\)};

        \draw[thick] (5,0.1) -- (5,-0.1);
        \node[below] at (5,0) {$\mathrm{TM}(x)$};

        \draw[thick] (-1,-1.5) -- (11,-1.5);
        \node[above] at (-2,-1.5) {\(x_4 \neq x_4'\)};

        \draw[fill, red!50] (0,-1.5) circle (2pt);
        \node[below] at (0,-1.5) {\(x_1\)};

        \draw[fill, red!50] (2,-1.5) circle (2pt);
        \node[below] at (2,-1.5) {\(x_2\)};

        \draw[fill, green!50] (4,-1.5) circle (2pt);
        \node[below] at (4,-1.5) {\(x_3\)};

        \draw[fill, red!50] (9,-1.5) circle (2pt);
        \node[below] at (9,-1.5) {\(x_4'\)};

        \draw[fill, green!50] (8,-1.5) circle (2pt);
        \node[below] at (8,-1.5) {\(x_5\)};

        \draw[fill, red!50] (10,-1.5) circle (2pt);
        \node[below] at (10,-1.5) {\(x_6\)};

        \draw[thick] (6,-1.5+0.1) -- (6,-1.5-0.1);
        \node[below] at (6,-1.5) {$\mathrm{TM}(x')$};
        \draw[dashed, <->] (5,-1.3) -- (6,-1.3) node[midway, above] {};
        \draw[dashed, <->] (6,-1.1) -- (9,-1.1) node[midway, above] {};

        \draw[thick] (-1,-2.5) -- (11,-2.5);
        \node[above] at (-2,-2.5) {\(x_6 \neq x_6'\)};

        \draw[fill, red!50] (0,-2.5) circle (2pt);
        \node[below] at (0,-2.5) {\(x_1\)};

        \draw[fill, green!50] (2,-2.5) circle (2pt);
        \node[below] at (2,-2.5) {\(x_2\)};

        \draw[fill, green!50] (4,-2.5) circle (2pt);
        \node[below] at (4,-2.5) {\(x_3\)};

        \draw[fill, red!50] (6,-2.5) circle (2pt);
        \node[below] at (6,-2.5) {\(x_4\)};

        \draw[fill, red!50] (8,-2.5) circle (2pt);
        \node[below] at (8,-2.5) {\(x_5\)};

        \draw[fill, red!50] (-1,-2.5) circle (2pt);
        \node[below] at (-1,-2.5) {\(x_6'\)};

        \draw[thick] (3,-2.5+0.1) -- (3,-2.5-0.1);
        \node[below] at (3,-2.5) {$\mathrm{TM}(x')$};
        \draw[dashed, <->] (3,-2.3) -- (5,-2.3) node[midway, above] {};
        \draw[dashed, <->] (-1,-2.1) -- (10,-2.1) node[midway, above] {};
    \end{tikzpicture}
\caption{Example with $n=6$, $f=2$ and the differing point being either $x_6$ or $x_4$. Horizontal dashed lines represent either trimmed mean differences or changed-point differences. Green points indicate the contributors to the trimmed-mean, while red points denote those excluded from the computation.}
\end{figure}

We consider the two possible cases.
\begin{itemize}
    \item Case (i) $| S_{x'} \setminus S_{x} | = 1$. Let $x_{\underline{\iota}}$ and $x_{\overline{\iota}}$ be the boundary elements not included in $S_x$.
    That is, if we denote by $\sigma$ a permutation on $\{1, \ldots, n\}$ that sorts the values in non-decreasing order $x_{\sigma(1)} \leq \cdots \leq x_{\sigma(n)}$, then $\underline{\iota}$ and $\overline{\iota}$ are defined as follow: $\sigma(\underline{\iota}) = f$ and $\sigma(\overline{\iota})= n - f + 1$.
    \begin{itemize}
        \item[$\circ$] If $\iota \notin S_x$ and $\iota \notin S_{x'}$, then with $\{j\} = S_{x} \setminus S_{x'}$, we have two sub-cases:
        \begin{itemize}
            \item[] Sub-case (a) $x_\iota > x_{\overline{\iota}}$ and $x_\iota' < x_{\underline{\iota}}$. In this case, $S_{x'} \setminus S_{x} = \{ x_{\underline{\iota}} \}$. Therefore, 
            \begin{align*}
                \textstyle |\mathrm{TM}(x) - \mathrm{TM}(x')| = \frac{1}{n-2f} | x_j-x_{\underline{\iota}} | \leq \frac{1}{n-2f} | x_{\overline{\iota}}-x_{\underline{\iota}} | \leq \frac{1}{n-2f} | x_\iota - x_\iota' |.
            \end{align*} 
            \item[] Sub-case (b) $x_\iota < x_{\underline{\iota}}$ and $x_\iota' > x_{\overline{\iota}}$. In this case, $S_{x'} \setminus S_{x} = \{ x_{\overline{\iota}} \}$. Therefore,
            \begin{align*}
                \textstyle |\mathrm{TM}(x) - \mathrm{TM}(x')| = \frac{1}{n-2f} | x_{\overline{\iota}} - x_j | \leq \frac{1}{n-2f} | x_{\overline{\iota}} - x_{\underline{\iota}} | \leq \frac{1}{n-2f} | x_\iota - x_\iota' |.
            \end{align*}
        \end{itemize}
        \item[$\circ$] If $\iota \in S_x$ but $\iota \notin S_{x'}$ then we have the following two sub-cases:
        \begin{itemize}
            \item[] Sub-case (c) $x_{\underline{\iota}} \in S_{x'}$. In this case, $x_\iota' \leq x_{\underline{\iota}}$. Therefore,
            \begin{align*}
                \textstyle |\mathrm{TM}(x) - \mathrm{TM}(x')| = \frac{1}{n-2f}|x_{\underline{\iota}}-x_\iota| \leq \frac{1}{n-2f}|x'_\iota-x_\iota| .
            \end{align*} 
            \item[] Sub-case (d) $x_{\overline{\iota}} \in S_{x'}$. In this case, $x_\iota' \geq x_{\overline{\iota}}$. Therefore,
            \begin{align*}
                \textstyle |\mathrm{TM}(x) - \mathrm{TM}(x')| = \frac{1}{n-2f}|x_{\overline{\iota}}-x_\iota| \leq \frac{1}{n-2f}|x'_\iota-x_\iota| .
            \end{align*} 
        \end{itemize}
        \item[$\circ$] By symmetry, we obtain the same result for the case when $\iota \notin S_x$ but $\iota \in S_{x'}$.
        \end{itemize}
    \item Case (ii) $| S_{x'} \setminus S_{x} | = 0$. The lemma is trivially true in this case.
\end{itemize}
From above, we obtain that $|\mathrm{TM}(x)-\mathrm{TM}(x')| \leq \frac{1}{n-2f} \|x-x'\|_1$.

We can generalize this bound to two sets of $n$ real numbers that can differ in all the entries, using triangle inequality. Specifically,
\begin{align*}
    |\mathrm{TM}(x)-\mathrm{TM}(x')|
    & = | \mathrm{TM}(x)-\mathrm{TM}(x_1,\ldots, x_{n-1}, x'_n) \\ & \qquad + \mathrm{TM}(x_1,\ldots, x_{n-1}, x'_n) - \mathrm{TM}(x_1,\ldots, x_{n-2}, x'_{n-1}, x'_n) \\ & \qquad + \cdots + \mathrm{TM}(x_1, x'_2, \ldots, x'_n) - \mathrm{TM}(x') | \\
    & \leq | \mathrm{TM}(x)-\mathrm{TM}(x_1,\ldots, x_{n-1}, x'_n)| \\ & \qquad + \cdots + |\mathrm{TM}(x_1, x'_2, \ldots, x'_n)-\mathrm{TM}(x') |\\ 
    & \leq \frac{1}{n-2f} \sum_{i=1}^n |x_i-x'_i| 
    = \frac{1}{n-2f} \|x-x'\|_1
    \leq \frac{\sqrt{n}}{n-2f} \|x-x'\|_2 \tag*{\BlackBox}
\end{align*}
Lemma~\ref{tm-lipschitz} and~\ref{trimmed-mean-lowerbound} enable to establish the expansivity of the $\operatorname{CWTM}$ update in the case of smooth and nonconvex loss functions, as shown in the following result. 
\begin{lemma}~\label{lemmarobustexpansivity}
    Define the robust gradient update~\eqref{eq:algo-update} when $F=\operatorname{CWTM}$ with learning rate $\gamma > 0$, for arbitrary $z^{(i)} \in \mc{Z}$, any $i \in \{1, \ldots, n\}$, and any coordinate $k \in \{1, \ldots, d\}$, as follows.
    % \begin{multline*}
    %     {\left[ G^{\mathrm{CWTM}}_{\gamma}(\theta) \right]}_k 
    %     = {\left[ \theta \right]}_k - \gamma \mathrm{TM} \left( {\left[ \nabla \ell (\theta; z^{(1)}) \right]}_k, \ldots, {\left[ \nabla \ell (\theta; z^{(n)}) \right]}_k \right) \\
    %     = {\left[ \theta \right]}_k - \frac{\gamma}{n-2f} \sum_{i=f+1}^{n-f} {\left[ \nabla \ell (\theta; z^{(\sigma_k(i))}) \right]}_k
    %     = {\left[ \theta \right]}_k - \frac{\gamma}{n-2f} \sum_{i \in S^{(k)}} {\left[ \nabla \ell (\theta; z^{(i)}) \right]}_k.
    % \end{multline*}
    % where $\sigma_k$ denotes the permutation that sorts the values $({\left[ \nabla \ell (\theta; z^{(1)}) \right]}_k, \ldots, {\left[ \nabla \ell (\theta; z^{(n)}) \right]}_k)$ in ascending order, 
    % and $S^{(k)} = \{ i \in [n]; f+1 \leq \sigma_k(i) \leq n-f \}$ denotes the index set of central values in coordinate $k$.
    % We can write, 
    \[
        G^{\mathrm{CWTM}}_{\gamma}(\theta) = \theta - \gamma \mathrm{CWTM}(\nabla \ell (\theta; z^{(1)}), \ldots, \nabla \ell (\theta; z^{(n)})).
    \] 
    Assume \( \forall z \in \mc{Z}, \ell(\cdot, z): \Theta \to \mb{R} \) is L-smooth. 
    Then $G^{\mathrm{CWTM}}_{\gamma}$ is $(1+ \gamma L\min\{\frac{n}{n-2f}, \sqrt{n}, \sqrt{d}\})$-expansive \citep[Definition 2.3]{hardt2016train}. 
\end{lemma}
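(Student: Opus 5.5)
The plan is to bound $\|G^{\mathrm{CWTM}}_{\gamma}(\theta) - G^{\mathrm{CWTM}}_{\gamma}(\omega)\|_2 \le \|\theta-\omega\|_2 + \gamma \|\mathrm{CWTM}(\theta)-\mathrm{CWTM}(\omega)\|_2$ by the triangle inequality. Here we abbreviate $\mathrm{CWTM}(\theta) \vcentcolon= \mathrm{CWTM}(\nabla \ell(\theta;z^{(1)}),\ldots,\nabla \ell(\theta;z^{(n)}))$. It then suffices to show that $\theta \mapsto \mathrm{CWTM}(\theta)$ is Lipschitz with constant $L\min\{\frac{n}{n-2f},\sqrt{n},\sqrt{d}\}$. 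We will obtain each of the three constants separately and take the minimum.

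\emph{The $\frac{n}{n-2f}$ bound.} Fix a coordinate $k$ and apply Lemma~\ref{tm-lipschitz} (the $\ell_1$ version):
\[
|[\mathrm{CWTM}(\theta)-\mathrm{CWTM}(\omega)]_k| \le \frac{1}{n-2f}\sum_{i=1}^n |[\nabla\ell(\theta;z^{(i)})-\nabla\ell(\omega;z^{(i)})]_k|.
\]
Taking the $\ell_2$ norm over $k$ and using Minkowski's inequality, the vector of coordinate-wise sums has $\ell_2$ norm at most $\sum_i \|\nabla\ell(\theta;z^{(i)})-\nabla\ell(\omega;z^{(i)})\|_2 \le nL\|\theta-\omega\|_2$. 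This gives the constant $\frac{n}{n-2f}L$.

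\emph{The $\sqrt{d}$ bound.} Fix $k$ and use Lemma~\ref{trimmed-mean-lowerbound}: there are indices $\alpha_k,\beta_k$ with $[\mathrm{CWTM}(\theta)-\mathrm{CWTM}(\omega)]_k$ sandwiched between the $k$-th coordinates of $\nabla\ell(\theta;z^{(\beta_k)})-\nabla\ell(\omega;z^{(\beta_k)})$ and $\nabla\ell(\theta;z^{(\alpha_k)})-\nabla\ell(\omega;z^{(\alpha_k)})$. Hence its absolute value is at most $\max_i |[\nabla\ell(\theta;z^{(i)})-\nabla\ell(\omega;z^{(i)})]_k| \le \max_i \|\nabla\ell(\theta;z^{(i)})-\nabla\ell(\omega;z^{(i)})\|_2 \le L\|\theta-\omega\|_2$. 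Summing the squares over the $d$ coordinates yields $\sqrt{d}\,L\|\theta-\omega\|_2$.

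\emph{The $\sqrt{n}$ bound.} Again bound each coordinate by the maximum over $i$ of the coordinate differences. Then replace the maximum by the sum of squares: $\max_i a_{ik}^2 \le \sum_i a_{ik}^2$. Summing over $k$ gives
\[
\|\mathrm{CWTM}(\theta)-\mathrm{CWTM}(\omega)\|_2^2 \le \sum_i \|\nabla\ell(\theta;z^{(i)})-\nabla\ell(\omega;z^{(i)})\|_2^2 \le nL^2\|\theta-\omega\|_2^2.
\]

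The main subtlety is that Lemma~\ref{trimmed-mean-lowerbound} chooses its indices $\alpha_k,\beta_k$ per coordinate, so the selected samples differ across $k$. This is exactly why the bounds cannot be collapsed to the single constant $L$, and why we pay $\sqrt{d}$ or $\sqrt{n}$ instead. Making the max-versus-sum step explicit handles this cleanly. Note also that no convexity is needed here, only $L$-smoothness, which is consistent with the nonconvex setting of the statement.
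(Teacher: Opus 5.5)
Your proposal is correct and follows the paper's proof. The triangle inequality reduces the claim to a Lipschitz bound on $\theta \mapsto \mathrm{CWTM}(\nabla \ell(\theta;z^{(1)}),\ldots,\nabla \ell(\theta;z^{(n)}))$; the constant $\frac{n}{n-2f}$ comes from Lemma~\ref{tm-lipschitz}, and the constants $\sqrt{n}$ and $\sqrt{d}$ come from the per-coordinate sandwich of Lemma~\ref{trimmed-mean-lowerbound}, bounded respectively by the sum over all inputs and by the full norm of the selected input. The only cosmetic difference is that you use the $\ell_1$ version of Lemma~\ref{tm-lipschitz} together with Minkowski's inequality, whereas the paper uses its $\frac{\sqrt{n}}{n-2f}$ $\ell_2$ version and sums squares; both give the same constant.
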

% \begin{proof}
\textbf{Proof}\;
    Let $\theta, \omega \in \mb{R}^d$ and $z^{(1)}, \ldots, z^{(n)} \in \mc{Z}$. For convenience, we occasionally adopt the shorthand notation: $\ell(\cdot; z^{(i)}) = \ell_i(\cdot)$.
    Using Lemma~\ref{tm-lipschitz} and the $L$-smoothness assumption, we obtain that
    \begin{align}
        & \| \mathrm{CWTM}(\nabla \ell_1 (\theta), \ldots, \nabla \ell_n (\theta)) - \mathrm{CWTM}(\nabla \ell_1 (\omega), \ldots, \nabla \ell_n (\omega)) \|_2^2 \nonumber \\
        & = \sum_{k=1}^d {\left( \mathrm{TM}({\left[ \nabla \ell_1 (\theta) \right]}_k, \ldots, {\left[ \nabla \ell_n (\theta) \right]}_k) - \mathrm{TM}({\left[ \nabla \ell_1 (\omega) \right]}_k, \ldots, {\left[ \nabla \ell_n (\omega) \right]}_k) \right)}^2 \nonumber \\
        & \leq \frac{n}{{(n-2f)}^2} \sum_{k=1}^d \| ({\left[ \nabla \ell_1 (\theta) \right]}_k, \ldots, {\left[ \nabla \ell_n (\theta) \right]}_k) -  ({\left[ \nabla \ell_1 (\omega) \right]}_k, \ldots, {\left[ \nabla \ell_n (\omega) \right]}_k) \|_2^2 \nonumber \\
        & = \frac{n}{{(n-2f)}^2} \sum_{k=1}^d \sum_{i=1}^n {\left( {\left[ \nabla \ell_i (\theta) \right]}_k - {\left[ \nabla \ell_i (\omega) \right]}_k \right)}^2
        = \frac{n}{{(n-2f)}^2} \sum_{i=1}^n \| \nabla \ell_i (\theta) - \nabla \ell_i (\omega) \|_2^2 \nonumber \\
        & \leq \frac{n}{{(n-2f)}^2} \sum_{i=1}^n L^2 \| \theta - \omega \|_2^2
        = \frac{n^2 L^2}{{(n-2f)}^2} \| \theta - \omega \|_2^2 . \label{eqn:bnd_cwtm-1}
    \end{align}
    Alternatively, we can invoke Lemma~\ref{trimmed-mean-lowerbound}, which states that $\forall k \in \{1, \ldots, d\}$, there exists $i_k \in \{1, \ldots, n\}$ such that:
        \begin{align*}
        & \| \mathrm{CWTM}(\nabla \ell_1 (\theta), \ldots, \nabla \ell_n (\theta)) - \mathrm{CWTM}(\nabla \ell_1 (\omega), \ldots, \nabla \ell_n (\omega)) \|_2^2 \\
        & = \sum_{k=1}^d {\left( \mathrm{TM}({\left[ \nabla \ell_1 (\theta) \right]}_k, \ldots, {\left[ \nabla \ell_n (\theta) \right]}_k) - \mathrm{TM}({\left[ \nabla \ell_1 (\omega) \right]}_k, \ldots, {\left[ \nabla \ell_n (\omega) \right]}_k) \right)}^2 \\
        & \leq \sum_{k=1}^d {\left[ \nabla \ell_{i_k} (\theta) - \nabla \ell_{i_k} (\omega) \right]}_k^2.        
    \end{align*}
        From above, we obtain the following:
    \begin{enumerate}
        \item Since $\sum_{k=1}^d {\left[ \nabla \ell_{i_k} (\theta) - \nabla \ell_{i_k} (\omega) \right]}_k^2
        \leq \sum_{k=1}^d \sum_{i=1}^n {\left[ \nabla \ell_i (\theta) - \nabla \ell_i (\omega) \right]}_k^2
        \leq n L^2 \| \theta - \omega \|_2^2$,
        \begin{align*}
            \| \mathrm{CWTM}(\nabla \ell_1 (\theta), \ldots, \nabla \ell_n (\theta)) - \mathrm{CWTM}(\nabla \ell_1 (\omega), \ldots, \nabla \ell_n (\omega)) \|_2^2 \leq n L^2 \| \theta - \omega \|_2^2. 
        \end{align*}
        \item Since $\sum_{k=1}^d {\left[ \nabla \ell_{i_k} (\theta) - \nabla \ell_{i_k} (\omega) \right]}_k^2 \leq \sum_{k=1}^d \| \nabla \ell_{i_k} (\theta) - \nabla \ell_{i_k} (\omega) \|^2_2$,
        \begin{align*}
            \| \mathrm{CWTM}(\nabla \ell_1 (\theta), \ldots, \nabla \ell_n (\theta)) - \mathrm{CWTM}(\nabla \ell_1 (\omega), \ldots, \nabla \ell_n (\omega)) \|_2^2 \leq d L^2 \| \theta - \omega \|_2^2. 
        \end{align*}
    \end{enumerate}
    Therefore, 
    \begin{align}
        \| \mathrm{CWTM}(\nabla \ell_1 (\theta), \ldots, \nabla \ell_n (\theta)) - \mathrm{CWTM}(\nabla \ell_1 (\omega), \ldots, \nabla \ell_n (\omega)) \|_2^2 \leq \min\{n, d\} L^2 \| \theta - \omega \|_2^2. \label{eqn:alt_bnd}
    \end{align}
    The bound in~\eqref{eqn:alt_bnd} has a tighter dependence on the number of input vectors than the bound in~\eqref{eqn:bnd_cwtm-1}, especially in the regime when $f$ approaches $\frac{n}{2}$.
    We conclude the proof by combining~\eqref{eqn:bnd_cwtm-1} and~\eqref{eqn:alt_bnd}, and by applying the triangle inequality, yielding
    \begin{equation*}\label{eq:cwtm-expansiveness}
        \| G^{\mathrm{CWTM}}_{\gamma}(\theta) - G^{\mathrm{CWTM}}_{\gamma}(\omega) \|_2 \leq  \left(1 + \gamma L \min \left\{\frac{n}{n-2f}, \sqrt{n}, \sqrt{d } \right\} \right) \| \theta - \omega \|_2. 
        \tag*{\BlackBox}
    \end{equation*}
\section{Non-Affine Aggregations Can Preserve Smoothness: Consequence for the Algorithmic Stability of Robust Distributed Learning}\label{deferred-sec-5}
%In the following section, we establish uniform algorithmic upper bounds under data poisoning and nonconvex objective for $\mathrm{CWTM}$.
To broaden our analysis, we show how the smoothness of the loss function (i.e., the Lipschitz-continuity of its gradient) can be leveraged to derive sharper stability guarantees for robust $\mathrm{SGD}$ under data poisoning in the nonconvex setting. 
This motivates $\mathrm{CWTM}$ (cf. Definition~\ref{cwtm-definition}), as it is an instance of an aggregation rule that preserves the loss function's smoothness (cf.~Appendix~\ref{tm}), due to the Lipschitz continuity of the trimmed mean operation (cf. Lemma~\ref{tm-lipschitz}) or its translation equivariance property (cf. Lemma~\ref{trimmed-mean-lowerbound}).
\begin{lemma}~\label{lem:cwtm-smoothness}
    Let $d, n \in \mb{N} \setminus \{0\}$.
    The aggregated operator $\F = \mathrm{CWTM}(\nabla \ell_1(\cdot), \ldots, \nabla \ell_n(\cdot))$ preserves the smoothness inequality of every $L$-smooth loss functions $\{ \ell_i \}_{i \in [n]}$, i.e., for all $\theta, \omega \in \R^d$
    \[
        \| \F(\theta) - \F(\omega) \| \leq \min\{\frac{n}{n-2f}, \sqrt{n}, \sqrt{d}\} L \| \theta - \omega \|_2^2.
    \]
\end{lemma}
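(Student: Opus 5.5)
The plan is to reuse the two coordinate-wise estimates already derived in the proof of Lemma~\ref{lemmarobustexpansivity}, but applied to the operator $\F$ directly rather than to the full update map $G^{\mathrm{CWTM}}_{\gamma}$. The right-hand side of the statement should read $\|\theta-\omega\|_2$, not its square: the claim is that $\F$ is Lipschitz, which is homogeneous of degree one. I will prove that version. Fix $\theta,\omega\in\R^d$ and write
\[
\|\F(\theta)-\F(\omega)\|_2^2=\sum_{k=1}^d \bigl(\mathrm{TM}([\nabla\ell_1(\theta)]_k,\ldots,[\nabla\ell_n(\theta)]_k)-\mathrm{TM}([\nabla\ell_1(\omega)]_k,\ldots,[\nabla\ell_n(\omega)]_k)\bigr)^2 .
\]
This reduces everything to scalar trimmed-mean differences, coordinate by coordinate.

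\textbf{First bound, constant $\frac{n}{n-2f}$.} For each $k$, I will apply Lemma~\ref{tm-lipschitz} in its $\|\cdot\|_2$ form, which bounds the $k$-th squared term by $\frac{n}{(n-2f)^2}\sum_{i}([\nabla\ell_i(\theta)-\nabla\ell_i(\omega)]_k)^2$. Summing over $k$ and swapping the two sums gives $\frac{n}{(n-2f)^2}\sum_i\|\nabla\ell_i(\theta)-\nabla\ell_i(\omega)\|_2^2$. By $L$-smoothness of each $\ell_i$, this is at most $\frac{n^2L^2}{(n-2f)^2}\|\theta-\omega\|_2^2$, exactly as in~\eqref{eqn:bnd_cwtm-1}.

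\textbf{Second bound, constant $\min\{\sqrt n,\sqrt d\}$.} For each $k$, I will apply Lemma~\ref{trimmed-mean-lowerbound} to the vectors of $k$-th coordinates. It sandwiches the trimmed-mean difference between two individual coordinate differences, so there is an index $i_k$ with squared difference at most $[\nabla\ell_{i_k}(\theta)-\nabla\ell_{i_k}(\omega)]_k^2$. This can be bounded in two ways.
\begin{itemize}
\item Bound each term by the sum over all $i$. Summing over $k$ gives $\sum_i\|\nabla\ell_i(\theta)-\nabla\ell_i(\omega)\|_2^2\le nL^2\|\theta-\omega\|_2^2$.
\item Bound each coordinate term by the full vector norm, $\|\nabla\ell_{i_k}(\theta)-\nabla\ell_{i_k}(\omega)\|_2^2\le L^2\|\theta-\omega\|_2^2$. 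Summing over $k$ gives $dL^2\|\theta-\omega\|_2^2$.
\end{itemize}
Together these are~\eqref{eqn:alt_bnd}.

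\textbf{Conclusion.} Taking square roots of the three estimates and keeping the smallest gives the stated constant $\min\{\frac{n}{n-2f},\sqrt n,\sqrt d\}\,L$.

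\textbf{Main obstacle.} There is no real obstacle here. The only subtle point is that in the second bound the index $i_k$ chosen by Lemma~\ref{trimmed-mean-lowerbound} depends on $k$. So one cannot collapse $\sum_k [\cdot]_k^2$ into a single vector norm $\|\nabla\ell_i(\theta)-\nabla\ell_i(\omega)\|_2^2$. That is precisely why the argument only yields $\sqrt d$ or $\sqrt n$ rather than constant $1$.
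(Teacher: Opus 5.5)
Your proof is correct and is essentially the paper's own. The lemma rests on the two estimates \eqref{eqn:bnd_cwtm-1} and \eqref{eqn:alt_bnd} from the proof of Lemma~\ref{lemmarobustexpansivity} (via Lemma~\ref{tm-lipschitz} and Lemma~\ref{trimmed-mean-lowerbound}), applied to $\F$ rather than to the full update map. You are also right to drop the square on $\|\theta-\omega\|_2$: taking all $\ell_i = \frac{L}{2}\|\cdot\|_2^2$ gives $\F(\theta)=L\theta$, so the literal squared bound fails for small $\|\theta-\omega\|_2$, and the Lipschitz version you prove is the intended statement.
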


\paragraph{Consequence for robust distributed nonconvex and smooth learning}\label{sec3.3-noncvx}
To illustrate the practical benefit of preserving smoothness, we provide an enhanced uniform algorithmic stability analysis for SGD under data poisoning.
Specifically, we consider the robust setting where updates represent gradients of a loss $\ell$, a fraction $\frac{f}{n}$ of which are computed on arbitrarily corrupted data.
Lemma~\ref{lem:cwtm-smoothness} enables the following stability upper bounds established in~Appendix~\ref{deferred-sec-5}.
%In the following, we derive sharper uniform stability upper bound with $\mathrm{CWTM}$ aggregation rule for smooth and nonconvex loss functions.
%This improvement---particularly in its dependence on the number of local samples---is enabled by aggregation rules that preserve the smoothness of the loss function.
\begin{theorem}\label{poi-cwtm-ub}
    Consider the setting described in~Section~\ref{I-intro} under data poisoning.
    Let $\mc{A}=\mathrm{SGD}$, with $\mathrm{CWTM}$. Suppose $\mc{A}$ is run for $T\in\mb{N} \setminus \{0\}$ iterations and that there exist $\nu>0$ constant such that $n\geq (2+\nu)f$ --- that is, $f/n$ is strongly bounded away from $1/2$.
    Let we assume $\forall z \in \mc{Z}$, $\ell(\cdot;z)$ bounded by $\ell_\infty$, nonconvex, $C$-Lipschitz and $L$-smooth.
    Then, with a monotonically non-increasing learning rate, $\gamma_t \leq \frac{\nu}{2+\nu}\frac{c}{Lt}$, $t \in \{0, \ldots, T-1\}$, $c>0$,
    we have for any neighboring datasets $\mc{S}, \mc{S'}$:
    \begin{equation}\label{ub-noncvx-pois3}
        \sup_{z \in \mathcal{Z}} \mb{E}_{\mc{A}}[ | \ell(\mc{A}(S);z) - \ell(\mc{A}(S');z) | ] 
        \leq 2 {\left( \frac{2 C^2 \nu^2}{{(2+\nu)}^2 L} \right)}^{\frac{1}{c+1}} \frac{{\big(T \ell_\infty \big)}^{\frac{c}{c+1}}}{m{\sqrt{n}}^{\frac{1}{c+1}}}
    \end{equation}
\end{theorem}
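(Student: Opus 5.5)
The plan is to adapt the nonconvex uniform-stability argument of \citet{hardt2016train} (their Theorem~3.12) to the $\mathrm{CWTM}$ update. Lemma~\ref{lem:cwtm-smoothness} supplies the expansivity factor, and Lemma~\ref{tm-lipschitz} supplies the sensitivity when the differing sample is drawn.

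\emph{Step 1 (per-step recursion).} I would couple the two runs on $\mc{S}, \mc{S}'$, which differ in one sample held by one worker with $m$ local samples, using the same sampling randomness, and set $\delta_t \vcentcolon= \|\theta_t - \theta_t'\|_2$.

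With probability $1-\frac1m$ the differing sample is not drawn at step $t$, so both runs apply the same map $G^{\mathrm{CWTM}}_{\gamma_t}$. By Lemma~\ref{lem:cwtm-smoothness} (equivalently Lemma~\ref{lemmarobustexpansivity}) this map is $(1+\gamma_t L \frac{n}{n-2f})$-expansive. Since $n \ge (2+\nu) f$, we have $\frac{n}{n-2f} \le \frac{2+\nu}{\nu}$. The step-size choice $\gamma_t \le \frac{\nu}{2+\nu}\frac{c}{Lt}$ then gives expansivity at most $1+\frac{c}{t}$; this is precisely why $\frac{\nu}{2+\nu}$ appears in the step size.

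With probability $\frac1m$ the differing sample is drawn. The two aggregates then differ in one input, whose gradients differ by at most $2C$. The $\ell_1$-Lipschitzness of $\mathrm{TM}$ in Lemma~\ref{tm-lipschitz}, applied coordinatewise and combined with the triangle inequality through $G_{\gamma_t}(\theta_t)-G_{\gamma_t}(\theta_t')$, bounds this branch by $(1+\frac{c}{t})\delta_t + \gamma_t S$. Here $S$ is the single-input sensitivity of $\mathrm{CWTM}$, of order $\frac{C}{n-2f} \lesssim \frac{(2+\nu)C}{\nu n}$ (or $\frac{C}{\sqrt n}$ via the $\ell_2$ bound).

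Taking expectations gives
\[
\E[\delta_{t+1}] \le \Bigl(1+\tfrac{c}{t}\Bigr)\E[\delta_t] + \tfrac{\gamma_t S}{m}.
\]

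\emph{Step 2 (burn-in conditioning).} I would fix $t_0$ and use the standard decomposition
\[
\E|\ell(\theta_T;z)-\ell(\theta_T';z)| \le \tfrac{t_0 \ell_\infty}{m} + C\,\E[\delta_T \mid \delta_{t_0}=0],
\]
which uses boundedness of $\ell$ by $\ell_\infty$, $C$-Lipschitzness of $\ell$, and $\Pr[\delta_{t_0}\neq 0] \le \frac{t_0}{m}$. Unrolling the recursion from $t_0$ with $1+x\le e^x$ and $\prod_{s=t+1}^{T} e^{c/s} \le (T/t)^{c}$ gives
\[
\E[\delta_T \mid \delta_{t_0}=0] \lesssim \frac{S}{m}\cdot\frac{\nu}{(2+\nu)L}\Bigl(\frac{T}{t_0}\Bigr)^{c}.
\]

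\emph{Step 3 (optimization over $t_0$).} I would minimize $\frac{t_0\ell_\infty}{m} + \frac{C S \nu}{(2+\nu)Lm}(T/t_0)^{c}$ over $t_0$. The balancing choice is $t_0 \propto \bigl(\tfrac{CS\nu}{(2+\nu)L\ell_\infty}\bigr)^{1/(c+1)} T^{c/(c+1)}$. Substituting back yields the form in~\eqref{ub-noncvx-pois3}: $T^{c/(c+1)}\ell_\infty^{c/(c+1)}/m$ times a $(c+1)$-th root of constants carrying $C^2$, $\frac{\nu^2}{(2+\nu)^2 L}$, and the $n$-dependence of $S$. The prefactor $2$ absorbs the $(1+1/c)$-type term.

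The main obstacle is bookkeeping the sensitivity so that the $n$-dependence matches exactly $\sqrt n^{-1/(c+1)}$ together with the stated constant $\frac{2C^2\nu^2}{(2+\nu)^2L}$. I would first try the $\ell_2$ route from Lemma~\ref{tm-lipschitz}, namely $\frac{\sqrt n}{n-2f}\cdot 2C \le \frac{2(2+\nu)C}{\nu\sqrt n}$. One factor of $\frac{\nu}{2+\nu}$ from the step size then combines with this bound, and the product with the $C$ from Lipschitzness of $\ell$ supplies the $C^2$. If the constants do not align, I would tune where the factor $\frac{\nu}{2+\nu}$ is absorbed.
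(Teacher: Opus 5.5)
Your plan follows the paper's proof essentially step for step. It uses the same two-branch recursion with expansion factor $1+\frac{c}{t}$ (Lemma~\ref{lemmarobustexpansivity} plus $\frac{n}{n-2f}\le\frac{2+\nu}{\nu}$), and the same sensitivity from Lemma~\ref{tm-lipschitz} on the probability-$\frac1m$ branch. It then takes the burn-in bound $\frac{t_0\ell_\infty}{m}+C\,\E[\delta_T\mid\delta_{t_0}=0]$, unrolls with $1+x\le e^x$, and balances $t_0$. Your triangle-inequality split of the $\frac1m$ branch is cleaner than the paper's joint bound $\frac{n}{(n-2f)^2}\bigl(4C^2+(n-1)L^2\delta_t^2\bigr)$, but it yields the same recursion.

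\textbf{The gap.} It is the one you flag at the end, and ``tuning where $\frac{\nu}{2+\nu}$ is absorbed'' cannot close it. There is exactly one such factor, coming from the step size, and your own estimate $S\le\frac{2(2+\nu)C}{\nu\sqrt n}$ cancels it. This gives $\frac{\gamma_t S}{m}\le\frac{c}{Lt}\cdot\frac{2C}{\sqrt n\,m}$, hence the final bound $2\bigl(\frac{2C^2}{L}\bigr)^{1/(c+1)}\frac{(T\ell_\infty)^{c/(c+1)}}{m{\sqrt n}^{1/(c+1)}}$, with no $\frac{\nu^2}{(2+\nu)^2}$. The paper obtains that factor only via $\sigma_{f,m,n}=\frac{2C\nu\sqrt n}{(2+\nu)(n-2f)m}\le\frac{2C\nu^2}{(2+\nu)^2\sqrt n\,m}$. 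That inequality is equivalent to $\frac{n}{n-2f}\le\frac{\nu}{2+\nu}<1$, so it is false, already at $f=0$. What $n\ge(2+\nu)f$ actually gives is $\sigma_{f,m,n}\le\frac{2C}{\sqrt n\,m}$.

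\textbf{The constant itself fails.} This is not just a weakness of the proof: \eqref{ub-noncvx-pois3} with that constant is false in general. Take the following instance.
\begin{itemize}
\item Parameters: $f=0$ (so every $\nu>0$ is admissible), $n=d=1$, and $c<1$.
\item Loss: $\ell(\theta;z)=zC\psi(\theta)$ with $z\in\{-1,0,1\}$. Here $\psi(\theta)=\theta$ on the region visited and then saturates with $|\psi'|\le1$ and $|\psi''|\le L/C$. Thus $\ell$ is $C$-Lipschitz and $L$-smooth, with $\ell_\infty=C^2\sum_t\gamma_t+\frac{C^2}{2L}$.
\item Datasets: $\mc S$ and $\mc S'$ differ in one sample, $z=1$ versus $z=-1$, and all other samples are $0$.
\end{itemize}
The two coupled runs move apart by $2\gamma_tC$ each time the differing sample is drawn. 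Testing at $z=1$ therefore gives exactly $\frac{2C^2}{m}\sum_t\gamma_t$. For step sizes at the allowed maximum $\gamma_t\propto\frac{ac}{Lt}$, with $a\vcentcolon=\frac{\nu}{2+\nu}$, this is linear in $a$. Meanwhile $\ell_\infty\ge\frac{C^2}{2L}$, so the right-hand side scales like $a^{2/(c+1)}=o(a)$ as $\nu\to0$. Concretely, $T=2$, $c=\frac12$, $\nu=0.1$, $C=L=1$ and $\gamma_0=\gamma_1=\frac{ac}{L}$ give about $0.095/m$, against a claimed bound of about $0.056/m$.

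\textbf{What is provable.} The most your argument can deliver is the constant $\frac{2C^2}{L}$ above. With the single-input $\ell_1$ sensitivity $S\le\frac{2C}{n-2f}$ that you mention, you would even get $n^{-1/(c+1)}$ in place of ${\sqrt n}^{-1/(c+1)}$, removing the artifact the paper remarks on. You can never obtain the factor $\bigl(\frac{\nu^2}{(2+\nu)^2}\bigr)^{1/(c+1)}$.
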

% \begin{proof}
\textbf{Proof}\;
    Let denote $\{{\theta_t\}}_{t \in \{0, \ldots, T-1\}}$ and ${\{\theta'_t\}}_{t \in \{0, \ldots, T-1\}}$ the optimization trajectories resulting from two neighboring datasets $\mc{S}, \mc{S}'$, for $T$ iterations of $\mathrm{SGD}$ with aggregation rule $\mathrm{CWTM}$ and learning rate schedule ${\{ \gamma_t \}}_{t \in \{0, \ldots, T-1\}}$. 
    Importantly, poisoned vectors are treated as legitimate gradients, despite being computed on arbitrary data.
    In the following, we draw a growth recursion for the sensitivity of the parameters to the perturbation introduced. 
    Let $a, b \in \mc{H} \times \{1, \ldots, m\}$ the indices of the differing samples. For $t \in \{0, \ldots, T-1\}$, we denote: 
    \[
        z_t^{(i)} = 
        \begin{cases}
        \text{an arbitrary poisoned data point}, & \text{if } i \notin \mathcal{H} \\
        z^{(i, J_t^{(i)})}, J_t^{(i)} \text{ sampled uniformly from } \{1, \ldots, m\} & \text{if } i \in \mathcal{H}
        \end{cases},
    \]
    $\delta_{t} = \| \theta_{t} - \theta'_{t} \|_2$,
    $g^{(i)}_t = \nabla \ell (\theta_t, z_t^{(i)})$,
    $G^{\mathrm{CWTM}}_{\gamma_t}(\theta_t) = \theta_t - \gamma_t \mathrm{CWTM}(g^{(1)}_t, \ldots, g^{(n)}_t)$, 
    and the prime notation ${(\cdot)}^\prime$ denotes the corresponding quantities for $\mc{S}'$. 

    The proof leverages the expansivity property of the robust aggregation update without directly comparing it to the honest stochastic gradient update.
    Specifically, we either directly apply the expansivity result Lemma~\ref{lemmarobustexpansivity} with probability $(1-\frac{1}{m})$: 
    \[
        \| G^{\mathrm{CWTM}}_{\gamma}(\theta_t) - G^{\prime \mathrm{CWTM}}_{\gamma}(\theta'_t) \|_2 \leq (1+\gamma_t L \min\{\frac{n}{n-2f}, \sqrt{n}, \sqrt{d}\}) \| \theta_t - \theta'_t \|_2
    \]
    or perform the following calculation with probability $\frac{1}{m}$:
    \begin{multline*}
        \| \mathrm{CWTM}(g^{(1)}_t, \ldots, g^{(n)}_t) - \mathrm{CWTM}(g^{\prime(1)}_t, \ldots, g^{\prime(n)}_t) \|_2^2 \\
        \qquad\qquad = \sum_{k=1}^d {\left( TM({\left[ g^{(1)}_t \right]}_k, \ldots, {\left[ g^{(n)}_t \right]}_k) - TM({\left[ g^{\prime(1)}_t \right]}_k, \ldots, {\left[ g^{\prime(n)}_t \right]}_k) \right)}^2 \hfill \\
        \qquad\qquad \leq \frac{n}{{(n-2f)}^2} \sum_{k=1}^d \| ({\left[ g^{(1)}_t \right]}_k, \ldots, {\left[ g^{(n)}_t \right]}_k) -  ({\left[ g^{\prime(1)}_t \right]}_k, \ldots, {\left[ g^{\prime(n)}_t \right]}_k) \|_2^2 \hfill \\
        \qquad\qquad = \frac{n}{{(n-2f)}^2} \sum_{k=1}^d \sum_{i=1}^n {\left( {\left[ g^{(i)}_t \right]}_k - {\left[ g^{\prime(i)}_t \right]}_k \right)}^2 \hfill \\
        \qquad\qquad = \frac{n}{{(n-2f)}^2} \sum_{i=1}^n \| \nabla \ell (\theta_t; z_t^{(i)}) - \nabla \ell (\theta'_t; z_t^{\prime(i)}) \|_2^2 \hfill \\
        \qquad\qquad = \frac{n}{{(n-2f)}^2} \Bigg(  \sum_{i=1, i \neq a}^n \| \nabla \ell (\theta_t; z^{(i)}) - \nabla \ell (\theta'_t; z^{(i)}) \|_2^2
        + \| \nabla \ell (\theta_t; z^{(a, b)}) - \nabla \ell (\theta'_t; z^{\prime(a, b)}) \|_2^2 \Bigg) \hfill \\
        \qquad\qquad \leq \frac{n}{{(n-2f)}^2} \left( 4C^2 + (n-1) L^2 \| \theta_t - \theta'_t \|_2^2 \right) \hfill
    \end{multline*}
    where we used Lemma~\ref{tm-lipschitz} in the first inequality. Using the sub-additivity of the square root we prove that we have with probability $\frac{1}{m}$:
    \[
        \| G^{\mathrm{CWTM}}_{\gamma}(\theta_t) - G^{\prime \mathrm{CWTM}}_{\gamma}(\theta'_t) \|_2 \leq (1+\frac{\gamma_t L \sqrt{n(n-1)}}{n-2f}) \| \theta_t - \theta'_t \|_2 + \frac{2 \gamma_t C \sqrt{n}}{n-2f}.
    \]
    Consequently, with $t_0 \in \{0, \ldots, T-1\}$, we obtain the following recursion:
    \begin{align*}
        \mb{E}_{\mc{A}}[\delta_{t+1}|\delta_{t_0}=0] 
        & \leq (1+\frac{\gamma_t L n}{n-2f}) \mb{E}_{\mc{A}}[\delta_{t}|\delta_{t_0}=0] + \frac{2 \gamma_t C \sqrt{n}}{(n-2f)m} \\
        & \leq (1+\frac{\gamma_t L (2+\nu)}{\nu}) \mb{E}_{\mc{A}}[\delta_{t}|\delta_{t_0}=0] + \sigma_{f, m, n} \frac{c}{Lt}\\
        & = (1+\frac{c}{t}) \mb{E}_{\mc{A}}[\delta_{t}|\delta_{t_0}=0] + \sigma_{f, m, n} \frac{c}{Lt}.
    \end{align*}
    where we used that as $n\geq (2+\nu)f$, then $\frac{n}{n-2f} \leq \frac{2+\nu}{\nu}$.
    and defined $\sigma_{f, m, n} = \frac{2C\nu\sqrt{n}}{(2+\nu)(n-2f)m} \leq \frac{2C\nu^2}{{(2+\nu)}^2\sqrt{n}m}$.
    For $0 \leq t_0 \leq \min(m, T)$, $t \in \{t_0, \ldots, T-1\}$, $\gamma_t \leq \frac{\nu}{2+\nu}\frac{c}{Lt}$, $c>0$, 
    we use $1+x \leq e^x$, unroll the recursion and then invoke Lemma E.2 from~\citet{boudou2025generalization} %Lemma~\ref{lem:sum-product-exponential}:
    \begin{align*}
        \mb{E}_{\mc{A}}[ \delta_{T} |\delta_{t_0}=0] 
        & \leq \frac{c \sigma_{f, m, n}}{L} \sum_{t=t_0}^{T-1} \frac{1}{t}  \prod_{s=t+1}^{T-1} e^{\frac{c}{s}}
        \leq \frac{\sigma_{f, m, n}}{L} {\left( \frac{T}{t_0} \right)}^{c}
    \end{align*}
    Substituting our derivation into the bound from Lemma E.1 from~\citet{boudou2025generalization} yields %Lemma~\ref{lem:lemmagen2} yields:
    \begin{align*}
        \mb{E}_{\mc{A}}[ | \ell(\theta_T;z) - \ell(\theta_T';z) | ] 
        & \leq \min_{0 \leq t_0 \leq \min(m, T)} \frac{t_0\ell_\infty}{m} + \frac{\sigma_{f, m, n}C}{L} {\left( \frac{T}{t_0} \right)}^{c} \\
        & \leq \min_{0 \leq t_0 \leq \min(m, T)} \frac{t_0\ell_\infty}{m} + \frac{2C^2\nu^2}{{(2+\nu)}^2 L \sqrt{n}m} {\left( \frac{T}{t_0} \right)}^{c}.
    \end{align*}
    We approximately minimize the expression with respect to $t_0$ by balancing the two terms, i.e., setting them equal:
    \[
        0 \leq \widetilde{t_0} = {\left( \frac{m\sigma_{f, m, n} C}{L \ell_\infty} \right)}^{\frac{1}{c+1}} T^{\frac{c}{c+1}} 
        \underset{\textit{for sufficiently large T} \geq \frac{m\sigma_{f, m, n} C}{L \ell_\infty}}{\leq} T.
    \]
    Finally, we obtain the result by directly substituting $\widetilde{t_0}$ into the bound:
    \begin{multline*}
        \mb{E}_{\mc{A}}[ | \ell(\theta_T;z) - \ell(\theta_T';z) | ] 
        \leq 2 {\left( \frac{\sigma_{f, m, n} C}{L} \right)}^{\frac{1}{c+1}} {\left( \frac{\ell_\infty T}{m} \right)}^{\frac{c}{c+1}} \\
        = 2 {\left( \frac{2 C^2 \nu^2}{{(2+\nu)}^2 L \sqrt{n}m} \right)}^{\frac{1}{c+1}} {\left( \frac{\ell_\infty T}{m} \right)}^{\frac{c}{c+1}} 
        = 2 {\left( \frac{2 C^2 \nu^2}{{(2+\nu)}^2 L} \right)}^{\frac{1}{c+1}} {\left( \ell_\infty T \right)}^{\frac{c}{c+1}}  \frac{1}{m {\sqrt{n}}^{\frac{1}{c+1}} }.
        \tag*{\BlackBox}
    \end{multline*}
% \end{proof}
For $f=0$,~\eqref{ub-noncvx-pois3} yields a suboptimal $\sqrt{n}$ dependence instead of $n$~\citep{pmlr-v235-le-bars24a}—a proof artifact, as this gap does not appear in~\citet[Theorem E.3]{boudou2025generalization}, which also applies to our data poisoning setting.  
We next compare our bound with the one obtained using an aggregation rule that does not preserve the smoothness inequality, such as the $\mathrm{SMEA}$ rule introduced in~\citet{pmlr-v202-allouah23a}. 
For $f > 0$, let $\varepsilon^{\text{poisoning}}_{\mathrm{SMEA}}$ denote the upper bound from~\citet[Theorem E.3]{boudou2025generalization}, and $\varepsilon^{\text{poisoning}}_{\mathrm{CWTM}}$ the bound with $\mathrm{CWTM}$ from~\eqref{ub-noncvx-pois3}. 
We compare:
$
    \varepsilon^{\text{poisoning}}_{\mathrm{SMEA}} / \varepsilon^{\text{poisoning}}_{\mathrm{CWTM}}
    = {\big( \frac{{(2+\nu)}^2}{\nu^2} \big( \frac{\sqrt{n}}{n-f} + f m \frac{\sqrt{n}}{n-f} \big) \big)}^{\frac{1}{c+1}}.
$
Interestingly, by leveraging the smoothness-preserving property of $\mathrm{CWTM}$, we obtain an improved dependence on $m$: as $m$ grows unbounded, $\varepsilon^{\text{poisoning}}_{\mathrm{CWTM}}$ decreases faster than $\varepsilon^{\text{poisoning}}_{\mathrm{SMEA}}$.
%As $f/n$ approaches $1/2$, this favorable dependence persists (bound omitted), though comparisons involving other variables are less direct.

%\clearpage\input{./appendices/appNEXT.tex}
%\clearpage\input{./appendices/appTMP.tex}

\vskip 0.2in
\bibliography{example_paper}

\end{document}